\documentclass[11pt]{article}
\usepackage{fullpage}

\usepackage[utf8]{inputenc} 
\usepackage[T1]{fontenc}    
\usepackage{url}            
\usepackage{booktabs}       
\usepackage{amsfonts}       
\usepackage{nicefrac}       
\usepackage{multirow}
\usepackage{makecell}
\usepackage{hhline}
\usepackage{natbib}
\usepackage{mathtools}

\usepackage{tocloft}
\usepackage{braket}
\usepackage[a4paper,margin=1in]{geometry}
\usepackage[toc,page,header]{appendix}
\usepackage{minitoc}
\usepackage{booktabs}
\usepackage{multirow}

\usepackage[colorlinks,citecolor=blue,urlcolor=blue,linkcolor=blue,linktocpage=true,pagebackref=true]{hyperref}
\renewcommand*{\backref}[1]{}
\renewcommand*{\backrefalt}[4]{(\small%
    \ifcase #1 not cited%
          \or cited on page~#2%
          \else cited on pages #2%
    \fi%
    )}
\usepackage{wustyle}

\usepackage[most]{tcolorbox}

\newtcolorbox{highlightedtheorem}{%
  enhanced,
  breakable,
  colback=black!2,
  colframe=black!72,
  boxrule=0.65pt,
  arc=2pt,
  outer arc=2pt,
  left=3pt,
  right=3pt,
  top=3pt,
  bottom=3pt,
  before skip=10pt,
  after skip=10pt
}

\usepackage{wrapfig}

\usepackage{framed}
\colorlet{shadecolor}{orange!15}

\definecolor{c4}{RGB}{255,225,187}
\definecolor{c2}{RGB}{209, 233, 184}
\definecolor{c3}{RGB}{218,243,246}
\definecolor{c1}{RGB}{249, 229, 229}
\definecolor{c5}{RGB}{255, 128, 128}
\definecolor{c6}{RGB}{251, 132, 002}

\title{Optimal Learning Rate Schedules under Functional Scaling Laws: Power Decay and Warmup--Stable--Decay}

\author{
    Binghui Li$^{1,}$\thanks{Equal contribution. Emails: \texttt{libinghui@pku.edu.cn},\quad  \texttt{wangzilin@stu.pku.edu.cn}}\quad
    Zilin Wang$^{2,}$\footnotemark[1]\quad
    Fengling Chen$^{2}$\quad
    Shiyang Zhao$^{2}$ \\[-.1em]
    Ruiheng Zheng$^{2}$ \quad
    Lei Wu$^{1,2,3,}$\thanks{Corresponding author. Email: \texttt{leiwu@math.pku.edu.cn}}
    \\[.5em]
  $^1$Center for Machine Learning Research, Peking University
    \\[-.1em]
  $^2$School of Mathematical Sciences, Peking University 
    \\[-.1em]
  $^3$AI for Science Institute, Beijing
}
\date{(Accepted at COLT 2026)}

\begin{document}

\maketitle

\doparttoc
\faketableofcontents

\vspace{-1em}
\begin{abstract}
We study optimal learning rate (LR) schedules under the functional scaling law (FSL) framework~\citep{li2025functional}, which decomposes training dynamics into signal learning and noise forgetting. In power-law kernel regression, these two components are governed by a source exponent $s>0$ and a capacity exponent $q>1$, respectively, with smaller $s$ corresponding to harder tasks. For a fixed training horizon $N$, we characterize the schedules that minimize the final-step loss under a stability constraint and reveal a sharp phase transition. In the easy-task regime $s>1-1/q$, the optimal schedule follows power decay from the beginning of training; in the hard-task regime $s<1-1/q$, it becomes warmup--stable--decay (WSD)-like~\citep{hu2024minicpm}, staying at the largest admissible LR for most of training before a final decay. In both regimes, the decay exponent is $2q-1$: \textit{task difficulty determines when to decay, while model capacity determines how to decay}. Beyond the exact optimum, we study fractional schedules, whose shape is defined over relative training progress. We show that precise tuning of the decay shape is often unnecessary: a broad class of profiles attains the optimal convergence rate, while overly slow terminal decay leads to schedule-induced capacity saturation. Finally, for one-pass SGD in kernel regression, FSL-motivated power-decay schedules achieve optimal last-iterate rates. Experiments support the theoretical predictions and the task-dependent transition between early and delayed decay.
\end{abstract}

\section{Introduction}

Learning rate (LR) schedules are central to both the theory and practice of modern machine learning. In stochastic training, a large LR can accelerate training, but it also amplifies the injection of gradient noise. Conversely, a small LR suppresses such noise but may  slow down training progress. Thus, designing an effective LR schedule requires balancing these two competing effects over the course of training.

The study of LR schedules dates back to the seminal work of \citet{robbins1951stochastic}, which established classical convergence conditions for decaying step sizes in stochastic approximation. Subsequent theory has largely focused on polynomial-decay schedules $\eta_k\propto k^{-a}$ with $a\in(1/2,1)$ in convex and non-convex stochastic optimization~\citep{lacoste2012simpler,bubeck2014theory,shamir2013stochastic}. In linear regression, polynomial decay with iterate averaging~\citep{ruppert1988efficient} achieves minimax-optimal rates~\citep{bach2013non,dieuleveut2015non,mucke2019beating}, while more recent work has shown that exponential-decay schedules can attain nearly optimal last-iterate rates without averaging~\citep{ge2019step,wu2022last}.

\begin{wrapfigure}{r}{0.35\textwidth}
\centering
{\footnotesize Common LR schedules}\\
\includegraphics[width=\linewidth]{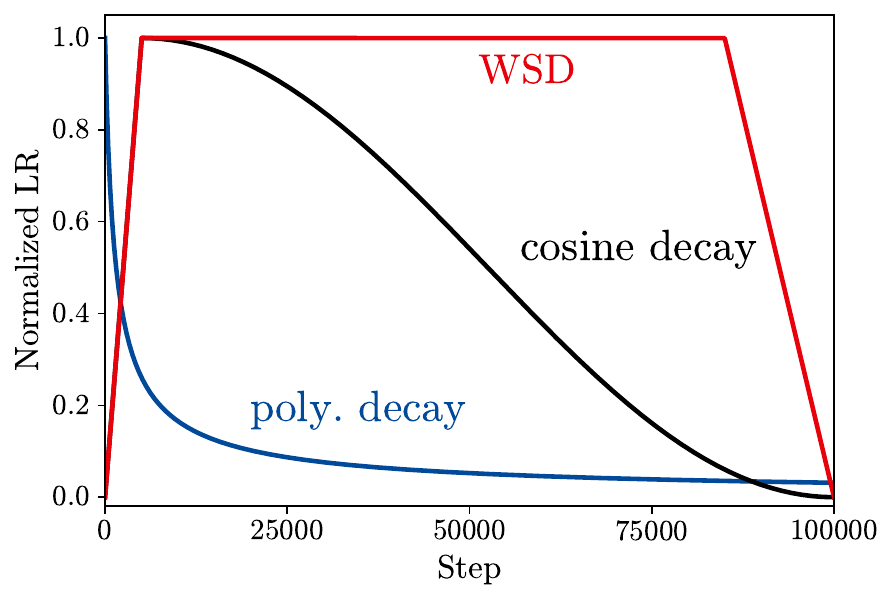}
\vspace{-2em}
\end{wrapfigure}
Despite the rich theoretical literature, our understanding of LR scheduling in practical training remains rather incomplete. Modern large-scale training often uses schedules quite different from those traditionally studied in theory, 
most notably \textit{cosine decay}~\citep{loshchilov2016sgdr,hoffmann2022training,touvron2023llama} and \textit{warmup--stable--decay} (WSD)~\citep{zhai2022scaling,hu2024minicpm,liu2024deepseek,team2025kimi}. 
In particular, WSD schedules keep the learning rate constant for most of training, for up to 80\% of the training horizon, and decay only in a short final annealing phase. The above figure illustrates these commonly used schedules. Despite this delayed decay, empirical studies have reported that WSD can match, and in some cases outperform, cosine decay in large-scale training~\citep{hu2024minicpm,hagele2024scaling,schaipp2025surprising}. This raises a basic question: \emph{what determines whether LR decay should begin early or be delayed until late in training?}

Bridging this theory--practice gap requires moving beyond the prevailing ``propose-and-analyze'' paradigm, which treats the schedule as a fixed input. A more systematic approach is to characterize an explicit map from the LR schedule to the resulting loss dynamics,
\[
\text{LR schedule }\eta \longmapsto \text{loss dynamics }L=\mathcal M[\eta].
\]
Such a map enables inverse design: optimizing the schedule under a given training budget.

Recent work by \citet{li2025functional} provides such a map through the \textbf{functional scaling law (FSL)}. Let $t_k=\sum_{i=0}^{k-1}\eta_i$ denote the \emph{intrinsic time} and let $\cE_k$ denote the expected risk at step $k$. FSL describes the risk dynamics, up to constant factors, as
\begin{equation}\label{eqn:intro-fsl}
\cE_k\eqsim \cS(t_k)+\sum_{j=0}^{k-1}\cK(t_k-t_j)\eta_j^2,
\end{equation}
where $\cS$ describes signal learning and $\cK$ is the forgetting kernel. In power-law kernel regression, $\cS(t)=(c_S+t)^{-s}$ and $\cK(t)=(c_K+t)^{-(2-1/q)}$ for constants $c_S,c_K\eqsim1$. Here the source exponent $s>0$ characterizes task difficulty and the capacity exponent $q>1$ governs noise forgetting. Section~\ref{sec:learning_setup_and_functional_description} gives the formal setup and detailed interpretations of the two terms. The exponents admit a first-principles spectral interpretation in kernel regression, while the same FSL also  describes how LR schedules affect the loss dynamics in LLM pretraining~\citep{li2025functional}.

\begin{figure}[!t]
\centering 
\includegraphics[width=0.32\textwidth]{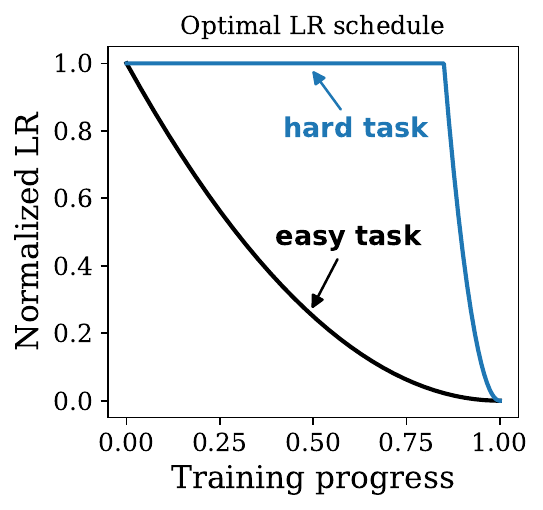}
\hspace*{-.3em}
\includegraphics[width=0.32\textwidth]{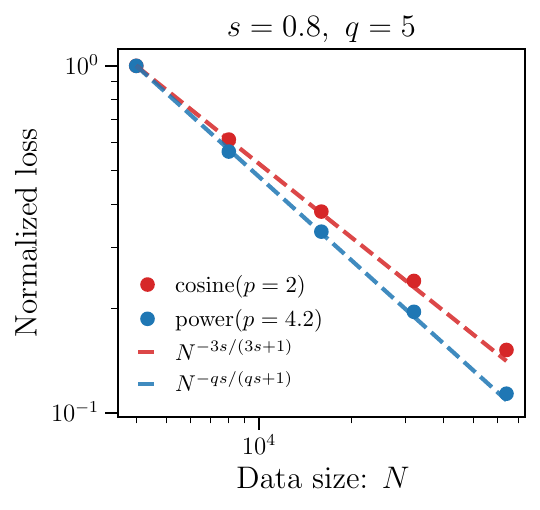}
\hspace*{.4em}
\raisebox{.7em}{\includegraphics[width=0.31\textwidth]{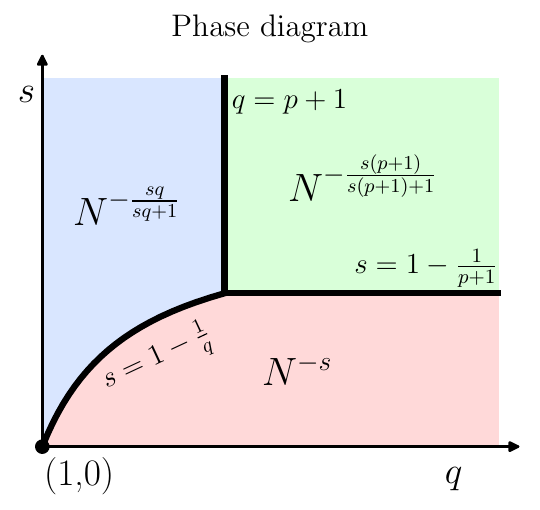}}
\caption{
\textbf{\small Summary of the main predictions in PLK regression.}
Here, $s$ controls task difficulty, $q$ model capacity, and $p$ the terminal decay of a fractional schedule.
\textbf{(left)} Easy tasks favor decay from the beginning, whereas hard tasks favor a WSD-like schedule with a long stable phase.
\textbf{(middle)} Schedule-induced capacity saturation: for $(s,q)=(0.8,5)$, sufficiently fast power decay attains the optimal scaling rate, while cosine decay follows the predicted saturated rate. Peak LRs are tuned separately for each $N$, and losses are averaged over $500$ SGD runs.
\textbf{(right)} Phase diagram for fractional schedules (Theorem~\ref{thm: optimal-fractional-lrs}). The boundary $q=p+1$ separates the capacity-limited and schedule-limited regimes. Blue denotes the minimax-optimal regime, red the one-pass-SGD-optimal regime, and green the rate-suboptimal regime.}
\label{fig: intro}
\end{figure}

\paragraph{Our contributions.}
We use FSL to solve the LR-schedule design problem under a fixed training horizon and stability constraint, defining the optimal schedule as the one that minimizes the final-step excess risk. Our main results are:

\begin{itemize}

\item \textbf{Optimal LR schedules exhibit a phase transition.}
Let $s_c=1-1/q$. In the easy-task regime $s>s_c$, the optimal schedule decays from the beginning of training and, for some $\delta_N=o_N(1)$, has the form
\[
\eta^*(z)=\eta_{\peak}\left(1+\delta_N-\frac{z}{N}\right)^{2q-1},
\qquad
\eta_{\peak}\eqsim N^{-\frac{s-1+1/q}{s+1/q}}.
\]
In the hard-task regime $s<s_c$, the optimal peak LR saturates at the stability limit and the schedule becomes WSD-like, staying at the largest admissible LR for most of training before following the same power-law decay near the end. Thus, task difficulty determines \emph{when} decay should begin, while model capacity determines \emph{how} the LR should decay. 

\item \textbf{Optimal rates do not require the exact optimal shape.}
We next consider fractional schedules $\eta_N(z)=\bar{\eta}\zeta(z/N)$, which separate the LR scale from its shape over relative training progress. We show that any profile with terminal decay $\zeta(x)\asymp(1-x)^p$ and $p>q-1$ attains the optimal convergence rate after optimally tuning $\bar{\eta}$. When $p<q-1$, terminal decay becomes the bottleneck and the rate no longer improves with increasing $q$, a phenomenon we call \emph{schedule-induced capacity saturation}. We further show that normalization by the training horizon is essential: any family $\eta_N(z)=\eta_0 h(z)$ with a fixed non-increasing profile $h$ in absolute training steps is rate-suboptimal, even when $\eta_0$ is retuned for each $N$. Classical polynomial decay is a canonical example.

\item \textbf{Optimal last-iterate rates for kernel regression.}
For one-pass SGD with FSL-motivated power-decay schedules, the last iterate attains the minimax-optimal rate without logarithmic factors in the easy-task regime and the optimal rate among one-pass SGD methods in the hard-task regime. To the best of our knowledge, this closes the logarithmic gap between last-iterate and averaged SGD in the easy-task regime for the first time~\citep{wu2022last,lin2024scaling,zhang2024optimality}.
\end{itemize}

Figure~\ref{fig: intro} summarizes the main predictions of our theory. Additional PLK regression experiments covering different schedule families and the three regimes in Figure~\ref{fig: intro}(right) are provided in Appendix~\ref{app:additional-experiments}. Experiments on neural-network training beyond the kernel setting show the same qualitative trend: harder tasks favor later LR decay.

\section{Related work}

\paragraph*{Neural scaling laws.}
\citet{hestness2017deep} first observed predictable relationships between model performance and scale, including model  and dataset size; these relationships were later formalized as neural \emph{scaling laws}~\citep{kaplan2020scaling}.
These laws have since guided large-scale training and been refined across architectures and training regimes~\citep{henighan2020scaling, hoffmann2022training, kadra2023power, aghajanyan2023scaling, muennighoff2023scaling, kumar2024scaling, tissue2024scaling, luo2024a}, with parallel theoretical efforts explaining their  origins and mechanisms~\citep{bordelon2024dynamical,lin2024scaling, bahri2024explaining, paquette2024fourplus,lin2025improved, yan2025larger,li2026muon}.
Our work fits into this line of research by providing a scaling-law analysis of LR schedule design. Specifically, we leverage the FSL introduced by~\citet{li2025functional} to characterize the structure of optimal LR schedules.  Closely related, \citet{wang2026fast} use FSL to study the design of batch size scheduling.

\paragraph{Optimal LR schedules for linear regression.} Previous work has studied LR-schedule design for linear regression through optimal control, but directly optimizing the full SGD dynamics is difficult and typically requires low-dimensional or simplified settings~\citep{li2017stochastic,fahrbach2023learning}. A notable exception is \citet{pan2021eigencurve}, who derive a Hessian-spectrum-dependent schedule for finite-dimensional stochastic quadratics and prove its rate optimality up to constant factors. Their analysis, however, verifies a prescribed schedule rather than optimizing over the full class of admissible schedules. FSL reduces the high-dimensional dynamics under power-law structure to an explicit functional of the LR schedule, making this inverse-design problem tractable and allowing us to characterize the optimal schedule itself.

\paragraph{Understanding cosine and WSD schedules.}
For cosine schedules, \citet{li2021second} attribute their empirical success to a \emph{duration-aware} property: the cumulative learning rate scales linearly with the total training horizon while the learning rate itself decays to a horizon-independent minimum. Standard polynomial decay schedules fail to satisfy this dual requirement. In this work, we generalize this property through a class of \emph{fractional} LR schedules and identify the regimes in which it attains optimal rates and those in which it does not.

For WSD schedules, \citet{wen2024understanding} provide a river-valley landscape interpretation of their dynamical behavior; \citet{schaipp2025surprising} and \citet{li2025functional} offer theoretical evidence for the benefits of delayed decay  but do not characterize the role of the \emph{decay shape} itself. Meanwhile, existing empirical studies reach seemingly conflicting conclusions: \citet{defazio2023optimal} and \citet{bergsma2025straight} advocate linear decay, whereas \citet{hagele2024scaling} report superior performance for concave decay profiles such as $1-\sqrt{x}$. In this work, we show that the optimal decay shape is determined by the profile of the forgetting kernel, which encodes how stochastic gradient noise is forgotten over training.
 In addition, \citet{luo2024a} numerically solve a variational problem based on a multipower-law model for LLM pretraining and find that the resulting schedule takes a WSD form with a decay shape approximately $(1-x)^{1.5}$. Our analysis provides theoretical support for this empirical observation.

\paragraph{One-pass SGD for kernel regression.}
The convergence of one-pass SGD for kernel regression, often formulated as high-dimensional feature-space linear regression, has received considerable attention. In particular, \citet{dieuleveut2015non} and \citet{mucke2019beating} showed that averaged SGD achieves the minimax-optimal rate $N^{-\frac{sq}{sq+1}}$ in the easy-task regime and the rate $N^{-s}$ in the hard-task regime. Subsequent work replaced iterate averaging by the more practical last iterate through suitable LR schedules, but typically incurred logarithmic factors~\citep{wu2022last,lin2024scaling,li2025functional}. We show that power decay, derived from our FSL-based schedule design, removes these logarithmic factors for the last iterate.

\vspace*{-.4em}
\section{Learning setup and functional description}
\label{sec:learning_setup_and_functional_description}

\indent\textbf{Notation.} We write $\eqsim$ for equivalence up to multiplicative constants, and $\lesssim$ and $\gtrsim$ for inequalities up to multiplicative constants. For two nonnegative functions $f,g:\mathbb{R}_{\ge0}\to\mathbb{R}_{\ge0}$, we write $f(t)\eqsim g(t)$ if there exist constants $C_1,C_2>0$, independent of $t$, such that $C_1f(t)\le g(t)\le C_2f(t)$ for all $t\ge0$. We write $a_N=o_N(1)$ if $\lim_{N\to\infty}a_N=0$.

\subsection{Power-law kernel regression}
\label{subsec: fslr}

Consider supervised learning with inputs $\bx\in\cX$ and labels $y\in\RR$. Let $\cD$ be a distribution over $\cX\times\RR$ and denote by $\cD_{\cX}$ its marginal on $\cX$. Samples are generated by $y=f^*(\bx)+\epsilon$, where $\epsilon\sim\cN(0,\sigma^2)$ is independent of $\bx$. 
Let $\bphi:\cX\to\ell^2$ be a feature map and define the associated kernel
$
k_{\phi}(\bx,\bx'):=\langle\bphi(\bx),\bphi(\bx')\rangle.
$
Equivalently, we work with kernel regression in its feature-space representation. The corresponding feature covariance operator is
\[
\bH=\EE_{\bx\sim\cD_{\cX}}\!\left[\bphi(\bx)\bphi(\bx)^\top\right].
\]
Let  $(\lambda_j)_{j\ge1}$ denote the eigenvalues of $\bH$  ordered non-increasingly. We assume that the target function admits the expansion $f^*(\bx)=\sum_{j=1}^\infty\theta_j^*\phi_j(\bx)$ in $L^2(\cD_{\cX})$, for some coefficient sequence $(\theta_j^*)_{j\ge1}$ satisfying $\sum_{j=1}^\infty\lambda_j|\theta_j^*|^2<\infty$.

Let $\{(\bx_k,y_k)\}_{k=0}^{N-1}$ be $N$ samples drawn independently from $\cD$. We train the linear model $f(\bx;\btheta)=\langle\btheta,\bphi(\bx)\rangle$ by \textbf{one-pass SGD}:
\begin{equation}
\label{equ: sgd_update}
\btheta_{k+1}=\btheta_k-\eta_k\nabla_{\btheta}\Big(\tfrac12(f(\bx_k;\btheta_k)-y_k)^2\Big),\qquad \btheta_0=\mathbf0,
\end{equation}
where $(\eta_0,\eta_1,\dots,\eta_{N-1})$ denotes the LR schedule. We measure performance by the excess risk
\[
\cE(\btheta)\coloneqq\EE_{\bx\sim\cD_{\cX}}\!\left[\tfrac12(f(\bx;\btheta)-f^*(\bx))^2\right].
\]

\begin{assumption}[Constant label noise]
$\sigma\eqsim 1$.
\end{assumption}

\begin{assumption}[Power-law structure]
\label{ass:power-law}
For some $q>1$ and $s>0$,
$
\lambda_j\eqsim j^{-q},  \lambda_j|\theta_j^*|^2\eqsim j^{-(1+sq)}.
$
\end{assumption}

\begin{assumption}[Hypercontractive features]
\label{ass: hypercontra}
Uniformly over $\bu,\bv\in\ell^2$,
\[
\EE\left[\langle \bu,\bphi(\bx)\rangle^2\langle \bv,\bphi(\bx)\rangle^2\right]\eqsim \EE\left[\langle \bu,\bphi(\bx)\rangle^2\right]\EE\left[\langle \bv,\bphi(\bx)\rangle^2\right],
\]
where the expectations are over $\bx\sim\cD_{\cX}$.
\end{assumption}

Assumption~\ref{ass:power-law} specifies the power-law source and capacity structure of the \emph{power-law kernel (PLK) regression}. The \emph{capacity exponent} $q$ controls the decay of the feature spectrum, with smaller $q$ corresponding to higher model capacity, while the \emph{source exponent} $s$ controls the spectral distribution of the target, with smaller $s$ corresponding to harder tasks. Such power-law assumptions are common in scaling-law analyses~\citep{bordelon2020spectrum,maloney2022solvable,lin2024scaling,li2025functional}. Section~\ref{sec:kernel-sgd} later replaces these  two-sided power laws by the standard source and capacity conditions used in kernel regression.

Assumption~\ref{ass: hypercontra} controls mixed fourth moments of feature projections by its second moments. It is satisfied, for example, by centered Gaussian features.

\vspace{-.5em}
\subsection{Functional scaling laws}
We use the functional scaling law (FSL) of \citet{li2025functional} as a macroscopic description of the training dynamics for the PLK regression problem above. Let
$
t_k\coloneqq\sum_{i=0}^{k-1}\eta_i
$
denote the intrinsic time accumulated up to step $k$. By \citet[Theorem~4.4]{li2025functional}, the  expected excess-risk dynamics of SGD~\eqref{equ: sgd_update} admit the decomposition
\begin{equation}\label{eqn:discrete_fsl}
\EE[\cE(\btheta_k)]\eqsim \cS(t_k)+\sum_{j=0}^{k-1}\cK(t_k-t_j)\eta_j^2.
\end{equation}
Here, $\cS$ is the \emph{signal-learning function}, which describes the noiseless learning dynamics, while $\cK$ is the \emph{forgetting kernel}, which describes how the effect of injected noise decays with subsequent intrinsic time.  Under Assumption~\ref{ass:power-law},
\begin{equation}
\cS(t)=(c_S+t)^{-s},\qquad \cK(t)=(c_K+t)^{-(2-1/q)}
\end{equation}
for constants $c_S,c_K\eqsim1$.

The FSL~\eqref{eqn:discrete_fsl} decomposes the loss dynamics into two terms:
\begin{itemize}
\item \textbf{Signal learning.} The signal-learning term $\cS(t_k)$ describes the reduction of error associated with learning the target $f^*$. Its decay is governed by the source exponent $s$: harder tasks have smaller $s$ and therefore slower signal learning.

\item \textbf{Noise injection and forgetting.} The second term tracks the effect of stochastic noise. Step $j$ injects noise at scale $\eta_j^2$, while the forgetting kernel $\cK(t_k-t_j)$ measures how much remains after the subsequent intrinsic time $t_k-t_j$. The forgetting rate is governed by the capacity exponent $q$: higher-capacity models have smaller $q$ and therefore forget noise more slowly.
\end{itemize}
For the analysis below, it is convenient to work on continuous training-step time. Let $\eta(z)$ denote the LR at step $z$ and define
\begin{equation}
t(z)\coloneqq\int_0^z\eta(u)\dd u,\qquad \varphi(\tau)\coloneqq\eta(t^{-1}(\tau)),
\end{equation}
where $t^{-1}$ denotes a generalized inverse when $t$ is not strictly increasing. Let $\bar{\cE}_t$ denote the expected loss at intrinsic time $t$. The continuous-time FSL then takes the form
\begin{equation}\label{eqn: plk_fsl}
\bar{\cE}_t\eqsim \cS(t)+\int_0^t \cK(t-\tau)\varphi(\tau)\dd\tau.
\end{equation}

\paragraph{Implications for schedule design.}
For a fixed training horizon $N$, let
$
T_N\coloneqq t(N)=\int_0^N\eta(z)\dd z
$
denote the total intrinsic time. Changing variables in \eqref{eqn: plk_fsl} gives the final-step FSL objective
\begin{equation}\label{eqn:fsl-training-steps}
\cF[\eta]\coloneqq \cS(T_N)+\int_0^N\cK\!\left(\int_z^N\eta(u)\dd u\right)\eta(z)^2\dd z.
\end{equation}
This expression makes the role of the LR schedule explicit. The total area $T_N$ controls signal learning; the LR $\eta(z)$ controls the noise injected at step $z$; and the remaining area $\int_z^N\eta(u)\dd u$ controls how much of that noise can be forgotten before training ends. Reducing the LR over an interval therefore suppresses fresh noise, but also reduces the intrinsic time accumulated during that interval for signal learning and for forgetting earlier noise.

For a fixed $T_N$, the signal term is constant, so schedule design reduces to allocating intrinsic time over training to minimize the noise term. This naturally separates the design problem into two questions: how much total intrinsic time to accumulate, and how to distribute it over training. These two choices are still coupled in the full optimization, since both the LR scale and its temporal profile affect noise injection and forgetting. 

\section{Optimal learning rate schedules}
\label{sec:opt-lrs}

We now solve the inverse LR-schedule design problem for a fixed training horizon $N$. Specifically, we minimize the final-step FSL objective~\eqref{eqn:fsl-training-steps} over LR schedules subject to the stability constraint:
\begin{equation}
\label{equ: functional_physical}
\min_{\eta}\;\cF[\eta]
\qquad \text{s.t.}\;\,
0\leq \eta(z)\leq \eta_{\stability}
\quad \text{for a.e. } z\in[0,N].
\end{equation}
Here, $\eta_{\stability}\eqsim 1$ is a fixed stability threshold, modeling the largest LR for stable training~\citep{wu2018sgd,wu2022alignment,wu2023implicit}. As we show below, whether this constraint becomes active leads to a sharp transition between immediate decay and delayed decay.

\begin{highlightedtheorem}
\begin{theorem}[Optimal LR schedules]
\label{thm: optimal_lrs}
Let $\eta^*$ be a minimizer of~\eqref{equ: functional_physical} and $\cE_N^*\coloneqq\cF[\eta^*]$. Then:
\begin{itemize}
\item \textbf{Easy-task regime} ($s>1-\frac{1}{q}$). There exists $\delta_N=o_N(1)$ such that the optimal schedule has the power-decay form
\[
\eta^*(z)=\eta_{\peak}\left(1+\delta_N-\frac{z}{N}\right)^{2q-1},
\qquad
\eta_{\peak}\eqsim N^{-\frac{s-1+1/q}{s+1/q}},
\]
and the optimal final-step loss satisfies
\begin{equation}\label{eqn: minimax-optimal-easy}
\cE_N^*\eqsim N^{-\frac{sq}{sq+1}}.
\end{equation}

\item \textbf{Hard-task regime} ($s<1-\frac{1}{q}$). The optimal schedule has a WSD-like form:
\[
\eta^*(z)=
\begin{cases}
\eta_{\stability}, & 0\le z\le N_1,\\[4pt]
\eta_{\stability}\left(1+\delta_N-\frac{z-N_1}{N-N_1}\right)^{2q-1}, & N_1<z\le N,
\end{cases}
\]
where $\delta_N=o_N(1)$ and the decay ratio satisfies
\begin{equation}\label{equ: decay_ratio}
r_N^*\coloneqq\frac{N-N_1}{N}\eqsim N^{-\frac{(1-1/q)-s}{2-1/q}}=o_N(1).
\end{equation}
Moreover,
\[
\cE_N^*\eqsim N^{-s}.
\]
\end{itemize}
\end{theorem}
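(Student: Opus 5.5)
The plan is to split the optimization~\eqref{equ: functional_physical} into an inner problem at fixed total intrinsic time $T=T_N$ and an outer one-dimensional problem over $T$. Change variables to intrinsic time: with $\tau=t(z)$ and $\varphi(\tau)=\eta(t^{-1}(\tau))$, we have $\dd z=\dd\tau/\varphi(\tau)$. The constraints become $\int_0^T \varphi(\tau)^{-1}\dd\tau = N$ and $0<\varphi\le\eta_{\stability}$, and the noise term becomes $\int_0^T \cK(T-\tau)\varphi(\tau)\dd\tau$. Thus the inner problem is to minimize a linear functional $\int_0^T\cK(T-\tau)\varphi\,\dd\tau$ subject to a convex constraint $\int_0^T\varphi^{-1}=N$ and a box constraint. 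This problem is convex in $\varphi$, so KKT conditions characterize the minimizer. Pointwise stationarity gives $\cK(T-\tau)=\mu\varphi(\tau)^{-2}$ wherever the box constraint is inactive, so
\[
\varphi(\tau)=\min\{\eta_{\stability},\sqrt{\mu/\cK(T-\tau)}\}=\min\{\eta_{\stability},\mu^{1/2}(c_K+T-\tau)^{1-\frac{1}{2q}}\}.
\]
The LR is therefore clipped at $\eta_{\stability}$ early on, when $T-\tau$ is large, and decays as a power of the remaining intrinsic time near the end.

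Next we convert back to steps. On the unclipped part, $\dd z=\dd\tau/\varphi$ and $\varphi=a(c_K+T-\tau)^{\beta}$ with $\beta=1-\frac1{2q}$. Integrating gives $N-z\propto (c_K+T-\tau)^{1-\beta}=(c_K+T-\tau)^{1/(2q)}$, so $\varphi\propto (N-z+\text{const})^{2q\beta}=(\cdot)^{2q-1}$. This yields $\eta^*(z)=\eta_{\peak}(1+\delta_N-z/N)^{2q-1}$ on the decay phase, with $\delta_N$ coming from the constant $c_K$. Whether clipping occurs is decided by comparing $\mu^{1/2}T^{\beta}$ with $\eta_{\stability}$. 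On the clipped part the LR is constant, which gives the WSD form with a decay window of length $N-N_1$ that is determined by matching the two pieces.

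For the outer problem, compute the inner value $V(T)$. In the unclipped case, the constraint gives $\int\varphi^{-1}\eqsim \mu^{-1/2}T^{1/(2q)}=N$ and a noise value $\eqsim\mu^{1/2}T^{1/(2q)}\eqsim T^{1/q}/N$. Balancing $T^{-s}$ against $T^{1/q}/N$ gives $T\eqsim N^{q/(sq+1)}$ and loss $N^{-sq/(sq+1)}$. The peak is $\eta_{\peak}\eqsim T/N\eqsim N^{q/(sq+1)-1}=N^{-(s-1+1/q)/(s+1/q)}$, which is $\lesssim1$ exactly when $s>1-1/q$, so the stability constraint is asymptotically inactive in the easy regime. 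When $s<1-1/q$, the unconstrained optimal $T$ would exceed $\eta_{\stability}N$. The optimizer then pushes $T$ up to $\approx\eta_{\stability}N(1-O(r))$ and the loss is $\eqsim N^{-s}$. The decay window $r=(N-N_1)/N$ trades off two effects: noise injected during the stable phase is forgotten only through the decay window, contributing $\eqsim (rN)^{-(1-1/q)}$, while the lost area costs only a lower-order change in $T^{-s}$. Optimizing $r$ so that the noise term matches the signal term, $(rN)^{-(1-1/q)}\eqsim N^{-s}$, gives $rN\eqsim N^{s/(1-1/q)}$. The resulting formula should then be checked against~\eqref{equ: decay_ratio} by carefully balancing the marginal signal gain against the decay-phase noise; I expect it to reproduce the exponent $((1-1/q)-s)/(2-1/q)$ once the stable-phase noise $\eta_{\stability}^2\int_{rN}\cK$ is included. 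Matching lower bounds follow because $\varphi^*$ is the exact KKT minimizer and $V(T)$ is evaluated up to constants.

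The main obstacle is rigor in the hard regime, where the clipped KKT solution has to be glued to the decay piece. In particular, showing $\delta_N=o_N(1)$ and pinning down the exact exponent of $r_N^*$ requires tracking the constants $c_K$ and $c_S$ and the first-order expansion of $(T_N)^{-s}$ in $r$. A secondary issue is justifying existence of the minimizer and the change of variables when $\eta$ may vanish on sets, which is handled by the generalized inverse and a relaxation to measures.
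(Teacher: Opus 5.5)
Your fixed-$T$ inner problem gives the shape by a genuinely different and cleaner route than the paper's. Minimizing $\cK(T-\tau)v+\mu/v$ pointwise over $v\in(0,\eta_{\stability}]$ yields the clipped profile $\min\{\eta_{\stability},\sqrt{\mu/\cK(T-\tau)}\}$ directly, including continuity at the junction. The paper instead uses a bubble-sort monotonicity argument, a stable--decay ansatz with a free amplitude $a$, and a separate derivative computation to show $a=1$. The easy regime and the hard-regime loss $\cE_N^*\eqsim N^{-s}$ then go through as you describe.

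The gap is the decay ratio~\eqref{equ: decay_ratio}. Balancing the noise level $(rN)^{-(1-1/q)}$ against the full signal level $N^{-s}$ is the wrong optimality condition. It gives $r\eqsim N^{-\frac{(1-1/q)-s}{1-1/q}}$, which contradicts the statement. Adding the stable-phase noise does not repair this: that contribution is $\lesssim (rN)^{-(1-1/q)}$, the same order as the decay-phase noise, so it only changes constants (the paper shows it is absorbed).

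What you are missing is that the baseline $(\eta_{\stability}N)^{-s}$ is paid for every $r$. Indeed $\cE\eqsim N^{-s}$ for all $r$ from your value up to any constant $r<1$, so no estimate of the objective up to constants can locate $r_N^*$. You must instead balance the $r$-dependent signal penalty against the noise. The penalty is $(\eta_{\stability}N(1-c'r))^{-s}-(\eta_{\stability}N)^{-s}\eqsim rN^{-s}$ with $c'=1-\frac{1}{2q}$, and it should be balanced against $(rN)^{-(1-1/q)}$. Equivalently, equate marginal costs, $N^{-s}\eqsim N^{-(1-1/q)}r^{-(2-1/q)}$, which yields $r\eqsim N^{-\frac{(1-1/q)-s}{2-1/q}}$.

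To make this rigorous you need one of two things. Either prove two-sided bounds on the excess $\cE(r)-(\eta_{\stability}N)^{-s}$ uniformly in $r$. Or, as the paper does, use the exact stationarity condition $s(1+T)^{-s-1}-(1+T)^{-(2-1/q)}=(1+\Delta)^{-(2-1/q)}$, with $\Delta$ the decay-phase intrinsic time, together with $T\eqsim N$ and $\Delta\to\infty$. The fact $\Delta\to\infty$ is also what gives $\delta_N=o_N(1)$.
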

\end{highlightedtheorem}

See Appendix~\ref{appendix: proof_opt_lrs} for the full proof and Section~\ref{sub:proof_sketch_of_theorem_ref_thm_optimal_lrs} for a proof sketch. At the boundary $s=1-1/q$, the unconstrained optimal peak LR remains of constant order, so whether the stability constraint is active depends on problem-dependent constants. The scaling analysis therefore does not distinguish between immediate and delayed decay at the boundary.

Theorem~\ref{thm: optimal_lrs} reveals a sharp phase transition at $s_c=1-1/q$. At the core is the tradeoff between signal learning and noise forgetting, together with the stability constraint: the forgetting kernel determines the preferred decay profile, while signal learning determines how much intrinsic time must be accumulated. As the task becomes harder, the unconstrained optimal peak LR increases until the stability constraint becomes active, producing the transition to delayed decay. This leads to the following interpretation:
\begin{itemize}
\item \textbf{Model capacity determines how to decay.} The capacity exponent $q$ governs the forgetting kernel $\cK$, and hence how quickly previously injected noise is forgotten. This determines the preferred decay profile. In particular, the resulting training-step decay has exponent $2q-1$, independent of the target function.
\item \textbf{Task difficulty determines when to decay.} The source exponent $s$ governs how much intrinsic time is needed for signal learning. Smaller $s$ corresponds to slower signal learning and therefore favors maintaining a large LR for longer. Once the stability constraint becomes active, this produces the extended stable phase in the hard-task regime.
\end{itemize}
The derivation in Section~\ref{sub:proof_sketch_of_theorem_ref_thm_optimal_lrs} makes this structure transparent.

\section{Fractional LR schedules: shape and rate optimality}
\label{sec:suff-cond}

In practice, one typically fixes a decay shape and tunes only the peak LR. This raises a more practical question: how much performance is lost by fixing the shape, and which decay shapes still attain the optimal convergence rate after peak-LR tuning? To answer this, we study fractional schedules whose shape is defined over relative training progress.

\begin{definition}[Fractional LR schedule]
\label{def:fractional}
A family of LR schedules $\eta_N:[0,N]\to\RR_{\ge0}$ is called \emph{fractional} if
\[
\eta_N(z)={\bar{\eta}}\,\zeta\left(\frac{z}{N}\right),\qquad z\in[0,N],
\]
where ${\bar{\eta}}>0$ may depend on $N$, while $\zeta:[0,1]\to\RR_{\ge0}$ is a profile independent of $N$, satisfying 
$
\int_0^1\zeta(u)\dd u=1.
$
\end{definition}
For a fixed training horizon $N$, the total intrinsic time is $T=\int_0^N\eta_N(z)\dd z=\bar{\eta}N$. Hence,  the schedule can be reparameterized by the total intrinsic-time budget $T$ and the allocation profile $\zeta$.

Define the cumulative profile map $P(x)\coloneqq\int_0^x\zeta(u)\dd u$. The intrinsic time at step $z$ is $t=\int_0^z\eta_N(z)\dd z=TP(z/N)$, so $t/T=P(z/N)$. Let $\psi\coloneqq\zeta\circ P^{-1}$, where $P^{-1}$ denotes a generalized inverse of $P$. Then $z/N=P^{-1}(t/T)$, and the LR expressed in intrinsic time is
\begin{equation}\label{eqn: intrinsic-time LR}
\varphi(t)=\frac{T}{N}\psi\left(\frac{t}{T}\right).
\end{equation}

Substituting this representation into the FSL~\eqref{eqn: plk_fsl}  gives the final-step loss:
\begin{equation}
\label{eqn:fractional-fixed-budget}
\cF_N(T,\zeta)\coloneqq\cS(T)+\frac{T}{N}\int_0^T\cK\!\left(T-t\right)\psi\left(\frac{t}{T}\right)\dd t.
\end{equation}
The stability constraint~\eqref{equ: functional_physical}: $\bar{\eta}\|\zeta\|_\infty\leq \eta_{\stability}$ becomes
\begin{equation}
\label{eqn:fractional-stability}
T\le\frac{N\eta_{\stability}}{\|\zeta\|_\infty}.
\end{equation}
Schedule design therefore amounts to minimizing $\cF_N(T,\zeta)$ over $T>0$ and unit-area profiles subject to~\eqref{eqn:fractional-stability}. In this section, we keep $\zeta$ fixed and optimize $T$, corresponding to the practical setting of tuning the peak LR while keeping the schedule shape fixed.

\subsection{Terminal decay and convergence rates}

We characterize the terminal behavior of $\zeta$ by assuming that, for some $p\ge0$ and $\delta\in(0,1)$,
\begin{equation}
\label{eqn:power-tail}
\zeta(x)\eqsim(1-x)^p,\qquad x\in[\delta,1).
\end{equation}
Larger $p$ means faster decay near the end of training. Several commonly used schedules satisfy this condition:
\begin{itemize}
\item \textbf{Constant schedule.} $\zeta(x)\equiv1$, corresponding to $p=0$.
\item \textbf{Cosine decay.} {$\zeta(x)=1+\cos(\pi x)\eqsim(1-x)^2$ as $x\to1$, corresponding to $p=2$.}
\item \textbf{1-sqrt decay.} {$\zeta(x)=3(1-\sqrt{x})\eqsim1-x$ as $x\to1$, corresponding to $p=1$.}
\item \textbf{Power decay.} {$\zeta(x)=(p+1)(1-x)^p$.} The case $p=1$ is linear decay~\citep{bergsma2025straight}.
\end{itemize}

\begin{theorem}[Scaling law for fractional LR schedules]
\label{thm:fractional-fsl}
Let $\zeta$ be a fixed bounded fractional profile satisfying~\eqref{eqn:power-tail}, and let ${ T\gtrsim1}$ be the total intrinsic time. Let $\alpha=\min\{q,p+1\}$. Then
\begin{equation}
\label{eqn: fractional LRS-scaling}
{\cF_N(T,\zeta)\eqsim T^{-s}+\frac{T^{1/\alpha}}{N}\bigl[{\log T}\bigr]^{\mathbf{1}\{q=p+1\}}.}
\end{equation}
\end{theorem}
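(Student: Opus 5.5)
The plan is to treat the signal and noise terms of \eqref{eqn:fractional-fixed-budget} separately. The signal term is immediate: since $T\gtrsim 1$ and $c_S\eqsim 1$, we have $\cS(T)=(c_S+T)^{-s}\eqsim T^{-s}$. For the noise term, I rescale $t=Tu$ to get
\[
\frac{T}{N}\int_0^T\cK(T-t)\psi(t/T)\dd t=\frac{T^2}{N}\int_0^1 \bigl(c_K+T(1-u)\bigr)^{-(2-1/q)}\psi(u)\dd u .
\]
Everything then reduces to understanding $\psi=\zeta\circ P^{-1}$ near $u=1$, where the kernel is largest.

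\textbf{Step 1: the profile in intrinsic time.} From \eqref{eqn:power-tail}, for $x\in[\delta,1)$ we get $1-P(x)=\int_x^1\zeta\eqsim(1-x)^{p+1}$. Hence for $u\in[P(\delta),1)$ we have $1-P^{-1}(u)\eqsim(1-u)^{1/(p+1)}$, and therefore
\[
\psi(u)\eqsim(1-u)^{\beta},\qquad \beta\coloneqq\frac{p}{p+1}.
\]
On $[0,P(\delta)]$ I only use the bound $\psi\le\|\zeta\|_\infty$. Note that $P(\delta)<1$, since $\zeta>0$ on $[\delta,1)$. The generalized inverse causes no trouble here: on $[\delta,1)$, $P$ is strictly increasing because $\zeta$ is bounded below there.

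\textbf{Step 2: the early part.} For $u\le P(\delta)$ we have $T(1-u)\ge (1-P(\delta))T$. So this part of the integral is at most
\[
\frac{T^2}{N}\,\|\zeta\|_\infty\,\bigl(c+T\bigr)^{-(2-1/q)}\lesssim \frac{T^{1/q}}{N}.
\]
Since $\alpha\le q$ and $T\gtrsim1$, this is $\lesssim T^{1/\alpha}/N$ and is harmless for the upper bound.

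\textbf{Step 3: the terminal part.} Set $v=1-u$ and then $w=Tv$. The terminal contribution becomes
\[
\frac{T^{2}}{N}\int_0^{c}(c_K+Tv)^{-(2-1/q)}v^{\beta}\dd v\;\eqsim\;\frac{T^{1-\beta}}{N}\int_0^{cT}(c_K+w)^{-(2-1/q)}w^{\beta}\dd w ,
\]
with $c=1-P(\delta)$. For $w\gtrsim1$ the integrand behaves like $w^{\beta-2+1/q}$. This exponent is $<-1$ iff $\beta<1-1/q$, which is equivalent to $q>p+1$. This gives three cases:
\begin{itemize}
\item If $q>p+1$, the integral is $\eqsim1$, so the term is $\eqsim T^{1-\beta}/N=T^{1/(p+1)}/N$.
\item If $q=p+1$, the integral is $\eqsim\log T$, giving $T^{1/(p+1)}\log T/N$.
\item If $q<p+1$, the integral is $\eqsim T^{\beta-1+1/q}$, so the term is $\eqsim T^{1/q}/N$.
\end{itemize}
In all three cases this is $T^{1/\alpha}[\log T]^{\mathbf 1\{q=p+1\}}/N$.

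\textbf{Step 4: the lower bounds.} The upper bound follows by combining Steps 2 and 3. For the lower bound, it suffices to restrict the integral to $w\in[1,cT]$, where $\psi\eqsim(1-u)^\beta$ holds two-sidedly; this already produces the matching order in every case. The condition $T\gtrsim1$ guarantees this window is nonempty, and in the logarithmic case it gives $\log T\gtrsim1$ (after enlarging constants if necessary). Finally, the two terms are nonnegative, so summing their two-sided bounds yields \eqref{eqn: fractional LRS-scaling}.

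The main obstacle is Step 1, the transfer of the terminal behavior from step time to intrinsic time. It needs care with the generalized inverse and with uniformity of constants near $u=1$. Once $\psi(u)\eqsim(1-u)^{p/(p+1)}$ is established, the remaining steps are a routine power-law integral split at $w\eqsim1$.
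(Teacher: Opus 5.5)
Your proposal is correct and follows the paper's proof essentially step for step. Like the paper, you transfer the terminal exponent to intrinsic time ($\psi(y)\eqsim(1-y)^{p/(p+1)}$, the paper's intrinsic-time tail lemma), rescale $t=Tu$, bound the early region by $T^{1/q}/N$, and evaluate the tail via $w=T(1-u)$ with the three-way case split on $p/(p+1)$ versus $1-1/q$ (the paper's convolution lemma).
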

The proof is given in Appendix~\ref{app:proof-thm-fractional-fsl}. Since $T=N\bar{\eta}$, increasing the intrinsic-time budget also increases the overall LR scale and hence the amount of stochastic noise injected during training. Theorem~\ref{thm:fractional-fsl} shows that the resulting noise level is governed by a competition between two effects: the forgetting kernel $\cK(t)\eqsim t^{-(2-1/q)}$ controls how much previously injected noise remains, while the terminal decay $\zeta(x)\eqsim(1-x)^p$ controls how strongly fresh noise is suppressed near the end of training. These two mechanisms contribute the exponents $1/q$ and $1/(p+1)$, respectively, so that
\[
\frac1\alpha=\max\left\{\frac1q,\frac1{p+1}\right\},\qquad \alpha=\min\{q,p+1\}.
\]
When $p+1>q$, terminal noise is sufficiently suppressed and forgetting becomes the bottleneck; when $p+1<q$, fresh noise near the end dominates and terminal decay becomes the bottleneck.

We next optimize the total intrinsic time $T$ for each profile $\zeta$. Let $T^*$ be a minimizer of $\cF_N(T,\zeta)$ subject to~\eqref{eqn:fractional-stability}, and let $\cE_N^*$ denote the resulting final-step FSL loss.

\begin{highlightedtheorem}
\begin{theorem}[{Optimal intrinsic time for fractional schedules}]
\label{thm: optimal-fractional-lrs}
Fix a {bounded} fractional profile $\zeta$ with tail exponent $p$, and let $\alpha=\min\{q,p+1\}$. Then:
\begin{itemize}
\item If $s\ge1-\frac1\alpha$, then for $q\neq p+1$,
\[
{T^*\eqsim N^{\frac{1}{s+1/\alpha}}},
\qquad
\cE_N^*\eqsim N^{-\frac{s\alpha}{s\alpha+1}}.
\]
For $q=p+1$, these scalings acquire logarithmic corrections:
\[
{T^*\eqsim \left(\frac{N}{\log N}\right)^{\frac{1}{s+1/\alpha}}},
\qquad
\cE_N^*\eqsim N^{-\frac{s\alpha}{s\alpha+1}}(\log N)^{\frac{s\alpha}{s\alpha+1}}.
\]
\item If $s<1-\frac1\alpha$,
\[
{T^*\eqsim N},
\qquad
\cE_N^*\eqsim N^{-s}.
\]
\end{itemize}
\end{theorem}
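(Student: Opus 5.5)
The plan is to reduce Theorem~\ref{thm: optimal-fractional-lrs} to a one-dimensional minimization over $T$, using Theorem~\ref{thm:fractional-fsl}. That result gives $\cF_N(T,\zeta)\eqsim g_N(T)\coloneqq T^{-s}+\frac{T^{1/\alpha}}{N}[\log T]^{\mathbf 1\{q=p+1\}}$ uniformly in $T\gtrsim 1$. The feasible set is $1\lesssim T\le T_{\max}\coloneqq N\eta_{\stability}/\|\zeta\|_\infty\eqsim N$, where $\|\zeta\|_\infty\eqsim 1$ because $\zeta$ is fixed and bounded. Since $\cF_N\eqsim g_N$ with constants independent of $T$ and $N$, we get $\min_T \cF_N\eqsim\min_T g_N$ over the same feasible set.

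To locate $T^*$ itself, I would argue as follows. Suppose a feasible $T$ has $g_N(T)$ much larger than $\min g_N$ (larger by more than the product of the two equivalence constants). Then $T$ cannot be a minimizer of $\cF_N$. Because $g_N$ is a sum of a decreasing and an increasing power, this forces $T^*$ into a window around the minimizer of $g_N$ of constant multiplicative width. Concretely, if $T^*\ge C\hat T$ then the noise term alone exceeds $C^{1/\alpha}$ times the minimum, and symmetrically for the signal term when $T^*\le \hat T/C$. This pins $T^*\eqsim\hat T$.

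Next I minimize $g_N$ by balancing the two terms. In the non-log case, $T^{-s}=T^{1/\alpha}/N$ gives the unconstrained balance point $\hat T=N^{1/(s+1/\alpha)}$, with value $N^{-s/(s+1/\alpha)}=N^{-s\alpha/(s\alpha+1)}$. Whether this point is feasible is decided by comparing it with $T_{\max}\eqsim N$:
\begin{itemize}
\item If $1/(s+1/\alpha)\le 1$, i.e.\ $s\ge 1-1/\alpha$, then $\hat T\lesssim N$ and it is feasible up to constants.
\item If $s<1-1/\alpha$, then $\hat T\gg N$. On $[1,T_{\max}]$ the signal term dominates, so $g_N$ is essentially decreasing and the minimum sits at the boundary, $T^*\eqsim N$.
\end{itemize}
In the constrained case the loss is $N^{-s}+N^{1/\alpha-1}$, and $N^{1/\alpha-1}\lesssim N^{-s}$ exactly when $s\le 1-1/\alpha$, so $\cE_N^*\eqsim N^{-s}$. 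The boundary case $s=1-1/\alpha$ with $\hat T\eqsim N$ is consistent with both formulas.

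In the log case $q=p+1$ (so $\alpha=q$), I would solve $T^{-s}\eqsim T^{1/\alpha}\log T/N$. Trying $T=(N/\log N)^{1/(s+1/\alpha)}$ gives $\log T\eqsim\log N$, hence $T^{s+1/\alpha}\log T\eqsim N$, which confirms the balance. The value is then $T^{-s}=(N/\log N)^{-s\alpha/(s\alpha+1)}$. Feasibility needs $(N/\log N)^{1/(s+1/\alpha)}\lesssim N$, which holds whenever $s\ge 1-1/\alpha$.

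The hard-regime statement with $q=p+1$ would also need $N^{1/\alpha-1}\log N\lesssim N^{-s}$ at $T\eqsim N$. This holds for $s<1-1/\alpha$ strictly, since a polynomial gap absorbs the logarithm; the log matters only at equality, which belongs to the first bullet.

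The main obstacle is the log case: the two-sided localization of $T^*$ must handle the factor $\log T$. I would first handle it by noting that on the relevant range $T\in[N^{c},N]$ we have $\log T\eqsim\log N$, so $g_N$ is equivalent to a pure power function there. Values $T<N^{c}$ are excluded because there the signal term $T^{-s}\ge N^{-cs}$ already exceeds the claimed optimum for small enough $c$.
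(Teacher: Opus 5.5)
Your proposal is correct and follows essentially the same route as the paper. You reduce via Theorem~\ref{thm:fractional-fsl} to minimizing $T^{-s}+T^{1/\alpha}[\log T]^{\mathbf{1}\{q=p+1\}}/N$ over $T\lesssim N$, balance the two terms with admissibility checked via $s+1/\alpha\ge 1$, show the noise term is uniformly $o(N^{-s})$ in the hard regime, and localize $T^*$ by comparing each term at $T^*$ with the optimum. The only difference is in the log case. The paper skips your restriction to $T\in[N^c,N]$: it uses the exact balance equation $\hat T^{s+1/\alpha}\log\hat T=N$ and the monotonicity of $T^{1/\alpha}\log T$, so the same two-sided comparison goes through directly, and the factor $\log T$ is not really an obstacle.
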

\end{highlightedtheorem}
The proof is given in Appendix~\ref{app:proof-thm-opt-fractional}. Theorem~\ref{thm: optimal-fractional-lrs} has the same two-regime structure as Theorem~\ref{thm: optimal_lrs}, with $q$ replaced by the effective exponent $\alpha=\min\{q,p+1\}$. Thus, fixing the schedule shape can shift the regime boundary from $s=1-1/q$ to $s=1-1/\alpha$. Two consequences follow:
\begin{itemize}
\item \textbf{Optimal rates do not require the exact optimal shape.} After optimizing $T$, any fractional profile with $p+1>q$ attains the optimal rate in both regimes. Thus, the terminal exponent need not match the exact optimum $2q-1$; in the hard-task regime, a long stable phase is also unnecessary for attaining $N^{-s}$.
\item \textbf{Slow terminal decay causes capacity saturation.} When $p+1<q$, we have $\alpha=p+1$, so the optimized rate no longer improves with $q$. We call this \emph{schedule-induced capacity saturation}. The schedule remains rate-optimal when $s\le1-1/(p+1)$, but becomes suboptimal when $s>1-1/(p+1)$. Figure~\ref{fig: intro}(right) summarizes these regimes.
\end{itemize}

\paragraph{Implications for commonly used schedules.}
The criterion $p+1>q$ directly gives the regimes in which standard schedules attain the optimal rate after optimizing $T$. Cosine decay has $p=2$ and is therefore rate-optimal for all tasks when $q<3$. Linear and 1-sqrt decay have $p=1$ and are similarly rate-optimal for all tasks when $q<2$. For power decay, $p$ is a design parameter, and any $p>q-1$ attains the optimal rate without matching the exact optimal exponent $2q-1$ given in Theorem~\ref{thm: optimal_lrs}.

\paragraph{Numerical validation.}
Figure~\ref{fig: intro}(middle) validates the predicted capacity saturation for PLK regression with $(s,q)=(0.8,5)$: power decay with $p=4.2$ follows the optimal rate, whereas cosine decay ($p=2$) exhibits the predicted saturated rate. Additional schedule families and regimes are tested in Appendix~\ref{app:additional-experiments}.

\subsection{Why relative training progress matters}

Fractional schedules define the shape over the relative progress $z/N$. We now show that fixing a profile in absolute training steps can prevent rate optimality, even when the peak LR is tuned for each horizon. Classical polynomial decay provides a simple illustration:
\[
\eta_N(z)=\eta_0(1+z)^{-a},
\]
{where $0<\eta_0\lesssim1$ and $a>0$ is fixed as $N$ varies.} For $a\ge1$, the total intrinsic time grows at most logarithmically, so the signal-learning term alone is already suboptimal.  It therefore suffices to consider $a\in(0,1)$, for which
\[
T_N=\int_0^N\eta_N(z)\dd z\eqsim\eta_0N^{1-a}.
\]
Changing the decay exponent now changes the power of $N$ in the total intrinsic time, whereas the fractional family has the same total intrinsic time for every profile at a fixed LR scale after the unit-integral normalization.

The terminal LR satisfies
$
\eta_N(N)\eqsim\eta_0N^{-a}\eqsim T_N/N.
$
Since the schedule is non-increasing, for $T_N\ge1$ the noise term is bounded below by the terminal LR:
\[
\int_0^{T_N}\cK(T_N-\tau)\varphi(\tau)\dd\tau
\geq \eta_N(N)\int_0^{T_N}\cK(T_N-\tau)\dd\tau
\eqsim\eta_N(N).
\]
Consequently,
\[
\cF[\eta_N]\gtrsim T_N^{-s}+\frac{T_N}{N}
\ge\min_{T>0}\left(T^{-s}+\frac{T}{N}\right)
\eqsim N^{-\frac{s}{s+1}}.
\]
This is strictly slower than the optimal rate in both the easy- and hard-task regimes. Thus, polynomial decay cannot simultaneously accumulate enough intrinsic time for signal learning and sufficiently suppress late-stage noise injection.

This limitation extends beyond polynomial decay. Consider families $\eta_N(z)=\eta_0{h}(z)$, where the profile ${h}$ is independent of $N$, while the scale $\eta_0$ may be tuned separately for each training horizon. We call such profiles \emph{horizon-independent}. The following theorem establishes a rate barrier for every fixed non-increasing profile satisfying mild regularity conditions, whose proof is deferred to Appendix~\ref{app:proof-horizon-free-lower-bound}.

\begin{theorem}[Rate barrier for horizon-independent profiles]
\label{thm:horizon-free-lower-bound}
Let $\eta_N(z)=\eta_0{h}(z)$, where $0<\eta_0\lesssim1$ may depend on $N$, and ${h}:[0,\infty)\to(0,1]$ is locally integrable, non-increasing, independent of $N$, and satisfies ${h}(0)=1$. Then, for every $\rho>s/(s+1)$,
\[
\limsup_{N\to\infty}N^\rho\cF[\eta_N]=\infty.
\]
\end{theorem}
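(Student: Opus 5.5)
The plan is to reduce to the two-term lower bound used for polynomial decay above, with the terminal LR as the noise proxy, and then to use a simple ODE argument on the cumulative profile. Write $H(N)\coloneqq\int_0^N h(z)\dd z$, so that $T_N=\eta_0H(N)$. Since $h$ is non-increasing, $H(N)\ge Nh(N)$. Define the ratio
\[
g(N)\coloneqq \frac{Nh(N)}{H(N)}\in(0,1],
\]
which measures how large the terminal LR is relative to the average LR.

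\emph{Step 1 (two lower bounds).} The first bound handles the case $T_N\ge1$. Because $\eta_N$ is non-increasing, so is $\varphi$, and hence $\varphi\ge\eta_N(N)$ on $[0,T_N]$. Exactly as in the polynomial-decay computation, this gives
\[
\int_0^{T_N}\cK(T_N-\tau)\varphi(\tau)\dd\tau\gtrsim\eta_N(N)=\eta_0h(N)=g(N)\frac{T_N}{N}.
\]
Therefore
\[
\cF[\eta_N]\gtrsim T_N^{-s}+g(N)\frac{T_N}{N}\ge\min_{T>0}\Bigl(T^{-s}+\frac{g(N)T}{N}\Bigr)\eqsim\Bigl(\frac{g(N)}{N}\Bigr)^{\frac{s}{s+1}}.
\]
If instead $T_N<1$, then $\cF\ge\cS(T_N)\gtrsim1$, which is even better. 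The second bound is independent of $g$: since $\eta_0\lesssim1$, we have $T_N\lesssim H(N)$, and so $\cF[\eta_N]\gtrsim H(N)^{-s}$.

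\emph{Step 2 (finding good horizons).} Fix $\rho>s/(s+1)$ and choose $\varepsilon>0$ with $(1+\varepsilon)\frac{s}{s+1}<\rho$. I claim that the set $\{N: g(N)\ge N^{-\varepsilon}\}$ is unbounded.

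Suppose instead that $g(N)<N^{-\varepsilon}$ for all $N\ge N_0$. Since $h$ is locally integrable and monotone, $H$ is absolutely continuous with $(\log H)'(N)=h(N)/H(N)=g(N)/N$ almost everywhere. Integrating from $N_0$ gives
\[
\log H(N)\le\log H(N_0)+\int_{N_0}^\infty u^{-1-\varepsilon}\dd u<\infty,
\]
so $H$ stays bounded. The second bound of Step 1 then gives $\cF[\eta_N]\gtrsim1$ for every $N$, and hence $N^\rho\cF[\eta_N]\to\infty$ trivially.

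\emph{Step 3 (conclusion).} In the remaining case, there are arbitrarily large $N$ with $g(N)\ge N^{-\varepsilon}$. Along these $N$, Step 1 gives
\[
N^\rho\cF[\eta_N]\gtrsim N^{\rho-(1+\varepsilon)\frac{s}{s+1}}\to\infty,
\]
which proves the $\limsup$ claim.

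The main obstacle is that the terminal LR can be much smaller than $T_N/N$ for a general $h$. For example, with $h(z)=1/(1+z)$ the ratio $g(N)$ decays like $1/\log N$. So the clean inequality $\eta_N(N)\eqsim T_N/N$ from the polynomial case fails. The log-derivative identity $(\log H)'=g/N$ resolves this: $g$ cannot be polynomially small at all large $N$ unless $H$ stays bounded, and bounded $H$ already kills the signal term.

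A minor point to check is that the constants in $\cK(t)\eqsim t^{-(2-1/q)}$ give $\int_0^{T}\cK\gtrsim1$ uniformly for $T\ge1$. This holds because $\int_0^1\cK\gtrsim1$ and $c_K\eqsim1$.
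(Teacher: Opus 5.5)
Your proof is correct and uses the same three ingredients as the paper's: the signal bound, the terminal-LR lower bound $\gtrsim\eta_0h(N)$ on the noise term, and integrating $(\log H)'=h/H$ to force $H$ to stay bounded when $h/H$ decays faster than $x^{-1-\varepsilon}$. The paper packages these as a contradiction from $\cF[\eta_N]\le CN^{-\rho}$, which gives $h(N)/H(N)\lesssim N^{-\rho(1+1/s)}$; your dichotomy on $g(N)=Nh(N)/H(N)$ is a direct version that also yields the explicit per-horizon bound $\cF[\eta_N]\gtrsim(g(N)/N)^{s/(s+1)}$. If $N$ ranges over integers, you should add the paper's step that extends $g(N)<N^{-\varepsilon}$ to all real $x$, using $h(x)\le h(\lfloor x\rfloor)$ and $H(x)\ge H(\lfloor x\rfloor)$, before integrating.
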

Thus, peak-LR tuning alone cannot attain the optimal rate for any fixed non-increasing profile in absolute training steps.

\section{Discrete-time SGD for kernel regression}
\label{sec:kernel-sgd}

We now turn from the continuous-time FSL analysis to rigorous convergence guarantees for discrete-time SGD. We work under the standard source and capacity conditions commonly used in kernel regression, which retain the exponents $s$ and $q$ from Assumption~\ref{ass:power-law} but replace the exact power-law structure by more general spectral regularity conditions. This also allows direct comparison with the existing kernel-regression literature.
\begin{assumption}[Source condition]\label{ass: source}
For some $s>0$, 
$
f^*=\sum_{j\ge1}a_j\lambda_j^{(s-1)/2}\phi_j
$
with $\sum_{j\ge1}a_j^2\le1$.
\end{assumption}

\begin{assumption}[Capacity condition]\label{ass: capacity}
There exists a $q>1$ such that $\lambda_j\lesssim j^{-q}$ for all $j\in\NN_{+}$.
\end{assumption}

Under these conditions, classical results establish the minimax lower bound~\citep{caponnetto2007optimal,steinwart2009optimal} 
\begin{equation}
\label{eqn: minimax-optimal}
\inf_{\widehat{\btheta}_N}\sup_{\cD}\EE_{\mathcal{D}_N}\!\left[\cE(\widehat{\btheta}_N)\right]\gtrsim N^{-\frac{sq}{sq+1}},
\end{equation}
where $\mathcal{D}_N=\{(\bx_k,y_k)\}_{k=0}^{N-1}$ consists of $N$ i.i.d.\ samples, the infimum is over all estimators measurable with respect to $\mathcal{D}_N$, and the supremum is over distributions satisfying Assumptions~\ref{ass: source} and~\ref{ass: capacity}. We ask how closely the last iterate of one-pass SGD can approach this statistical limit.

\begin{theorem}[Convergence rate of SGD with power decay] \label{thm:power-rate}
Suppose Assumptions~\ref{ass: hypercontra},  \ref{ass: source}, and \ref{ass: capacity} hold. Consider LR schedule $\eta_k = \eta_0 (1-k/N)^{p}$.  The following statements hold.
\begin{itemize}
    \item \textbf{Easy-task regime} ($s \ge 1 - \frac{1}{q}$).
    Choosing $\eta_{0} \eqsim N^{-\frac{s-1+1/q}{s+1/q}}$ and
    $p > q-1$ yields
    \[
        \EE\!\left[\cE(\btheta_N)\right]
        \;\lesssim\;
        N^{-\frac{sq}{sq+1}} .
    \]

    \item \textbf{Hard-task regime} ($s < 1 - \frac{1}{q}$).
    Choosing $\eta_{0} \eqsim 1$ and $p > \frac{s}{1-s}$ yields
    $$
        \EE\!\left[\cE(\btheta_N)\right]
        \;\lesssim\;
        N^{-s} .
    $$
\end{itemize}
\end{theorem}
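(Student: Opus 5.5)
We would prove this directly for discrete SGD rather than through the continuous FSL. We would use the standard bias--variance decomposition of one-pass SGD under hypercontractivity, as in \citet{wu2022last,lin2024scaling} and the discrete analysis underlying \citet[Theorem~4.4]{li2025functional}. Write $\btheta_k-\btheta^*$ in the eigenbasis of $\bH$ and define $t_k=\sum_{i<k}\eta_i$. In the diagonal (quadratic-form) recursion, coordinate $j$ evolves approximately with $\prod_{i}(1-\eta_i\lambda_j)\le e^{-\lambda_j(t_N-t_k)}$. The fourth-moment term is then controlled by Assumption~\ref{ass: hypercontra}, as in those works, giving an upper bound of the form
\[
\EE[\cE(\btheta_N)]\lesssim \underbrace{\sum_j \lambda_j(\theta_j^*)^2 e^{-2\lambda_j t_N}}_{\text{bias}}+\bigl(\sigma^2+\text{bias-feedback}\bigr)\sum_{k<N}\eta_k^2\sum_j\lambda_j^2 e^{-2\lambda_j(t_N-t_{k+1})}.
\]
This bound is valid provided $\eta_0\operatorname{tr}(\bH)$ is below a small constant. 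That requirement holds in both regimes: in the easy regime $\eta_0\lesssim1$ because $s\ge1-1/q$, and in the hard regime we choose $\eta_0$ as a small constant. The feedback term is handled in the usual way by bounding the risk process $\sup_k\EE\cE(\btheta_k)\lesssim1$ under this stepsize restriction, so it only inflates the noise level by a constant.

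Next we evaluate each term under Assumptions~\ref{ass: source}--\ref{ass: capacity}. Since $\lambda_j(\theta_j^*)^2=a_j^2\lambda_j^{s}$, the bias satisfies $\lesssim\sup_\lambda\lambda^s e^{-2\lambda t_N}\lesssim t_N^{-s}$, and $t_N\eqsim\eta_0N/(p+1)$. For the variance, we use $\sum_j\lambda_j^2e^{-2\lambda_j u}\lesssim (1+u)^{-(2-1/q)}$, which only needs $\lambda_j\lesssim j^{-q}$: split the sum at $j\eqsim u^{1/q}$. The variance is then bounded by
\[
V\coloneqq\sum_{k<N}\eta_k^2\,\bigl(1+t_N-t_{k+1}\bigr)^{-(2-1/q)} .
\]
With $\eta_k=\eta_0(1-k/N)^p$, set $m=N-k$, so that $t_N-t_{k+1}\eqsim \eta_0 m^{p+1}/N^{p}$. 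Splitting at $m_*$ with $\eta_0m_*^{p+1}N^{-p}\eqsim1$ gives two pieces. The early piece is $\sum_{m\le m_*}\eta_0^2(m/N)^{2p}\lesssim\eta_0^2 m_*^{2p+1}N^{-2p}$. The late piece is $\eta_0^{2}N^{-2p}\sum_{m>m_*}m^{2p}(\eta_0m^{p+1}N^{-p})^{-(2-1/q)}$. The exponent of $m$ in the late summand is $2p-(p+1)(2-1/q)=(p+1)/q-2$. The condition $p>q-1$ is exactly what makes this exponent exceed $-1$, so the sum is dominated by $m\eqsim N$. The late piece then gives $\eta_0^{1/q}N^{-(1-1/q)}=T^{1/q}/N$ with $T=\eta_0N$, which matches Theorem~\ref{thm:fractional-fsl} with $\alpha=q$ and no log. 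The early piece equals $\eta_0^{2-\frac{2p+1}{p+1}}N^{\frac{(2p+1)p}{p+1}-2p}=\eta_0^{1/(p+1)}N^{-p/(p+1)}=T^{1/(p+1)}/N$, which is smaller.

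In the easy regime, balancing $T^{-s}$ against $T^{1/q}/N$ gives $T\eqsim N^{1/(s+1/q)}$, that is, $\eta_0\eqsim N^{-\frac{s-1+1/q}{s+1/q}}$, and the total is $N^{-sq/(sq+1)}$. In the hard regime we take $\eta_0\eqsim1$ and $T\eqsim N$. The forgetting piece is then $N^{1/q-1}\le N^{-s}$, and the early piece is $N^{-p/(p+1)}\le N^{-s}$ precisely when $p/(p+1)\ge s$, i.e.\ $p\ge s/(1-s)$. This is why the hard-regime condition reads $p>s/(1-s)$: we then only need the late piece's summand exponent to be handled case by case rather than requiring $p>q-1$. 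When $(p+1)/q-2<-1$, the late sum is instead dominated by $m\eqsim m_*$ and contributes the same order as the early piece.

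The main obstacle is the first step: obtaining a rigorous discrete upper bound of the stated bias-plus-kernel-convolution form for an arbitrary non-increasing schedule. In particular, the fourth-moment (bias-feedback) term must not spoil the rate near the end of training, where $t_N-t_k$ is small. We would try first to import the discrete FSL upper bound of \citet[Theorem~4.4]{li2025functional} verbatim, since it holds for general schedules with $\eta_0\lesssim1$, and then merely verify that $\lambda_j\lesssim j^{-q}$ and the source condition suffice in place of the two-sided power laws. The remaining sums are routine once that bound is in place.
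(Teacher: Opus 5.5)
Your proposal is correct and follows the paper's proof essentially step for step. Both arguments use the exact diagonal second-moment recursion in the eigenbasis, bound the gradient-noise variance by $\lambda_j(2C\,\EE[\cE(\btheta_k)]+\sigma^2)$ via hypercontractivity, and absorb the feedback term through a uniform risk bound under $\eta_{\max}\lesssim 1/\tr(\bH)$. They then apply the spectral bounds $\cS(t)\lesssim(1+t)^{-s}$ and $\cK(t)\lesssim(1+t)^{-(2-1/q)}$ and balance the two terms in each regime.

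The differences are minor. You evaluate the noise sum by splitting directly at $m_*$, whereas the paper uses the envelope $\eta_i\lesssim\eta_0T^{-a}(1+r_i)^{a}$ together with a local integral comparison for $\cK$; both give the same orders. The step you flag as the main obstacle is actually only a few lines from the exact coordinate recursion, so you do not need to import \citet[Theorem~4.4]{li2025functional}. Finally, you should state explicitly that the boundary case $p=q-1$ in the hard regime contributes $N^{-(1-1/q)}\log N=o(N^{-s})$.
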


Here \(\eta_0\) denotes the initial LR. The proof is deferred to Appendix~\ref{app:proof-theorem-power-cosine}. This theorem shows that with the power-decay schedule, SGD can reach 
 the minimax-optimal rate~\eqref{eqn: minimax-optimal} in the easy-task regime.  In the hard-task regime,  the obtained rate is also optimal, in the sense that it coincides with the best rate achievable by one-pass SGD:

\begin{proposition} \label{prop:lower-bound-any-lrs}
Suppose that $\max_{i\in[N]} \eta_i \lesssim 1$.
If $s < 1 - \frac{1}{q}$,  then
$
    \sup_{\cD}\, \EE\!\left[\cE(\btheta_N)\right]
    \;\gtrsim\;
    N^{-s}.
$
\end{proposition}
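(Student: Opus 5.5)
The plan is to exhibit one hard instance $\cD$ satisfying Assumptions~\ref{ass: hypercontra}, \ref{ass: source} and~\ref{ass: capacity}, and to show that under this instance one-pass SGD with any bounded schedule leaves essentially untouched the target components on eigendirections with $\lambda_j \ll 1/N$. Concretely, take Gaussian features $\bphi(\bx)\sim\cN(0,\bH)$, which are hypercontractive. Take spectrum $\lambda_j\eqsim j^{-q}$ and target coefficients $\lambda_j|\theta_j^*|^2\eqsim j^{-(1+sq)}$, normalized so that the source condition holds. If this normalization fails by a logarithmic factor, I will use a slightly perturbed exponent, or a single-block target; more on this below. The lower bound should then come from the signal part alone; noise only increases the risk.

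The main step is to control the mean dynamics. Since $\btheta_0=\mathbf 0$ and $\EE[\bphi\bphi^\top]=\bH$, the independence of the samples gives
\[
\EE[\btheta_{k+1}]-\btheta^* = (\bI-\eta_k\bH)\,(\EE[\btheta_k]-\btheta^*).
\]
Hence, coordinatewise, $\EE[\theta_{N,j}]-\theta_j^*=-\theta_j^*\prod_{k<N}(1-\eta_k\lambda_j)$. Applying Jensen's inequality, $\EE[\cE(\btheta_N)]\ge\cE(\EE\btheta_N)=\frac12\sum_j\lambda_j(\theta_j^*)^2\prod_k(1-\eta_k\lambda_j)^2$. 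The constraint $\max_i\eta_i\le C$ gives $t_N=\sum_k\eta_k\le CN$. For indices with $\lambda_j\le 1/(2CN)$, each factor satisfies $\eta_k\lambda_j\le 1/(2N)$, so the product is at least $(1-1/(2N))^{N}\ge 1/2$. Therefore
\[
\EE[\cE(\btheta_N)]\gtrsim\sum_{j\gtrsim N^{1/q}}\lambda_j(\theta_j^*)^2\eqsim\sum_{j\gtrsim N^{1/q}}j^{-(1+sq)}\eqsim N^{-s}.
\]
This is essentially the statement that $\cS(t_N)\gtrsim t_N^{-s}\gtrsim N^{-s}$, made rigorous without invoking the FSL.

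The obstacle I expect is the normalization in the source condition. The condition $f^*=\sum a_j\lambda_j^{(s-1)/2}\phi_j$ means $a_j^2=\lambda_j^{1-s}(\theta_j^*)^2\eqsim\lambda_j^{-s}j^{-(1+sq)}\lambda_j^{0}$. With $\lambda_j|\theta_j^*|^2= j^{-(1+sq)}$ this gives $a_j^2\eqsim j^{sq}j^{-(1+sq)}=j^{-1}$, which is not summable. The fix is to concentrate the target on a single dyadic block. Set $J\eqsim (2CN)^{1/q}$ and put $a_j^2=1/J$ for $j\in[J,2J]$, zero otherwise; then $\sum a_j^2\le1$. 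This yields $\sum_{j\in[J,2J]}\lambda_j^{s}a_j^2\eqsim J^{-sq}\eqsim N^{-s}$, and all these indices satisfy $\lambda_j\le 1/(2CN)$. Since the supremum in the statement ranges over $\cD$ and the instance may depend on $N$, this suffices.

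The hypothesis $s<1-1/q$ enters only through interpretation: in this regime $N^{-s}$ dominates the minimax rate $N^{-sq/(sq+1)}$, so the bias floor produced by the bounded total intrinsic time $t_N\lesssim N$ is the binding limit. I will remark on this at the end of the proof.
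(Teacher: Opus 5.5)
Your proposal is correct and follows essentially the same route as the paper. Both use a Gaussian-feature instance with $\lambda_j=j^{-q}$ and a target placed on eigendirections with $\lambda_j\lesssim 1/N$, and both lower-bound the risk by the noiseless contraction factor $\prod_k(1-\eta_k\lambda_j)^2$. You obtain this factor via Jensen on the mean dynamics, while the paper drops the variance term in the second-moment recursion and uses a single direction $j^*\eqsim(1+T)^{1/q}$ rather than a dyadic block (which also avoids the source-normalization issue you noticed). One cosmetic fix: $[J,2J]$ contains $J+1$ indices, so take $a_j^2=1/(J+1)$, or use the half-open block $[J,2J)$, to keep $\sum_j a_j^2\le 1$.
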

Here   $\sup_{\cD}$ denotes the supremum over all data distributions satisfying Assumptions~\ref{ass: hypercontra}, \ref{ass: source}, and \ref{ass: capacity}.
 The proof of this proposition is deferred to Appendix~\ref{app:proof-prop-lower-bound-any-lrs}.

To the best of our knowledge, Theorem~\ref{thm:power-rate} is the first last-iterate SGD result to match the minimax-optimal rate in the easy-task regime without logarithmic factors, closing the gap with averaged SGD. This improvement is motivated by the FSL schedule-design analysis, which identifies power decay as the relevant family, rather than prescribing an exponential-decay schedule a priori and then analyzing it~\citep{lin2024scaling,zhang2024optimality}.

\section{Experiments with nonlinear neural networks}
\label{sec:deep-experiments}

Our theory suggests that the preferred decay duration depends on task difficulty: easy tasks favor longer decay phases, whereas hard tasks favor short ones. We test whether this pattern persists in  neural network training using a stable--linear-decay schedule.

\subsection{Nonlinear regression with two-layer MLPs}
\label{subsec:mlp-fourier}

We first consider nonlinear regression on the unit circle. Each input is parameterized as $\boldsymbol{x}=(\cos\theta,\sin\theta)^\top\in\mathbb{S}^1$, and the target function has power-law Fourier coefficients,
\begin{equation}
\label{eqn:fourier-target}
f_\gamma^*(\boldsymbol{x})=\sum_{k=1}^{\infty}k^{-\gamma}\cos(k\theta).
\end{equation}
In the experiments, the series is truncated at a sufficiently large frequency. The exponent $\gamma$ controls target smoothness: smaller $\gamma$ places more energy in high-frequency components and produces a harder learning problem, whereas larger $\gamma$ gives a smoother and easier target.

We train a two-layer ReLU MLP for $1000$ steps using one-pass SGD with batch size $1$. We  fix the peak LR at $0.1$ and vary only the decay ratios. Concretely, we scan decay ratios $r\in\{0.1,0.2,\ldots,1.0\}$ and report the mean over $32$ runs for each ratio. Figure~\ref{fig:mlp-fourier} shows the final validation loss. The harder target $\gamma=0.6$ prefers an intermediate decay ratio $r=0.5$, whereas the easier target $\gamma=10$ prefers decay from the start ($r=1$), consistent with our prediction.  Additional results with a different peak LR are provided in Appendix~\ref{app:additional-mlp-lr005}.

\begin{figure}[!h]
\centering
    \begin{minipage}[b]{0.43\textwidth}
        \centering
        \includegraphics[width=\linewidth]{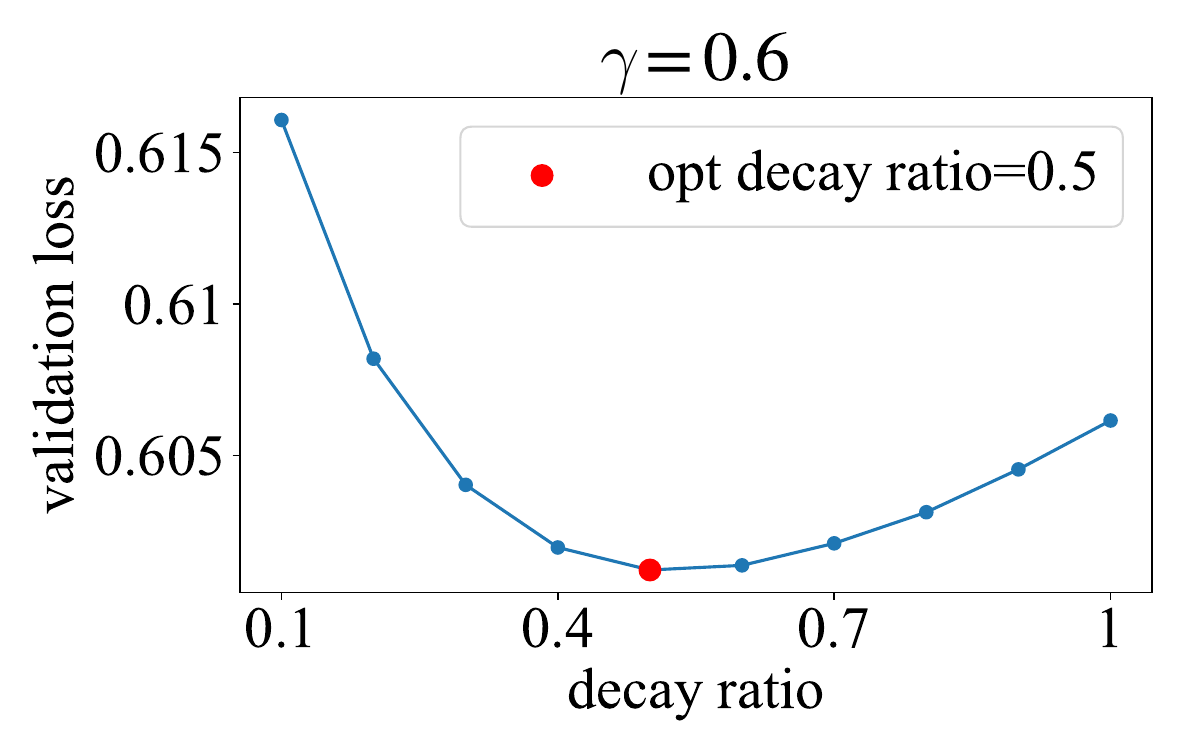}
    \end{minipage}
    \hspace*{2em}
    \begin{minipage}[b]{0.43\textwidth}
        \centering
        \includegraphics[width=\linewidth]{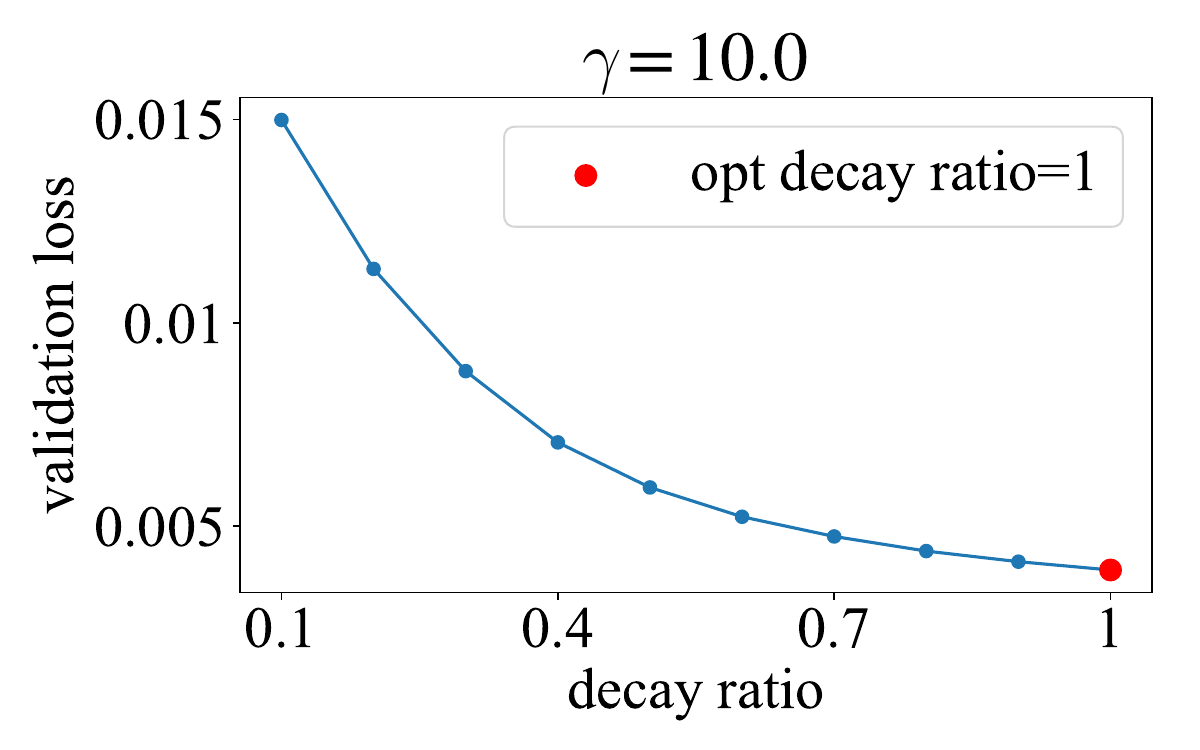}
    \end{minipage}
    \vspace*{-.7em}
\caption{\small
\textbf{Decay ratio versus validation loss for nonlinear MLPs.}
The harder target $\gamma=0.6$ prefers $r=0.5$, while the easier target $\gamma=10$ prefers $r=1$, corresponding to decay from the start of training. Points show means over $32$ runs; red markers indicate the grid minimizers. The preferred decay duration increases as the task becomes easier, consistent with our theory.
}
\label{fig:mlp-fourier}
\end{figure}

\subsection{Full versus binary classification on CIFAR-100}
\label{subsec:cifar100}

We next test our predictions on image classification using two tasks constructed from the same CIFAR-100 images: CIFAR-Full and CIFAR-Binary. CIFAR-Full uses the original 100 fine-grained classes, whereas CIFAR-Binary groups them into animals and non-animals, making CIFAR-Full the harder task in this comparison.

For both tasks, we train ResNet-18 from scratch using the same train/validation split and optimization setup: SGD with momentum $0.9$, weight decay $10^{-4}$, batch size $20$, peak LR $0.2$, and $5$ training epochs. We use linear warmup for $10\%$ of training, followed by a stable--linear-decay schedule, and vary the post-warmup decay ratio $r\in\{0.2,0.4,\ldots,1.0\}$. Figure~\ref{fig:cifar100} shows that CIFAR-Full prefers a shorter decay phase ($r=0.4$), whereas CIFAR-Binary prefers decay immediately after warmup ($r=1$), consistent with our prediction that harder tasks favor longer stable phases. Additional results with different peak LRs and batch sizes are provided in Appendix~\ref{app:additional-cifar100}.

\begin{figure}[t]
\centering
    \begin{minipage}[b]{0.43\textwidth}
        \centering
        \includegraphics[width=\linewidth]{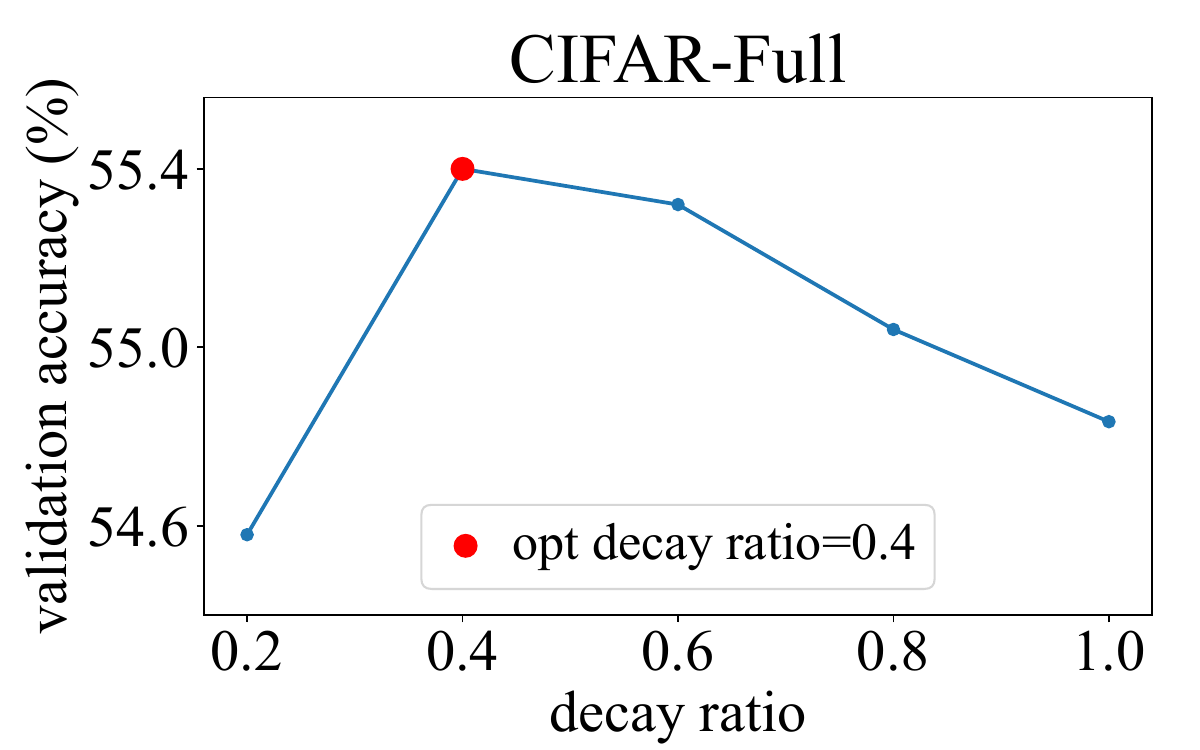}
    \end{minipage}
    \hspace*{2em}
    \begin{minipage}[b]{0.43\textwidth}
        \centering
        \includegraphics[width=\linewidth]{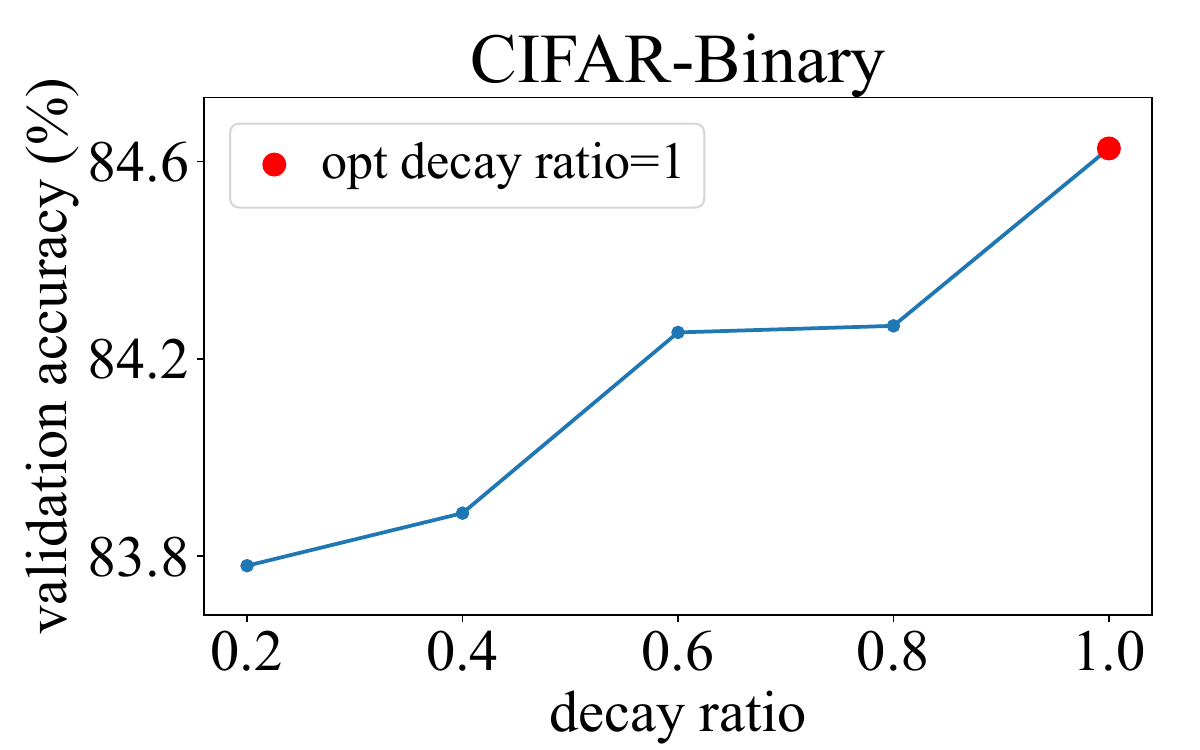}
    \end{minipage}
     \vspace*{-.7em}
\caption{\small
\textbf{Decay ratio versus final validation accuracy on CIFAR-100.}
CIFAR-Full is maximized at $r=0.4$, while CIFAR-Binary is maximized at $r=1$. Points show means over three runs; red markers indicate the best decay ratios on the tested grid.
}
\label{fig:cifar100}
\end{figure}

\section{Proof sketches and interpretation}

\subsection{Proof sketch of Theorem~\ref{thm: optimal_lrs}}
\label{sub:proof_sketch_of_theorem_ref_thm_optimal_lrs}

We solve the variational problem~\eqref{equ: functional_physical} in three steps.

\paragraph{Step 1: Optimize the profile for a fixed intrinsic-time budget.}
Fix $t(N)=T$. Recall that
$
t(z)=\int_0^z\eta(u)\dd u
$
and $\varphi(\tau)=\eta(t^{-1}(\tau))$. Under the change of variables $\tau=t(z)$, we have
$
\dd\tau=t'(z)\dd z=\eta(z)\dd z=\varphi(\tau)\dd z,
$
and hence
$
\dd z=\dd\tau/\varphi(\tau).
$
Therefore, the fixed training horizon $N$ becomes the constraint
\[
N=\int_0^N\dd z=\int_0^T\frac{1}{\varphi(\tau)}\dd\tau.
\]
Since the signal term $\cS(T)$ is fixed, we only need to minimize the noise term:
\begin{equation}
\label{equ: noise_term_intrinsic-0}
\min_{\varphi:[0,T]\to\RR_{>0}}\quad \int_0^T\cK(T-\tau)\varphi(\tau)\dd\tau
\qquad\text{s.t.}\qquad
\int_0^T\frac{1}{\varphi(\tau)}\dd\tau=N.
\end{equation}
By Cauchy--Schwarz,
\[
\left(\int_0^T\cK(T-\tau)\varphi(\tau)\dd\tau\right)\left(\int_0^T\frac{1}{\varphi(\tau)}\dd\tau\right)\ge\left(\int_0^T\sqrt{\cK(T-\tau)}\dd\tau\right)^2.
\]
Equality yields the optimal intrinsic-time profile
\begin{equation}
\label{equ: optimal_lrs_intrinsic-0}
\varphi^*(\tau)=\frac{\int_0^T\sqrt{\cK(T-u)}\dd u}{N\sqrt{\cK(T-\tau)}}\propto \cK(T-\tau)^{-1/2}=(c_K+T-\tau)^{1-\frac{1}{2q}}.
\end{equation}
Thus, for a fixed intrinsic-time budget, the \emph{optimal LR profile is determined entirely by the forgetting kernel}. Converting this profile back to training-step time yields the asymptotic power-decay shape
\[
\eta^*(z)\propto\left(1+\delta_N-\frac{z}{N}\right)^{2q-1},\qquad \delta_N=o_N(1).
\]
\paragraph{Step 2: Optimize the total intrinsic time.}
Substituting~\eqref{equ: optimal_lrs_intrinsic-0} into~\eqref{equ: noise_term_intrinsic-0} gives
\[
\int_0^T\cK(T-\tau)\varphi^*(\tau)\dd\tau
=\frac{1}{N}\left(\int_0^T\sqrt{\cK(T-\tau)}\dd\tau\right)^2
\eqsim\frac{T^{1/q}}{N}.
\]
Hence, after optimizing the profile for fixed $T$, the final-step loss reduces to
\begin{equation}
\label{eqn: oo2}
\cE_{N,T}\eqsim T^{-s}+\frac{T^{1/q}}{N}.
\end{equation}
Balancing the two terms yields
\[
T^*\eqsim N^{\frac{q}{1+sq}}.
\]

\paragraph{Step 3: Impose the stability constraint.}
It remains to check whether the LR schedule corresponding to $T^*$ satisfies the stability constraint. From~\eqref{equ: optimal_lrs_intrinsic-0},
\[
\varphi^*(0)
=
\frac{\int_0^T\sqrt{\cK(T-u)}\dd u}{N\sqrt{\cK(T)}}
\eqsim
\frac{T^{1/(2q)}}{N}\,T^{1-\frac{1}{2q}}
=
\frac{T}{N}.
\]
Substituting $T=T^*$ therefore gives the corresponding peak LR:
\[
\varphi^*(0)\eqsim\frac{T^*}{N}
\eqsim
N^{\frac{q}{1+sq}-1}
=
N^{-\frac{s+1/q-1}{s+1/q}}.
\]
This scaling changes qualitatively at $s=1-1/q$. If $s>1-1/q$ (the easy-task regime), then $\varphi^*(0)\to0$, so the stability constraint is asymptotically inactive and the power-decay profile from Step~1 is feasible from the beginning of training. 

\begin{wrapfigure}{r}{0.35\textwidth}
\centering
\includegraphics[width=\linewidth]{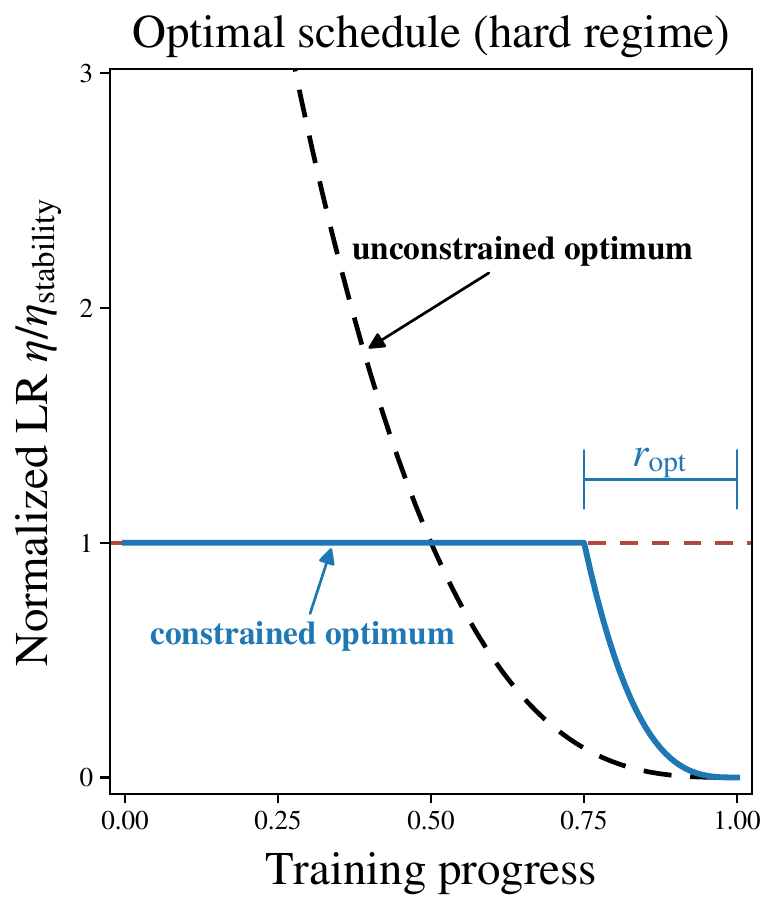}
\vspace{-2em}
\end{wrapfigure}
If $s<1-1/q$ (the hard-task regime), the unconstrained peak LR $\varphi^*(0)$ diverges with $N$ and eventually exceeds the stability threshold $\eta_{\stability}$. The stability constraint therefore becomes active. The KKT conditions for~\eqref{equ: functional_physical}, together with monotonicity of the optimal profile, imply that the constrained optimum stays at $\eta_{\stability}$ over an initial interval and then follows the same power-law decay profile as in~\eqref{equ: optimal_lrs_intrinsic-0}. Thus, the optimal schedule has a stable-decay structure. As illustrated in the right figure, the unconstrained profile exceeds the stability ceiling, while the constrained optimum replaces this infeasible early portion by a stability-saturated plateau followed by decay.

Let $r\in(0,1)$ denote the fraction of training devoted to the decay phase. The stable and decay phases then span $N(1-r)$ and $Nr$ iterations, respectively, and the schedule takes the asymptotic form
\[
\eta_r(z)
=
\eta_{\stability}
\begin{cases}
1, & 0\leq z\leq N(1-r),\\[3pt]
\displaystyle {\chi}\left(\frac{z-N(1-r)}{Nr}\right), & N(1-r)<z\leq N,
\end{cases}
\qquad
{\chi}(u)=(1-u)^{2q-1}.
\]

Increasing $r$ lengthens the decay phase and improves noise forgetting,
but reduces the total intrinsic time available for signal learning. Let
\[
c_{\chi}
:=
\int_0^1 {\chi}(u)\dd u
=
\frac{1}{2q},
\qquad
c':=1-c_{\chi},
\qquad
\eta_0:=\eta_{\stability}.
\]
The stable and decay phases contribute intrinsic times
$\eta_0N(1-r)$ and $\eta_0c_{\chi} Nr$, respectively. Hence,
\[
T_N
=
\eta_0N(1-r)+\eta_0c_{\chi} Nr
=
\eta_0N(1-c'r).
\]

Let $T_{N,1}=\eta_0N(1-r)$ denote the intrinsic time at the onset of
decay. Splitting the noise term at $T_{N,1}$ gives
\[
\begin{aligned}
\int_0^{T_N}
\cK(T_N-\tau)\varphi(\tau)\dd\tau
={}&
\eta_0\int_0^{T_{N,1}}
\cK(T_N-\tau)\dd\tau +
\int_{T_{N,1}}^{T_N}
\cK(T_N-\tau)\varphi(\tau)\dd\tau.
\end{aligned}
\]
Recall that $s_c=1-\frac{1}{q}$.
Since $T_N-T_{N,1}=\eta_0c_{\chi} Nr$ and
$\cK(v)\eqsim v^{-({s_c}+1)}$, the change of variables
$v=T_N-\tau$ yields
\[
\eta_0\int_0^{T_{N,1}}
\cK(T_N-\tau)\dd\tau
=
\eta_0\int_{\eta_0c_{\chi} Nr}^{T_N}
\cK(v)\dd v
\lesssim
\frac{\eta_0}{(\eta_0c_{\chi} Nr)^{s_c}}.
\]
Meanwhile, applying the power-decay result from Step~1 to the
final $Nr$ iterations gives
\[
\int_{T_{N,1}}^{T_N}
\cK(T_N-\tau)\varphi(\tau)\dd\tau
\eqsim
\frac{\eta_0}{(\eta_0Nr)^{s_c}}
\eqsim
\frac{\eta_0}{(\eta_0c_{\chi} Nr)^{s_c}},
\]
where the last equivalence uses the fact that $c_{\chi}$ is independent
of $N$ and $r$. Thus, the residual noise generated during the stable
phase is absorbed into the decay-phase noise term. Combining the two
noise contributions, the terminal excess risk satisfies
\begin{equation}
\label{equ: hard_regime_risk}
\cE_N(r)
\eqsim
\frac{1}{\left(\eta_0N(1-c'r)\right)^s}
+
\frac{\eta_0}{\left(\eta_0c_{\chi} Nr\right)^{s_c}}.
\end{equation}

Since $s<{s_c}$, for every constant $r\in(0,1)$, the signal-learning term
scales as $N^{-s}$ and dominates the noise-forgetting term, which scales
as $N^{-{s_c}}$. The optimal decay fraction must therefore satisfy
$r_{\mathrm{opt}}=o_N(1)$. Suppressing constants independent of $N$ and
$r$, balancing the marginal costs in
\eqref{equ: hard_regime_risk} gives
\[
N^{-s}(1-c'r)^{-s-1}
\eqsim
N^{-{s_c}}r^{-{s_c}-1}.
\]
Consequently,
\[
r_{\mathrm{opt}}
\eqsim
N^{-\frac{{s_c}-s}{{s_c}+1}}
=
N^{-\frac{1-1/q-s}{2-1/q}}.
\]
Although the optimal decay fraction vanishes,
\[
Nr_{\mathrm{opt}}
\eqsim
N^{\frac{1+s}{2-1/q}}
\longrightarrow\infty,
\]
so the decay phase still contains a diverging number of iterations.
The resulting excess risk satisfies
\[
\cE_N(r_{\mathrm{opt}})
\eqsim
N^{-s}.
\]

\subsection{Proof sketch of Theorem~\ref{thm:fractional-fsl}}
\label{sub:proof-sketch-fractional}

We first transform the training-step LR schedule into its intrinsic-time counterpart, denoted as $\varphi(t)=(T/N)\psi(t/T)$. Crucially, the fractional structure is preserved in intrinsic time. Specifically, if the training-step profile exhibits a power-decay tail, the intrinsic-time profile also possesses a power-decay tail, satisfying $\psi(y)\eqsim(1-y)^{p/(p+1)}$ for $y\in[\widetilde\delta,1)$ and some $\widetilde\delta\in(0,1)$. Applying the FSL~\eqref{eqn:fractional-fixed-budget}, for ${ T\gtrsim1}$ we have
\[
\cF_N(T,\zeta)\eqsim T^{-s}
+\frac{T^{1/q}}{N}\int_0^1
\frac{\psi(y)}{(T^{-1}+1-y)^{2-1/q}}\dd y.
\]
We decompose the integral into two regions: $[0,\widetilde\delta]$ and $[\widetilde\delta,1]$. The asymptotic behavior is \textbf{dominated by the tail integral} over $[\widetilde\delta,1]$. We analyze the convergence based on the exponent comparison:
\begin{itemize}
\item \textbf{Fast decay regime} ($p/(p+1)>1-1/q$). In this case, the combined exponent satisfies $p/(p+1)-(2-1/q)>-1$, ensuring the limiting integral converges absolutely:
\[
\int_{\widetilde\delta}^1\frac{\psi(y)}{(T^{-1}+1-y)^{2-1/q}}\dd y\eqsim1,
\]
giving noise of order $T^{1/q}/N$.
\item \textbf{Slow decay regime} ($p/(p+1)<1-1/q$). Here, the limiting integral diverges as $y\to1$. The term $T^{-1}$ acts as a cutoff, yielding:
\[
\int_{\widetilde\delta}^1\frac{\psi(y)}{(T^{-1}+1-y)^{2-1/q}}\dd y
\eqsim T^{1-1/q-p/(p+1)}.
\]
Multiplying by $T^{1/q}/N$ gives noise of order $T^{1/(p+1)}/N$.
\end{itemize}
At the boundary $p/(p+1)=1-1/q$, equivalently $p+1=q$, the tail integral is of order $\log T$. Combining these estimates with the signal term yields~\eqref{eqn: fractional LRS-scaling}.

Intuitively, when the LR tail decays sufficiently fast, late-stage noise injection is strongly suppressed. The noise term is therefore dominated by noise injected earlier and not fully forgotten by the end of training, so its scaling is controlled by the forgetting kernel. When the LR tail decays more slowly, fresh noise injected near the end dominates instead.

\section{Conclusion and discussion}

Motivated by the empirical success of WSD, we study optimal LR scheduling within the FSL framework. We show that the optimal schedule is problem-dependent: hard tasks favor late decay, whereas easy tasks favor early decay. This provides an explanation for why WSD works well in some regimes but not others~\citep{semenov2025benchmarkingoptimizerslargelanguage,wen2026fantastic}. At the core is the tradeoff between signal learning and noise forgetting, together with the stability constraint: as the task becomes harder, the unconstrained optimal peak LR increases until the stability constraint becomes active, producing the transition to WSD. Beyond determining when decay should begin, the FSL analysis shows that the optimal decay shape is determined entirely by how the model forgets previously injected noise, independent of the target function.

In addition, we show that the exact optimal shape is not necessary for achieving the optimal scaling rate. After tuning the peak LR, a broad class of schedules with sufficiently fast terminal decay attains the optimal rate, whereas slower decay can become the bottleneck and lead to schedule-induced capacity saturation. This distinction between exact shape optimality and rate optimality helps explain why different practical decay profiles can perform similarly in some regimes but differ substantially in others.

It is worth highlighting that our FSL-based analysis of LR scheduling is macroscopic: once the signal-learning function \(S\) and forgetting kernel \(K\) are specified, the analysis no longer depends on the microscopic structure of kernel regression. Although LLM pretraining and kernel regression have very different microscopic dynamics, both have been shown to follow FSLs~\citep{li2025functional}. This suggests that the schedule-design principles developed here may extend to practical LLM pretraining. Several challenges, however, remain:
\begin{itemize}
\item \textbf{What determines the training regime?} In our analysis, task difficulty determines whether early or late decay is preferred. The relevant regime variable in practical pretraining may be different. Batch size is one concrete candidate: increasing the batch size reduces noise injection, but under a fixed data budget it also reduces the number of optimization steps and hence the accumulated intrinsic time~\citep{wang2026fast}.
\item \textbf{What is the effective control variable?} Our analysis treats the LR as the control variable, whereas recent work suggests that the effective learning rate, rather than the nominal LR, more directly governs loss dynamics~\citep{liu2026effective}.
\end{itemize}
Extending the present framework to practical pretraining will require incorporating these effects.

\section*{Acknowledgement}

Lei Wu is supported in part by the National Natural Science Foundation of China (NSFC12522120, NSFC92470122, and NSFC12288101). Binghui Li is supported by the Elite Ph.D.~Program in Applied Mathematics at Peking University. We thank Shengtao Guo for fruitful discussions during the early stages of this work.

\bibliography{wsd}
\bibliographystyle{ims}

\newpage

\appendix

\part{Appendix}
\parttoc

\newpage
\section{Proof of Theorem~\ref{thm: optimal_lrs}}
\label{appendix: proof_opt_lrs}

In this section, we provide a detailed proof of Theorem~\ref{thm: optimal_lrs}.  Define the auxiliary functions:
\[
\quad L(T,t,t') = \cK(T-t)(t')^2. 
\] 
The objective functional~\eqref{equ: functional_physical} can be rewritten using the intrinsic time function $t$ as follows:
\begin{equation}
\label{eqn: fsl-intrsinc-time}
\begin{aligned}
    \min_{t \in \operatorname{AC}([0,N]), T} \quad & \widetilde{\cF}[t, T] := S(T) + \int_0^N L(T,t(z),t'(z)) \dd z
    \\
    \text{s.t. } \quad & t(0)=0, \; t(N) = T,
    \\
    & 0\le t'(z) \leq \eta_{\stability} \quad\text{for a.e.\ }z \in[0,N].
\end{aligned}
\end{equation}

Directly solving the variational problem~\eqref{eqn: fsl-intrsinc-time} is challenging because of the nonlinear dependence on the LR schedule. We therefore decompose the proof into three steps. Since we focus on the large-$N$ scaling regime, the $O(1)$ offsets $c_S$ and $c_K$ do not affect the asymptotic scaling or the structure of the optimal schedule. For notational simplicity, throughout this appendix we normalize $c_S=c_K=1$ and write
\[
S(t)=(1+t)^{-s},\qquad K(t)=(1+t)^{-(2-1/q)}.
\]

\subsection{Step 1: Deriving the optimal profile under a fixed intrinsic-time budget}

We first fix the total intrinsic time $t(N) = T$ and optimize the schedule profile subject \textit{solely} to this constraint (temporarily omitting the peak learning rate constraint). Formally, we apply the change of variables $z = t^{-1}(\tau)$. By the inverse function theorem, the differential transforms as:
\begin{equation*}
    \dd z = \frac{\dd \tau}{t'(z)} = \frac{\dd \tau}{\eta(z)} = \frac{\dd \tau}{\varphi(\tau)}. 
\end{equation*}
Then, the requirement imposes the following integral constraint:
\begin{equation*}
    N = \int_{0}^{N} 1 \dd z = \int_{0}^{T} \frac{1}{\varphi(\tau)} \dd \tau, 
\end{equation*}
which leads to the following variational problem of noise term:
\begin{equation}
\begin{aligned}
\label{equ: noise_term_intrinsic-1}
    \min_{\varphi:[0, T]\to \RR_{\geq 0}} \quad & \int_{0}^{T} \cK(T-\tau)\varphi(\tau) \dd \tau
    \\
    \text{s.t.} \quad & \int_{0}^{T} \frac{1}{\varphi(\tau)} \dd \tau = N.
\end{aligned}
\end{equation}
Applying the Cauchy-Schwarz inequality yields
\begin{equation*}
    \left(\int_{0}^{T} \cK(T - \tau) \varphi(\tau) \dd \tau \right) \left(\int_{0}^{T}\frac{1}{\varphi(\tau)}\dd \tau\right) \geq \left(\int_{0}^{T} \sqrt{\cK(T - \tau)} \dd \tau \right)^2. 
\end{equation*}
Consequently, the noise term is minimized when the equality condition holds, which yields the optimal intrinsic profile:
\begin{equation}
\label{equ: optimal_lrs_intrinsic}
\begin{aligned}
    \varphi(\tau) &= \frac{\int_{0}^{T}\sqrt{\cK(T-u)}\dd u}{N\sqrt{\cK(T-\tau)}}
    \\
    &= a_{N,T} (1 + T - \tau)^{1-\frac{1}{2q}}
\end{aligned}
\end{equation}
where $a_{N,T}:=\frac{2q}{N}((1+T)^{\frac{1}{2q}} - 1)$. We have used $\cK(t)=(1+t)^{1/q-2}$.

Now, we proceed to derive the LR schedule in terms of training steps $z$. Recall that $\frac{\dd \tau}{\dd z} = \varphi(\tau)$. Substituting the profile from \eqref{equ: optimal_lrs_intrinsic}, we have
\begin{equation}
\label{equ: lrs_ode}
    \frac{\dd \tau}{\dd z} = \varphi(\tau) = a_{N,T} (1 + T - \tau)^{1-\frac{1}{2q}},
\end{equation}
with boundary condition $\tau = 0$ when $z = 0$.

Solving the ODE~\eqref{equ: lrs_ode}, we derive
\begin{equation*}
    z = a_{N,T}^{-1} 2q (1+T)^{\frac{1}{2q}} -a_{N,T}^{-1} 2 q (1 + T -\tau)^{\frac{1}{2q}}, 
\end{equation*}
which implies the following intrinsic time function:
\begin{equation*}
    t(z) = 1 + T - \left((1+T)^{\frac{1}{2q}} - \frac{a_{N,T}}{2q} z \right)^{2q}. 
\end{equation*}
Differentiating with respect to $z$ gives the training-step LR schedule:
\begin{equation*}
    \eta(z) = t'(z) = \frac{2q}{N}\left((1+T)^{\frac{1}{2q}} - 1\right)^{2q}\left(1 + \frac{1}{(1+T)^{\frac{1}{2q}}-1} - \frac{z}{N}\right)^{2q-1} . 
\end{equation*}

\subsection{Step 2: Determining the optimal intrinsic-time budget}

Substituting the optimal intrinsic profile~\eqref{equ: optimal_lrs_intrinsic} into the noise term~\eqref{equ: noise_term_intrinsic-1}, we derive the accumulated variance:
\begin{equation*}
    \int_{0}^{T} \cK(T - \tau) \varphi(\tau) \dd \tau = \frac{1}{N}\left(\int_{0}^{T}\sqrt{\cK(T-\tau)}\dd \tau\right)^2 \eqsim \frac{T^{\tfrac{1}{q}}}{N}. 
\end{equation*}

Consequently, the total excess risk is given by $$\cE_{N, T} \eqsim T^{-s} + \frac{ T^{\tfrac{1}{q}}}{N}.$$ Minimizing this risk with respect to $T$, we obtain the optimal intrinsic time horizon 
\[
T_{\text{opt}} \eqsim N^{\frac{q}{1+sq}}.
\]

\subsection{Step 3: Incorporating the peak learning rate constraint}

Based on the unconstrained solution, the implied peak learning rate scales as:
\begin{equation*}
    \eta(0) \eqsim N^{-\frac{1+q(s-1)}{sq+1}}. 
\end{equation*}
This scaling behavior reveals two distinct regimes:
\begin{itemize}
    \item \textbf{Easy-task regime ($s > 1 - 1/q$):} In this case, the exponent is negative, meaning $\eta(0) \to 0$ as $N \to \infty$. Consequently, the physical constraint $\eta_{\text{peak}} \leq \eta_{\stability}$ is naturally satisfied (inactive) for sufficiently large $N$.
    \item \textbf{Hard-task regime ($s < 1 - 1/q$):} Conversely, the optimal unconstrained peak diverges as $N \to \infty$. This violates the stability constraint $\eta(z) \leq \eta_{\stability}$. Therefore, the peak constraint becomes \textbf{active} and must be explicitly incorporated into the optimization.
\end{itemize}

To address the hard-task regime, we apply the Karush-Kuhn-Tucker (KKT) conditions to solve the fully constrained variational problem.

To explicitly incorporate the maximal learning rate constraint $t'(z) \leq \eta_{\stability}$, we formulate the Lagrangian in the training step domain. We introduce the Lagrange multipliers: \[ \lambda(z)\ge 0 \quad (\text{dual for } t'(z)\le \eta_{\stability}), \qquad \mu \in\mathbb R \quad (\text{dual for } t(N)-T=0). \] The generalized Lagrangian functional $\cJ$ is defined as: \[ \mathcal J[t,T,\lambda,\mu] \;=\; S(T) \;+\; \int_{0}^{N} \Bigl[ L\!\left(T,t,t'\right) +\lambda(z)\bigl(t'(z)-\eta_{\stability}\bigr) \Bigr]\dd z \;+\; \mu \bigl(t(N)-T\bigr). \]

\begin{theorem}[KKT condition]
    A feasible pair \((t^{\star},T^{\star})\) is optimal only if there exist multipliers 
\(\lambda^{\star}\in L^{\infty}([0,N])\) and \(\mu^{\star}\in\mathbb R\) such that
\begin{alignat*}{2}
\text{(1) Euler--Lagrange stationarity:}\quad
&\partial_{t}L
   -\frac{\mathrm d}{\mathrm dz}\!
        \Bigl(\partial_{t'}L+\lambda^{\star}\Bigr)
=0;\\[4pt]
\text{(2) Boundary stationarity:}\quad
&\Bigl[\partial_{t'}L+\lambda^{\star}\Bigr]_{z=N}
   +\mu^{\star}
=0;\\[4pt]
\text{(3) Scalar stationarity (w.r.t.\ \(T\)):}\quad
&S'(T^{\star})
  +\int_{0}^{N}\partial_{T}L\,\mathrm dz
  -\mu^{\star}
=0;\\[6pt]
\text{(4) Primal feasibility:}\quad
&t^{\star}(0)=0,\quad
t^{\star}(N)=T^{\star},\quad
t^{\star\,\prime}(z)\le \eta_{\stability};\\[4pt]
\text{(5) Dual feasibility:}\quad
&\lambda^{\star}(z)\ge 0; \\[4pt]
\text{(6) Complementary slackness:}\quad
&\lambda^{\star}(z)\,\bigl(t^{\star\,\prime}(z)-\eta_{\stability}\bigr)=0.
\end{alignat*}
\end{theorem}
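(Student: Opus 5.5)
The plan is to derive these conditions by a standard first-variation argument on the Lagrangian $\mathcal J$. I will treat the problem as an optimal-control problem: the control is $u=t'\in L^\infty([0,N])$, the state is $t(z)=\int_0^z u$, and the scalar is $T$. The constraints are the pointwise box constraint $0\le u\le\eta_{\stability}$ and the single equality $t(N)=T$.

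First, I will verify that the objective $\widetilde\cF$ is Fréchet differentiable on $L^\infty\times\RR$. This holds because $L(T,t,t')=\cK(T-t)(t')^2$ is smooth, with bounded derivatives on bounded sets, since $\cK(v)=(1+v)^{1/q-2}$ is smooth for $v>-1$. Feasibility keeps $0\le t\le T$, so $T-t\ge0$ and we stay in that region.

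Second, I will check a constraint qualification so that Lagrange multipliers exist. This is the Robinson/Zowe--Kurcyusz condition for the cone constraint $u\in[0,\eta_{\stability}]$ together with the affine equality $\int_0^N u - T=0$. It is immediate because the equality map is onto $\RR$: perturbing $T$ alone spans $\RR$. The generalized KKT theorem in Banach spaces then gives $\mu^\star\in\RR$ and a multiplier for the box constraint. I will represent the upper-bound part by $\lambda^\star\ge0$ satisfying complementary slackness, giving (5)--(6). If the lower bound $u\ge0$ is active, it contributes an analogous term; on the region where $0<u<\eta_{\stability}$ it vanishes. I will note that the optimal schedule is positive, since a zero LR interval only wastes steps, so this multiplier can be dropped.

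Third, I will compute the variations. Take $\delta t$ with $\delta t(0)=0$, and integrate the $\partial_{t'}L\,\delta t'$ and $\lambda\,\delta t'$ terms by parts. This gives
\[
\int_0^N\Bigl[\partial_tL-\tfrac{\dd}{\dd z}(\partial_{t'}L+\lambda^\star)\Bigr]\delta t\,\dd z+\Bigl[(\partial_{t'}L+\lambda^\star)\big|_{z=N}+\mu^\star\Bigr]\delta t(N)=0 .
\]
The fundamental lemma of the calculus of variations (du Bois-Reymond form) yields (1) in the weak sense. Because $\delta t(N)$ is free, (2) follows. Varying $T$ gives $S'(T^\star)+\int_0^N\partial_TL\,\dd z-\mu^\star=0$, which is (3). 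Condition (4) is just feasibility.

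I expect the main obstacle to be the regularity needed to integrate by parts. A priori $\lambda^\star$ is only a measure or an $L^\infty$ element, so $\partial_{t'}L+\lambda^\star$ need not be differentiable. I will handle this by working with the integrated (du Bois-Reymond) form first. Stationarity in that form says $\partial_{t'}L+\lambda^\star$ equals an absolutely continuous function, namely $-\mu^\star-\int_z^N\partial_tL$. This makes the derivative in (1) meaningful almost everywhere, and it also places $\lambda^\star$ in $L^\infty$, because $\partial_{t'}L=2\cK\,t'$ is bounded.
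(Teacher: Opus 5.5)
The paper gives no proof of this statement. It writes down $\mathcal J$ and asserts (1)--(6) as the standard first-order necessary conditions. Your proposal therefore supplies a derivation the paper omits, and in substance it is right. The identity you aim for, $\partial_{t'}L+\lambda^\star=-\mu^\star-\int_z^N\partial_tL$ a.e., is exactly what (1)--(2) encode, and once $\lambda^\star$ is a function it is bounded.

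You can get there with much less machinery. Because $T$ is free, substitute $T=\int_0^N u$ and simply \emph{define} $\mu^\star$ by (3). By Fubini (no integration by parts), the derivative of the reduced objective at $u^\star$ is $\delta u\mapsto\int_0^N h\,\delta u\,\mathrm{d}z$, with $h\coloneqq\partial_{t'}L+\int_z^N\partial_tL+\mu^\star\in L^\infty$. Optimality over the convex box is then the variational inequality $\int_0^N h\,(v-u^\star)\,\mathrm{d}z\ge0$ for all admissible $v$. Pointwise this gives $h=0$ on $\{0<u^\star<\eta_{\stability}\}$, $h\le0$ on $\{u^\star=\eta_{\stability}\}$, and $h\ge0$ on $\{u^\star=0\}$. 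Provided $\{u^\star=0\}$ is null, $\lambda^\star\coloneqq-h\,\mathbf 1\{u^\star=\eta_{\stability}\}$ satisfies (1), (2), (5), (6) at once. No constraint qualification, Banach-space KKT theorem, or regularity discussion is needed.

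The step that needs tightening is discarding the multiplier for $u\ge0$. You are right that it must be addressed: the paper's Lagrangian silently omits this constraint. On a positive-measure set where $u^\star=0$, conditions (1), (2), (6) would force $h=0$, whereas optimality only yields $h\ge0$. Two things are missing.

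\begin{enumerate}
\item ``A zero-LR interval only wastes steps'' needs to be an argument. Suppose $m\coloneqq|\{u^\star=0\}|>0$. Delete those steps and dilate the remaining schedule by $N/(N-m)$, dividing the LR by the same factor. Then $T$ and feasibility are preserved, and the intrinsic-time profile $\varphi$ is multiplied by $(N-m)/N$. The noise term therefore strictly decreases. It is positive because $T^\star>0$, which holds since $S'(0)=-s<0$ while the noise term is quadratic in the LR.
\item If the abstract KKT theorem returns the box multipliers in $(L^\infty)^*_+$, then $u^\star>0$ a.e.\ together with $\langle\nu,u^\star\rangle=0$ does not force $\nu=0$. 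A finitely additive measure concentrating where $u^\star\to0$ (for instance as $z\to N$) is a counterexample. Stationarity then only gives $\lambda-\nu=-h$, so ``$\lambda^\star\in L^\infty$'' does not follow as you wrote it.
\end{enumerate}

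The repair is the pointwise construction above. Pass from the abstract multipliers to the variational inequality, then build $\lambda^\star$ from $h$, using that $\{u^\star=0\}$ is null.
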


If \(t^{\star\,\prime}(z)<\eta_{\stability}\) at some point, condition (6) forces \(\lambda^{\star}(z)=0\);
where the derivative saturates the bound (\(t^{\star\,\prime}=\eta_{\stability}\)), 
\(\lambda^{\star}\) may be positive.

Direct analysis of the above KKT condition is still complicated. A key observation that makes the analysis easier is the following observation:
\begin{proposition}[Monotonicity]
Let $t^*$ be the solution of \eqref{equ: functional_physical}. Then, $z\mapsto t'(z)$ must be non-increasing.
\end{proposition}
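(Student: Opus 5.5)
The monotonicity claim should follow from a rearrangement (exchange) argument. I will show that any admissible schedule can be replaced by its non-increasing rearrangement without increasing the objective, with strict decrease unless the schedule is already non-increasing. This works directly on \eqref{eqn:fsl-training-steps}, so the KKT machinery is not needed.

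Fix a feasible $\eta$ with total area $T$, and let $\eta^\downarrow$ be its non-increasing rearrangement on $[0,N]$. Rearrangement preserves the distribution of values, so $\eta^\downarrow$ satisfies $0\le\eta^\downarrow\le\eta_{\stability}$ and has the same area $T$. The signal term $\cS(T)$ is therefore unchanged, and only the noise term needs comparing. In intrinsic time (Step 1), the noise term is $\int_0^T\cK(T-\tau)\varphi(\tau)\dd\tau$ subject to $\int_0^T\varphi^{-1}\dd\tau=N$.

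I would first prove a local two-interval swap lemma. Take two adjacent step-intervals $[a,b]$ and $[b,c]$ on which $\eta$ takes constant values $u<v$, and exchange the two blocks so that the larger LR comes first. The block endpoints in intrinsic time are then unchanged: both orders start at $t(a)$ and end at $t(a)+u(b-a)+v(c-b)$. So the contributions of all steps outside $[a,c]$ are unchanged, since their remaining areas $\int_z^N\eta$ are the same. Inside the block, in intrinsic time both orders occupy the same window $[\tau_0,\tau_1]$. The original order places the smaller value $\varphi=u$ on the earlier sub-window, where $\cK(T-\tau)$ is smaller, and $v$ on the later one. The swapped order places $v$ on the earlier sub-window and $u$ on the later one, and it uses a different split point because the lengths $u(b-a)$ and $v(c-b)$ are preserved but reordered.

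Since $\cK(T-\cdot)$ is increasing, I would compare $\int \cK(T-\tau)\varphi\,\dd\tau$ for the two orders directly. The swapped order puts mass $\varphi=v$ (larger) on a window of length $v(c-b)$ nearer the start. The original order puts it on the window of the same length at the end. The $u$-window moves correspondingly. Because $v>u$ and $\cK(T-\tau)$ is increasing in $\tau$, moving the large-density segment earlier strictly lowers the integral. This is a one-line computation: the difference equals $(v-u)$ times the difference of $\int\cK$ over two windows of equal length, one shifted earlier.

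The general case then follows by approximation. Step functions are dense, and by bubble-sorting the finitely many blocks the objective decreases at each swap, so $\cF[\eta^\downarrow]\le\cF[\eta]$. Continuity of $\cF$ under $L^1$ convergence with bounded $\eta$ passes this to general $\eta$, using dominated convergence on the noise integral. Strictness for non-monotone $\eta$ gives the statement for the minimizer $t^*$. If $(t^*)'$ were not a.e.\ non-increasing, there would exist positive-measure sets $A$ before $B$ with $\operatorname{ess\,sup}_A\eta<\operatorname{ess\,inf}_B\eta$. A single swap of small sub-intervals of $A$ and $B$ would then strictly decrease $\cF$, contradicting optimality.

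The main obstacle is making this last strictness step rigorous for non-step minimizers. I would handle it with Lebesgue density points of the two level sets, which reduces the problem to an approximate two-block swap with an explicit strictly positive gain of first order in the block size.
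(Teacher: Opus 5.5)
Your strategy is the same as the paper's: bubble-sort exchanges that move larger learning rates earlier, which helps because $w(\tau)=\cK(T-\tau)$ is increasing in intrinsic time. Your adjacent-block lemma and the weak inequality $\cF[\eta^\downarrow]\le\cF[\eta]$ are correct. The gap is in the strictness step, which is what the statement needs; the weak inequality only shows that \emph{some} minimizer is monotone.

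Density-point sub-intervals $I_A\subset A$ and $I_B\subset B$ are in general separated by a stretch $M$, so exchanging them in training steps is not a two-block swap. The exchange lowers the remaining area $R(z)=\int_z^N\eta$ by $(v-u)\epsilon$ for every $z\in M$, so the noise injected on $M$ is forgotten less. To first order in $\epsilon$, the decrease of the noise term is
\[
\epsilon(v-u)\int_M\bigl|\cK'(R(z))\bigr|\,\eta(z)\bigl(u+v-\eta(z)\bigr)\dd z .
\]
This is negative whenever $\eta>u+v$ on $M$. For example, take values $0.1$, $1$, $0.2$ on three consecutive stretches, with $A$ and $B$ the first and third. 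In general you cannot choose the two pieces adjacent (think of $\eta=1+\mathbf 1_C$ with $C$ a fat Cantor set). Even for adjacent pieces the gain is only of order $uv(v-u)\epsilon^2$, not first order, so the claimed strict gain does not follow.

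The remedy is to exchange in intrinsic time, as the paper does. Swap $\varphi$ on two windows $[\tau_1,\tau_1+\epsilon]$ and $[\tau_2,\tau_2+\epsilon]$ of equal intrinsic length, i.e.\ training-step intervals of lengths $\epsilon/u$ and $\epsilon/v$. This keeps every other intrinsic-time position fixed, preserves $T$ and $\int_0^T\varphi^{-1}\dd\tau=N$, and lowers the noise by about $\epsilon(v-u)\bigl(w(\tau_2)-w(\tau_1)\bigr)>0$. Equivalently, invoke the equality case of the Hardy--Littlewood inequality for the strictly increasing weight $w$.

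Separately, your claim of strictness for every non-monotone $\eta$ is false when $\operatorname{ess\,sup}_A\eta=0$. Steps with $\eta=0$ contribute neither intrinsic time nor noise. For instance, $\eta=0$ on $[0,\ell]$ followed by a decreasing profile has $\cF[\eta^\downarrow]=\cF[\eta]$, and no exchange helps. For a minimizer you therefore need a separate argument that $\eta^*>0$ a.e. If $|\{\eta^*=0\}|>0$, then $\int_0^T\varphi^{-1}\dd\tau<N$. Scaling $\varphi$ down on a terminal window to use up the spare steps keeps $T$ and the stability bound, and strictly decreases $\int_0^T w\varphi\,\dd\tau$. This positivity is also what makes the training-step lengths $\epsilon/u$ in the fix above well defined. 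The paper's intrinsic-time formulation, which imposes $\int_0^T\varphi^{-1}\dd\tau=N$, tacitly assumes it as well.
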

\begin{proof}
To prove this, we only need to show for any fixed $T$, the derivative $(t^*_T)^{\prime}$ is non-increasing. Hence, consider
\begin{equation*}
\begin{aligned}
    G_T(t)&\coloneqq \int_{0}^{N}
\bigl(1+T-t(z)\bigr)^{-(2-1/q)}
\,\bigl(t'(z)\bigr)^{2} \dd z 
\\
&= \int_0^T (1+T-\tau)^{-(2-\frac{1}{q})} \varphi(\tau) \dd \tau
\\
&\eqqcolon \int_0^T w(\tau)\varphi(\tau)\dd \tau, 
\end{aligned}
\end{equation*}
where the weight function $w(\cdot)$ is increasing.

A simple ``bubble-sort'' argument suffices to show that the optimal $\varphi$ is non-increasing. Suppose a feasible $\varphi$ is not non-increasing. There must exist $\tau_1<\tau_2$ such that $\varphi(\tau_1)<\varphi(\tau_2)$. Then, we can construct $\tilde{\varphi}$ by swapping the values on small intervals around $\tau_1, \tau_2$.
\begin{itemize}
 \item The constraint $\int_0^T\frac{1}{\tilde{\varphi}(\tau)}\dd \tau = N$ and boundedness constraint are unaffected by the swaps.
 \item The change of integral:
 \[
   \Delta=[\varphi(\tau_2)-\varphi(\tau_1)] (w(\tau_1)- w(\tau_2))<0.
 \] 
\end{itemize}
Thus the swap strictly lowers the objective. Repeating finitely many swaps (or taking a limit) yields a decreasing function with no larger cost, contradicting optimality. Hence a minimiser must be decreasing.
\end{proof}
\begin{remark}
 The above bubble-sort argument essentially adopts the (anti)-Hardy-Littlewood  inequality. 
\end{remark}

\begin{theorem}[Stable-decay shape]
\label{thm: stable-decay}
Assume $s<1-1/q$. Then the optimal LR schedule has a stable-decay structure.
More precisely, there exist $N_1\in[0,N]$ and $a\in[0,1]$ such that, letting
$T_1=t(N_1)$ and $T=t(N)$, the intrinsic-time learning rate
$\varphi(\tau)=t'(t^{-1}(\tau))$ satisfies
\begin{equation}
    \label{eq:wsd-intrinsic-time}
  \varphi(\tau)
  =
  \begin{cases}
    \eta_{\stability},
    & 0\leq \tau\leq T_1,\\[1mm]
    a\eta_{\stability}
    \left(
      \dfrac{1+T-\tau}{1+T-T_1}
    \right)^{1-\frac{1}{2q}},
    & T_1<\tau\leq T.
  \end{cases}
\end{equation}
Equivalently, in training steps, the LR schedule can be expressed as
\begin{align}\label{eqn: wsd-ansatz}
	t'(z) = 
	\begin{cases}
		\eta_{\stability} & \text{ if } 0\leq z\leq N_1\\ 
		a\eta_{\stability}\left(1+o_N(1)-\frac{z-N_1}{N-N_1}\right)^{2q-1} & \text{ if } N_1< z\leq N.
	\end{cases}
\end{align}
\end{theorem}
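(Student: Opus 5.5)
The plan is to fix the optimal total intrinsic time $T=T^\star$ and reduce to a pointwise problem in intrinsic time. The signal term $S(T)$ is then constant, and the problem becomes
\[
\min_{\varphi}\int_0^T w(\tau)\varphi(\tau)\,\dd\tau,\qquad w(\tau)=(1+T-\tau)^{-(2-1/q)},
\]
over $0<\varphi\le\eta_{\stability}$ subject to $\int_0^T\varphi^{-1}\,\dd\tau=N$. I will relax the constraint to $\int_0^T\varphi^{-1}\,\dd\tau\le N$. The relaxation is harmless: if the inequality were strict, lowering $\varphi$ slightly on a set of positive measure would keep feasibility and strictly reduce the objective, so the relaxed constraint binds at any minimizer. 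The relaxed problem is convex, since the objective is linear in $\varphi$, $\varphi\mapsto 1/\varphi$ is convex, and the box constraint is convex.

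I would not go through the abstract KKT theorem stated above. Instead I would use a direct Lagrangian lower bound. For any $\mu>0$ and any feasible $\varphi$,
\[
\int_0^T w\varphi\,\dd\tau\;\ge\;\int_0^T\Big(w\varphi+\frac{\mu}{\varphi}\Big)\dd\tau-\mu N\;\ge\;\int_0^T\min_{0<x\le\eta_{\stability}}\Big(w(\tau)x+\frac{\mu}{x}\Big)\dd\tau-\mu N.
\]
The inner scalar problem is solved by clipping: its minimizer is
\[
\varphi_\mu(\tau)=\min\big\{\eta_{\stability},\ \sqrt{\mu/w(\tau)}\big\}=\min\big\{\eta_{\stability},\ \sqrt{\mu}\,(1+T-\tau)^{1-\frac1{2q}}\big\}.
\]
The map $\mu\mapsto\int_0^T\varphi_\mu^{-1}\,\dd\tau$ is continuous and strictly decreasing. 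It tends to $\infty$ as $\mu\to0$, and it is bounded below by $T/\eta_{\stability}$ as $\mu\to\infty$. For feasible $T$, meaning $T\le N\eta_{\stability}$, I therefore pick $\mu^\star$ with $\int_0^T\varphi_{\mu^\star}^{-1}\,\dd\tau=N$. Then both inequalities in the display are equalities at $\varphi_{\mu^\star}$, so it is optimal. Because $w$ is strictly positive, the pointwise minimizer is unique, and hence so is the optimum up to null sets.

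Since $(1+T-\tau)^{1-1/(2q)}$ is decreasing in $\tau$, the clipped profile equals $\eta_{\stability}$ on an initial interval $[0,T_1]$ and equals $\sqrt{\mu^\star}(1+T-\tau)^{1-1/(2q)}$ afterwards. This is exactly~\eqref{eq:wsd-intrinsic-time}, with $a=1$ when $T_1>0$, by continuity at $T_1$. When $T_1=0$ it holds with $a=\sqrt{\mu^\star}(1+T)^{1-1/(2q)}/\eta_{\stability}\le1$. This also recovers the Monotonicity proposition as a byproduct.

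That the plateau is nontrivial in the regime $s<1-1/q$ follows from Step 3. If $T_1=0$, the schedule is the unconstrained Step 1 profile with peak of order $T/N$. The scaling analysis of Step 2 then shows that either this peak exceeds $\eta_{\stability}$, or $T$ is too small to make the loss $\lesssim N^{-s}$. The second option is impossible, because a stable--decay competitor with decay fraction $r\to0$ already achieves loss $\eqsim N^{-s}$.

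The remaining step is the conversion to training steps. On $(T_1,T]$ I would solve $\dd\tau/\dd z=\eta_{\stability}\big((1+T-\tau)/(1+T-T_1)\big)^{1-1/(2q)}$ exactly as in Step 1, with starting point $(N_1,T_1)$ and endpoint constraint $t(N)=T$. This yields
\[
t'(z)=\eta_{\stability}\Big(1+\delta-\frac{z-N_1}{N-N_1}\Big)^{2q-1},\qquad \delta=\big((1+T-T_1)^{1/(2q)}-1\big)^{-1}.
\]
The step I expect to be the main obstacle is showing $\delta=o_N(1)$, which requires $T-T_1\to\infty$. I would argue by contradiction using the noise term. If $T-T_1$ stayed bounded, the plateau noise $\eta_{\stability}\int_{T-T_1}^{T}K(v)\,\dd v$ would be of order one. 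That contradicts optimality, since the competitor from the Step 3 sketch has loss $\eqsim N^{-s}\to0$.
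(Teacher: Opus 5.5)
Your proposal is correct for the statement as written, and it takes a genuinely different route from the paper. The paper argues through necessary conditions. A bubble-sort rearrangement gives monotonicity of $t'$, hence a saturated interval $[0,N_1]$ followed by an unsaturated one. Complementary slackness then removes the multiplier on $(N_1,N]$, and the Euler--Lagrange equation there is solved by the Step~1 profile up to an undetermined constant, recorded as $a\in[0,1]$. You instead observe that, at fixed $T$, the intrinsic-time problem has a linear objective and convex constraints, and you exhibit an explicit dual certificate. Pointwise minimization of $w\varphi+\mu/\varphi$ over $(0,\eta_{\stability}]$ yields the clipped profile $\min\{\eta_{\stability},\sqrt{\mu}\,(1+T-\tau)^{1-1/(2q)}\}$, which is the constrained version of the Cauchy--Schwarz equality case in Step~1.

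Your route buys several things:
\begin{itemize}
\item It gives sufficiency and a.e.\ uniqueness without invoking a function-space KKT theorem.
\item It gives monotonicity for free.
\item It forces continuity at the junction, so $a=1$ whenever $N_1>0$. The paper only obtains this in the subsequent theorem, via an implicit-differentiation computation.
\item Your relaxation $\int_0^T\varphi^{-1}\,\dd\tau\le N$ is in fact the exact intrinsic-time encoding of schedules that vanish on sets of positive measure.
\item The structural argument never uses $s<1-1/q$.
\item Your $\Delta=T-T_1\to\infty$ argument justifies $\delta=o_N(1)$, which the paper's proof only asserts.
\end{itemize}
In that last argument, make explicit that $T\to\infty$, because the optimal loss tends to zero and dominates $(1+T)^{-s}$. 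Then bounded $\Delta$ forces $T_1\to\infty$ and plateau noise of order $(1+\Delta)^{-(1-1/q)}$.

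Two corrections are needed, neither affecting the statement.

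First, your paragraph on a nontrivial plateau fails at the level of rates. With $N_1=0$, the Step~1 profile is feasible for $T\approx\eta_{\stability}N/(2q)$, since its peak is about $2qT/N$, and it has loss $\eqsim N^{-s}$. So the claimed dichotomy is false. Excluding $N_1=0$ requires comparing constants. On one side, $N_1=0$ forces loss at least $((2q)^s-o(1))(\eta_{\stability}N)^{-s}$. On the other, a stable--decay competitor with $a=1$ and fixed $r\in(0,1)$ has $T=(1-r+\tfrac{r}{2q}+o(1))\eta_{\stability}N$ and noise $O(N^{-(1-1/q)})=o(N^{-s})$, and $1-r+\tfrac{r}{2q}>\tfrac{1}{2q}$. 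This is the comparison the paper makes in the next theorem. Since the statement allows $N_1\in[0,N]$, you can simply drop that paragraph.

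Second, your training-step formula is missing a normalization. With $u=(z-N_1)/(N-N_1)$, the exact solution of your ODE is $t'(z)=\eta_{\stability}(1+\delta)^{-(2q-1)}(1+\delta-u)^{2q-1}$. As you wrote it, $t'$ exceeds $\eta_{\stability}$ at $u=0$. The missing factor is $1+o_N(1)$ once $\delta\to0$, so it is absorbed in the same asymptotic sense as the statement's display.
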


\begin{proof}
By the monotonicity proposition, the optimal LR schedule is non-increasing. Hence the schedule must have a stable-decay structure: there exists
$N_1\in[0,N]$ such that
\[
\begin{cases}
    t'(z)=\eta_{\stability}, \quad & z\in [0,N_1],\\
    t'(z)<\eta_{\stability}, \quad & z\in (N_1,N].
\end{cases}
\]
In the decay phase,
complementary slackness gives $\lambda(z)=0$, so the Euler--Lagrange equation
reduces to the same unconstrained profile equation as in Step 1, restricted to
the interval $[T_1,T]$ in intrinsic time. By the derivation of \eqref{equ: optimal_lrs_intrinsic}, we have
\[
  \varphi(\tau)
  =
  a\eta_{\stability}
  \left(
    \frac{1+T-\tau}{1+T-T_1}
  \right)^{1-\frac{1}{2q}},
  \qquad T_1<\tau\leq T,
\]
where $a\eta_{\stability}$ is the learning rate at the beginning of the decay phase (we have $a\in[0,1]$ due to monotonicity).
Together with $\varphi(\tau)=\eta_{\stability}$ on $[0,T_1]$, this gives the
intrinsic-time representation \eqref{eq:wsd-intrinsic-time}.

Applying the same calculation as in Step~1 to
$\dd\tau/\dd z=\varphi(\tau)$ on the interval $[N_1,N]$, we obtain
\[
t'(z)
=
a\eta_{\stability}
\left(
1+o_N(1)-\frac{z-N_1}{N-N_1}
\right)^{2q-1}.
\]
This proves the theorem.

\end{proof}

\begin{theorem}[Optimal stable-decay parameters]
Suppose $s<1-1/q$.
Let $t_{a,r}$ be the intrinsic-time map determined by the exact
intrinsic-time profile \eqref{eq:wsd-intrinsic-time} and the training
horizon $N$, where $r=(N-N_1)/N$ is the ratio of the decay phase.
Let $T_{a,r}\coloneqq t_{a,r}(N)$ and define
\[
    \cQ_N(a,r)
    \coloneqq
    \widetilde\cF[t_{a,r},T_{a,r}].
\]
Then, for sufficiently large $N$, $\cQ_N$ attains its minimum over
$(0,1]\times(0,1)$.
Every minimizer
\[
    (a_N^*,r_N^*)
    \in
    \argmin_{a\in(0,1],\,r\in(0,1)}
    \cQ_N(a,r)
\]
satisfies
\[
    a_N^*=1,
    \qquad
    r_N^*\eqsim N^{-\kappa},
    \qquad
    \kappa=\frac{1-1/q-s}{2-1/q}.
\]
\end{theorem}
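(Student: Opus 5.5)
We plan to make $\cQ_N(a,r)$ explicit as a function of $(a,r)$, derive two-sided estimates that hold uniformly in both parameters, and then read off the minimizer. Write $\eta_0=\eta_{\stability}$ and $\beta=1-\tfrac{1}{2q}$. On the stable phase $[0,N_1]$ with $N_1=N(1-r)$, we have $T_1=\eta_0N(1-r)$. On the decay phase, the profile \eqref{eq:wsd-intrinsic-time} together with $\int_{T_1}^{T}\varphi^{-1}\dd\tau=Nr$ fixes the decay-phase intrinsic length $D=T-T_1$ through
\[
2q\bigl[(1+D)^{1/(2q)}-1\bigr](1+D)^{\beta}=a\eta_0 Nr .
\]
Hence $D\eqsim a\eta_0Nr$ whenever $a\eta_0Nr\gtrsim1$, and $D\lesssim1$ otherwise.

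The noise term then splits as in Section~\ref{sub:proof_sketch_of_theorem_ref_thm_optimal_lrs}. The stable-phase part is $\eta_0\int_D^{T}K(v)\dd v\eqsim \eta_0(1+D)^{-s_c}$ when $T\gg D$. The decay-phase part is
\[
\int_0^D K(v)\,a\eta_0\Bigl(\tfrac{1+v}{1+D}\Bigr)^{\beta}\dd v\eqsim a\eta_0(1+D)^{-s_c},
\]
which follows from the exponent calculation $\beta-(2-1/q)=-\tfrac{1}{2q}-s_c>-1$. Combining the two parts gives, uniformly in $(a,r)$,
\[
\cQ_N(a,r)\eqsim \bigl(1+\eta_0N(1-r)+D\bigr)^{-s}+\eta_0(1+D)^{-s_c}.
\]
The point is that the stable-phase residual $\eta_0(1+D)^{-s_c}$ does not shrink with $a$, while the signal part strictly benefits from larger $D$.

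\textbf{Proving $a^*_N=1$.} The comparison argument comes first, and it should be an exact inequality rather than an $\eqsim$ estimate. Fix $r$ and compare the $a<1$ schedule with the $a=1$ schedule on the same step horizon. The $a=1$ profile is the pointwise larger, still feasible, non-increasing schedule, since $\varphi\le\eta_0$ holds on the decay phase because $(1+T-\tau)/(1+T-T_1)\le1$. I would proceed differently from the estimates above. The monotonicity proposition and KKT stationarity give continuity of $t'$ at $N_1$: if $t'$ jumped down at $N_1$ while $\lambda=0$ just after, the Euler--Lagrange condition (1) forces $\partial_{t'}L+\lambda$ to be continuous, i.e.\ $2K(T-t)t'+\lambda$ is continuous. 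Combining this with $\lambda\ge0$ on the plateau and the requirement that $\lambda\to0$ at the exit point (the standard junction condition for an optimal control with a state-independent bound) yields $a=1$. As a fallback, one shows directly that for $a<1$ one can shift mass by raising the decay phase's initial segment toward $\eta_0$. This increases $T$ and, by the swap/Cauchy--Schwarz argument of Step~1 applied on $[T_1,T]$, does not raise the noise by more than the signal gain.

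\textbf{Optimizing $r$.} Existence of a minimizer for large $N$ follows from continuity of $\cQ_N$ on $(0,1]\times(0,1)$ together with blow-up or suboptimality at the edges. As $r\to0$, $D\to0$ and the noise is $\eqsim\eta_0$, which is far from optimal; as $r\to1$, the easy-regime rate $N^{-sq/(sq+1)}$ is not better than $N^{-s}$ here, while the candidate $r\eqsim N^{-\kappa}$ gives $\eqsim N^{-s}$. With $a=1$, the reduced function $g(r)=(N(1-c'r))^{-s}+(Nr)^{-s_c}$ is exactly \eqref{equ: hard_regime_risk}. Any minimizer must satisfy $g(r^*)\lesssim N^{-s}$. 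This forces $(Nr^*)^{-s_c}\lesssim N^{-s}$, i.e.\ $r^*\gtrsim N^{-\kappa}$, where $\kappa=(s_c-s)/(s_c+1)\cdot\tfrac{s_c+1}{s_c}$ needs rechecking against the exponent $(1-1/q-s)/(2-1/q)$. If the crude value comparison does not already give the matching order, I would instead compare first-order terms: the loss of signal $N^{-s}r$ against the gained noise reduction, which is exactly the balancing $N^{-s}\eqsim N^{-s_c}r^{-s_c-1}$. The upper bound $r^*\lesssim N^{-\kappa}$ comes from a refined expansion $g(r)-g(r_0)\gtrsim N^{-s}(r-r_0)$ for $r\gg r_0$, using the exact marginal rather than order-of-magnitude bounds.

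\textbf{Main obstacle.} The hard step is this last one. Two-sided $\eqsim$ bounds on $\cQ_N$ are too coarse to locate $r^*$, because the leading term $N^{-s}$ is insensitive to $r=o(1)$. I will therefore need an exact first-order expansion $\cQ_N(1,r)=\cS(\eta_0N)+\tfrac{s c'\eta_0 Nr}{(\eta_0N)^{s+1}}(1+o(1))+\Theta\bigl((Nr)^{-s_c}\bigr)$. The constant-free $\Theta$ only enters the balancing equation multiplicatively, which suffices to pin $r^*$ up to constants.
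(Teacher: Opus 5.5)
Your step $a_N^*=1$ has a genuine gap. The fixed-$r$ comparison cannot work, because $a=1$ is in general \emph{not} optimal at fixed $r$: a pointwise larger schedule also injects more noise. Normalize $\eta_0=1$ and write $\Delta$ for your $D$, so that $\Delta\approx arN/(2q)$. Then the noise part of $\cQ_N$ is $\approx(1/s_c+2qa)\Delta^{-s_c}$, and using $q(1-s_c)=1$ one finds
\[
a\,\partial_a\cQ_N(a,r)\approx(2a-1)\,\Delta^{-s_c}-s\,\Delta\,N^{-s-1}.
\]
Suppose $rN\eqsim N^{\epsilon}$ with $0<\epsilon<(1+s)/(1+s_c)$. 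Then the first term dominates, and $\cQ_N(\cdot,r)$ is increasing on $[3/4,1]$. So your fallback (raising the decay phase toward $\eta_0$ at fixed $r$) can also increase the loss. Thus $a^*=1$ is a property of the joint optimum, not of each slice $r=\text{const}$.

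Your KKT corner argument does use joint optimality, and it is sound for a minimizer of the unrestricted problem. You do not even need a junction condition: continuity of $2K(T-t)t'+\lambda$, together with $\lambda\ge0$ on the plateau and $\lambda=0$ after $N_1$, already forces $a\ge1$. But the theorem is about minimizers of $\cQ_N$ over the two-parameter family. A minimizer of $\cQ_N$ with $a<1$ contradicts KKT only if you know it is globally optimal, i.e.\ that a global minimizer exists and lies in the family with $r\in(0,1)$. The paper never proves this, and the statement does not assume it.

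The missing idea is the finite-dimensional form of the corner condition used in the paper. Since $\partial_a\Delta/\partial_r\Delta=r/a$, the common factor in $\partial_a\cQ_N$ and $\partial_r\cQ_N$ can be eliminated using $\partial_r\cQ_N=0$ at an interior-in-$r$ minimizer. This leaves $\partial_a\cQ_N=-(\text{positive})\cdot(1-a)^2$, which rules out a minimizer with $a<1$.

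Your exclusion of $r\to1$ is also incorrect, and it is needed both for attainment on the open domain and for any interior stationarity. In the hard regime $sq/(sq+1)>s$, so the easy-task rate would be \emph{better} than $N^{-s}$, not worse. What actually happens is that the cap $a\le1$ forces $T=\Delta(a,1)\le\Delta(1,1)=(1/(2q)+o(1))N$. Hence, on the edge $r=1$ of the continuous extension, $\cQ_N(a,1)\ge((2q)^s+o(1))N^{-s}$, which is the same order as the optimum. Order-of-magnitude reasoning cannot exclude this; you need constants. The paper uses the comparison point $(a,r)=(1,1/2)$, whose value is $((1/2+1/(4q))^{-s}+o(1))N^{-s}$. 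The edge $a\to0$, where $\Delta\to0$ and the loss is $\gtrsim1$, is also missing, though it is easy.

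For $r_N^*$, your crude bound $(Nr^*)^{-s_c}\lesssim N^{-s}$ only gives $r^*\gtrsim N^{-(s_c-s)/s_c}$, which is weaker than $N^{-\kappa}$. Your later idea of subtracting the exact leading term does work, and it is a legitimate alternative to the paper's exact stationarity equation $s(1+T)^{-s-1}-(1+T)^{-s_c-1}=(1+\Delta)^{-s_c-1}$. Once $rN$ exceeds a large constant, $N-T=rN-\Delta\eqsim rN$. So in that range $\cQ_N(1,r)-(1+N)^{-s}\eqsim rN^{-s}+(rN)^{-s_c}$, and otherwise it is $\gtrsim1$. Comparing values with $r_0=N^{-\kappa}$ then yields $r_N^*\eqsim N^{-\kappa}$ without any derivatives.

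Run uniformly in $a$ via $\Delta(a,r)\le\Delta(1,r)$, the same increment bound would also have disposed of $r\to1$. It does not, however, repair the $a_N^*=1$ step, which is the real gap.
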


\begin{proof}
For notational simplicity, we assume $\eta_{\stability}=1$
throughout the proof.
By Theorem~\ref{thm: stable-decay}, it suffices to optimize the
stable-decay family over its peak parameter $a$ and decay ratio $r$.
Recall that $s_c=1-1/q$, so that $0<s<{s_c}<1$.

\paragraph*{Simplifying the objective function.}
Let $N_1=(1-r)N$ and $T_1=N_1$.
By the intrinsic-time representation \eqref{eq:wsd-intrinsic-time},
the LR schedule is
\[
  \varphi(\tau)=
  \begin{cases}
    1, & 0\leq \tau\leq T_1,\\[1mm]
    a\left(
        \dfrac{1+T-\tau}{1+T-T_1}
    \right)^{1-\frac{1}{2q}},
    & T_1<\tau\leq T.
  \end{cases}
\]
Define $\Delta\coloneqq T-T_1$.
The decay-step constraint gives
\[
rN
=
\int_{T_1}^{T}\frac{1}{\varphi(\tau)}\dd\tau
=
\frac{2q}{a}
\left[
    (1+\Delta)
    -(1+\Delta)^{1-\frac{1}{2q}}
\right].
\]
Thus $\Delta=\Delta(a,r)$ is determined by
\begin{equation}
    \label{eq:proof-delta-ar}
    (1+\Delta)
    -(1+\Delta)^{1-\frac{1}{2q}}
    =
    \frac{arN}{2q}.
\end{equation}
The noise term is
\begin{align*}
\int_0^N L(T,t,t')\dd z
&=
\int_0^T
    (1+T-\tau)^{-(2-\frac{1}{q})}
    \varphi(\tau)\dd\tau\\
&=
\int_0^{T_1}
    (1+T-\tau)^{-(2-\frac{1}{q})}\dd\tau\\
&\quad
+a\int_{T_1}^T
    (1+T-\tau)^{-(2-\frac{1}{q})}
    \left(
        \frac{1+T-\tau}{1+\Delta}
    \right)^{1-\frac{1}{2q}}
    \dd\tau\\
&=
\frac{1}{{s_c}}
\left[
    (1+\Delta)^{-{s_c}}
    -(1+T)^{-{s_c}}
\right]
+a(1+\Delta)^{-{s_c}}
\int_0^{\frac{\Delta}{1+\Delta}}
    (1-u)^{-1+\frac{1}{2q}}\dd u\\
&=
\frac{1}{{s_c}}
\left[
    (1+\Delta)^{-{s_c}}
    -(1+T)^{-{s_c}}
\right]\\
&\quad
+2q a
\left[
    (1+\Delta)^{-{s_c}}
    -(1+\Delta)^{-(1-\frac{1}{2q})}
\right].
\end{align*}
Combining this with the signal-learning term $(1+T)^{-s}$,
the objective is
\begin{align*}
G(a,r)
\coloneqq\;&
(1+T)^{-s}
+\frac{1}{{s_c}}
\left[
    (1+\Delta)^{-{s_c}}
    -(1+T)^{-{s_c}}
\right]\\
&+
2q a
\left[
    (1+\Delta)^{-{s_c}}
    -(1+\Delta)^{-(1-\frac{1}{2q})}
\right],
\end{align*}
where $\Delta=\Delta(a,r)$ and
$T=(1-r)N+\Delta(a,r)$.
Thus $G(a,r)=\cQ_N(a,r)$ on the original parameter domain.

\paragraph*{Existence and exclusion of boundary minimizers.}
Define
\[
    \psi(\Delta)
    \coloneqq
    (1+\Delta)
    -(1+\Delta)^{1-\frac{1}{2q}},
    \qquad \Delta\geq0.
\]
Since $\psi(0)=0$ and
\[
    \psi'(\Delta)
    =
    1-\left(1-\frac{1}{2q}\right)
      (1+\Delta)^{-\frac{1}{2q}}
    \geq\frac{1}{2q}>0,
\]
equation \eqref{eq:proof-delta-ar} defines a unique continuous
function $\Delta(a,r)$ on $[0,1]^2$.
Moreover,
\[
    0\leq\Delta(a,r)\leq arN,
    \qquad
    0\leq T=(1-r)N+\Delta(a,r)\leq N.
\]
Consequently, the formula for $G$ extends continuously to
$[0,1]^2$, and this extension attains its minimum.

We first construct a comparison point.
For $(a,r)=(1,1/2)$, equation \eqref{eq:proof-delta-ar} yields
\[
    \frac{\Delta}{N}\longrightarrow\frac{1}{4q},
    \qquad
    \frac{T}{N}\longrightarrow c_q,
    \qquad
    c_q\coloneqq\frac12+\frac{1}{4q}.
\]
The noise term at this point is $O(N^{-{s_c}})$.
Since $s<{s_c}$, it follows that
\[
    G(1,1/2)
    =
    \left(c_q^{-s}+o_N(1)\right)N^{-s}.
\]
In particular, the minimum of $G$ tends to zero.

On either boundary $a=0$ or $r=0$, we have $\Delta=0$, and hence
\[
    G(a,r)
    =
    (1+T)^{-s}
    +\frac{1}{{s_c}}
       \left[1-(1+T)^{-{s_c}}\right]
    \geq1.
\]
Here we used $(1+T)^{-s}\geq(1+T)^{-{s_c}}$ and ${s_c}<1$.
Thus neither boundary can contain a minimizer for sufficiently
large $N$.

It remains to exclude $r=1$.
On this boundary,
\[
    T=\Delta(a,1)\leq\Delta(1,1),
    \qquad
    \frac{\Delta(1,1)}{N}
    \longrightarrow\frac{1}{2q}.
\]
Since both noise contributions in $G$ are nonnegative, uniformly
over $a\in[0,1]$ we have
\[
    G(a,1)
    \geq
    (1+\Delta(1,1))^{-s}
    =
    \left((2q)^s+o_N(1)\right)N^{-s}.
\]
Because $c_q>1/(2q)$, we have
$c_q^{-s}<(2q)^s$.
Therefore, for sufficiently large $N$, every point on $r=1$
has a larger objective value than the comparison point $(1,1/2)$.

We conclude that every minimizer of the continuous extension
satisfies
\[
    a_N^*>0,
    \qquad
    0<r_N^*<1.
\]
In particular, the minimum over the original parameter domain
$(0,1]\times(0,1)$ is attained.

\paragraph*{The optimal peak learning rate $a_N^*$.}
Differentiating \eqref{eq:proof-delta-ar} gives
\[
\frac{\partial\Delta}{\partial a}
=
\frac{rN}{
2q\left[
1-\left(1-\frac{1}{2q}\right)
(1+\Delta)^{-\frac{1}{2q}}
\right]},
\qquad
\frac{\partial\Delta}{\partial r}
=
\frac{aN}{
2q\left[
1-\left(1-\frac{1}{2q}\right)
(1+\Delta)^{-\frac{1}{2q}}
\right]}.
\]
In particular,
\[
\frac{\partial\Delta/\partial a}
     {\partial\Delta/\partial r}
=
\frac{r}{a}.
\]

Taking full partial derivatives of $G(a,r)$ gives
\begin{align*}
\frac{\partial G}{\partial a}
&=
2q
\left[
    (1+\Delta)^{-{s_c}}
    -(1+\Delta)^{-(1-\frac{1}{2q})}
\right]
+
\frac{\partial\Delta}{\partial a}\,A,\\
\frac{\partial G}{\partial r}
&=
N
\left[
    s(1+T)^{-s-1}
    -(1+T)^{-{s_c}-1}
\right]
+
\frac{\partial\Delta}{\partial r}\,A,
\end{align*}
where
\begin{align*}
A
&=
-s(1+T)^{-s-1}
+(1+T)^{-{s_c}-1}
-(1+\Delta)^{-{s_c}-1}\\
&\quad
+2q a
\left[
    -{s_c}(1+\Delta)^{-{s_c}-1}
    +\left(1-\frac{1}{2q}\right)
      (1+\Delta)^{-(2-\frac{1}{2q})}
\right].
\end{align*}

Fix a minimizer $(a_N^*,r_N^*)$.
Since $0<r_N^*<1$, we have $\partial_rG=0$ at this point.
All identities obtained from this stationarity condition below
are evaluated at this minimizer.
Using the ratio above, we eliminate $A$ from $\partial_aG$ and obtain
\begin{equation}
\begin{aligned}
\frac{\partial G}{\partial a}
&=
2q
\left[
    (1+\Delta)^{-{s_c}}
    -(1+\Delta)^{-(1-\frac{1}{2q})}
\right]\\
&\quad
-\frac{rN}{a}
\left[
    s(1+T)^{-s-1}
    -(1+T)^{-{s_c}-1}
\right].
\end{aligned}
\label{eq:proof-Ga-after-Gr}
\end{equation}
The same stationarity condition gives
\begin{equation}
\begin{aligned}
&s(1+T)^{-s-1}
-(1+T)^{-{s_c}-1}\\
&\qquad=
\frac{
a(1+\Delta)^{-{s_c}-1}
\left[
    1+2q a{s_c}
    -2q a\left(1-\frac{1}{2q}\right)
    (1+\Delta)^{-\frac{1}{2q}}
\right]
}{
2q\left[
    1-\left(1-\frac{1}{2q}\right)
    (1+\Delta)^{-\frac{1}{2q}}
\right]-a
}.
\end{aligned}
\label{eq:proof-stationary-bracket}
\end{equation}
Substituting \eqref{eq:proof-stationary-bracket} into
\eqref{eq:proof-Ga-after-Gr}, and using
\eqref{eq:proof-delta-ar}, yields
\[
\frac{\partial G}{\partial a}
=
-
\frac{
    2q(1+\Delta)^{-{s_c}}
    \left[
        1-(1+\Delta)^{-\frac{1}{2q}}
    \right]
    (1-a)^2
}{
    a\left(
        2q\left[
            1-\left(1-\frac{1}{2q}\right)
            (1+\Delta)^{-\frac{1}{2q}}
        \right]-a
    \right)
}.
\]
The denominator is strictly positive for $a\in(0,1]$ and
$\Delta>0$, since
\[
    2q\left[
        1-\left(1-\frac{1}{2q}\right)
        (1+\Delta)^{-\frac{1}{2q}}
    \right]>1.
\]

If $a_N^*<1$, the minimizer is interior in $a$, so
$\partial_aG(a_N^*,r_N^*)=0$.
However, the preceding identity gives
$\partial_aG(a_N^*,r_N^*)<0$, a contradiction.
Therefore every minimizer satisfies
\[
    a_N^*=1.
\]

\paragraph*{The optimal decay duration $r_N^*$.}
Set $a=1$ and write $\Delta(r)=\Delta(1,r)$ and
$T(r)=(1-r)N+\Delta(r)$.
Then
\[
    \frac{rN}{2q}
    =
    (1+\Delta)
    -(1+\Delta)^{1-\frac{1}{2q}},
\]
and
\[
\Delta'(r)
=
\frac{N}{
2q\left[
    1-\left(1-\frac{1}{2q}\right)
    (1+\Delta)^{-\frac{1}{2q}}
\right]},
\qquad
0<\Delta'(r)<N.
\]
Along this constraint, direct differentiation gives
\[
\frac{\dd}{\dd r}G(1,r)
=
\bigl(N-\Delta'(r)\bigr)
\left[
    s(1+T)^{-s-1}
    -(1+T)^{-{s_c}-1}
    -(1+\Delta)^{-{s_c}-1}
\right].
\]
Since $r_N^*$ is interior and $N-\Delta'(r_N^*)>0$,
the stationarity condition at the minimizer is
\[
    s(1+T)^{-s-1}
    -(1+T)^{-{s_c}-1}
    =
    (1+\Delta)^{-{s_c}-1}.
\]

We next justify the large-$N$ asymptotics in this identity.
The comparison point constructed above gives
\[
    (1+T)^{-s}
    \leq G(1,r_N^*)
    \leq G(1,1/2)
    \lesssim N^{-s}.
\]
Together with $T\leq N$, this implies $T\eqsim N$.
Moreover, the nonnegativity of the other terms in $G$ gives
\[
    (1+\Delta)^{-{s_c}}
    \leq
    (1+T)^{-{s_c}}
    +{s_c} G(1,r_N^*)
    \longrightarrow0.
\]
Hence $\Delta\to\infty$ as well.

Since $s<{s_c}$, the term $(1+T)^{-{s_c}-1}$ is negligible
relative to $s(1+T)^{-s-1}$.
The stationarity condition therefore yields
\[
    1+\Delta
    \eqsim
    (1+T)^{\frac{s+1}{{s_c}+1}}.
\]
In particular, $\Delta=o(T)$.
Using \eqref{eq:proof-delta-ar} with $a=1$ and $\Delta\to\infty$,
we obtain
\[
    r_N^*N\eqsim\Delta.
\]
Combining this with $T\eqsim N$ gives
\[
    r_N^*
    \eqsim
    N^{\frac{s+1}{{s_c}+1}-1}
    =
    N^{-\frac{{s_c}-s}{{s_c}+1}}
    =
    N^{-\frac{1-1/q-s}{2-1/q}}.
\]
This completes the proof.
\end{proof}

\section{Proofs for fractional LR schedules}
\label{app:fixed-profile-proofs-intrinsic}

\paragraph*{Tail exponent under intrinsic-time}\leavevmode
The power-decay tail in~\eqref{eqn:power-tail} is stated in training steps:
there exist constants $p\ge0$ and $\delta\in(0,1)$ such that
\[
    \zeta(x)\eqsim(1-x)^p,\qquad x\in[\delta,1).
\]
The next lemma identifies the corresponding tail behavior of {$\psi$}.

\begin{lemma}[Intrinsic-time tail exponent]\label{lem:tildezeta-tail}
Assume $\zeta(x)\eqsim(1-x)^p$ for all $x\in[\delta,1)$ with some $p\ge0$.
{Let $\widetilde\delta\coloneqq P(\delta)\in[0,1)$.} Then for all $y\in[\widetilde\delta,1)$,
\[
    \psi(y)\eqsim(1-y)^{\frac{p}{p+1}}.
\]
\end{lemma}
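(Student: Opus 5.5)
The plan is to work directly with the definitions $P(x)=\int_0^x\zeta(u)\dd u$ and $\psi=\zeta\circ P^{-1}$. The idea is to translate the tail condition on $\zeta$ into a tail condition on $1-P$, and then invert.

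\textbf{Step 1: tail of $P$.} For $x\in[\delta,1)$ we have $1-P(x)=\int_x^1\zeta(u)\dd u$, using the unit-area normalization $\int_0^1\zeta=1$. Since $\zeta(u)\eqsim(1-u)^p$ on $[x,1)\subset[\delta,1)$, integration gives
\[
1-P(x)\eqsim\frac{(1-x)^{p+1}}{p+1}\eqsim(1-x)^{p+1},\qquad x\in[\delta,1).
\]
The constants here are those of the two-sided bound on $\zeta$, divided by $p+1$. In particular $P$ is strictly increasing on $[\delta,1]$, because $\zeta>0$ there. Hence $P$ maps $[\delta,1)$ bijectively onto $[\widetilde\delta,1)$ with $\widetilde\delta=P(\delta)$, and $\widetilde\delta<1$ since $1-P(\delta)\eqsim(1-\delta)^{p+1}>0$.

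\textbf{Step 2: inversion.} Take $y\in[\widetilde\delta,1)$ and set $x=P^{-1}(y)\in[\delta,1)$. On this range the generalized inverse is a genuine inverse by strict monotonicity. Step 1 gives $1-y=1-P(x)\eqsim(1-x)^{p+1}$, so $1-x\eqsim(1-y)^{1/(p+1)}$ after taking $(p+1)$-th roots, which preserves two-sided bounds with adjusted constants. Then
\[
\psi(y)=\zeta(x)\eqsim(1-x)^p\eqsim(1-y)^{\frac{p}{p+1}},
\]
which is the claim.

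\textbf{Main obstacle.} The only delicate point is the left endpoint. If $\zeta$ vanishes or is flat just before $\delta$, the generalized inverse could be ambiguous at $y=\widetilde\delta$. We then need $P^{-1}(\widetilde\delta)$ to land in $[\delta,1)$, or at least at a point where $\zeta$ is comparable to $\zeta(\delta)$.

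I would handle this by fixing the convention $P^{-1}(y)=\sup\{x:P(x)\le y\}$. Since $P$ is strictly increasing on $[\delta,1]$, this gives $P^{-1}(\widetilde\delta)\ge\delta$, so the argument applies. Alternatively, the single point $y=\widetilde\delta$ is measure-zero and irrelevant for the integrals in which the lemma is used.

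The case $p=0$ needs no separate treatment: the tail bound reduces to $\psi\eqsim1$, consistent with the formula.
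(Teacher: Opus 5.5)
Your proposal is correct and follows essentially the same route as the paper: both compute $1-P(x)=\int_x^1\zeta(u)\,\mathrm{d}u\eqsim(1-x)^{p+1}$ on $[\delta,1)$, invert this to get $1-x\eqsim(1-y)^{1/(p+1)}$, and substitute into $\psi(y)=\zeta(x)$. Your extra care with the generalized inverse at $y=\widetilde\delta$ is a point the paper simply asserts; choosing $P^{-1}(y)=\sup\{x:P(x)\le y\}$ guarantees $P^{-1}(y)\in[\delta,1)$.
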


\begin{proof}
Fix $y\in[\widetilde\delta,1)$ and define $x\coloneqq P^{-1}(y)\in[\delta,1)$. Then $\psi(y)=\zeta(x)$ by definition. Let $A(x)\coloneqq\int_x^1\zeta(u)\dd u=1-P(x)$.
Since $\zeta(u)\eqsim(1-u)^p$ on $[\delta,1)$, integrating yields
\[
    A(x)\eqsim(1-x)^{p+1},\qquad x\in[\delta,1).
\]
Moreover, $1-y=1-P(x)=A(x)\eqsim(1-x)^{p+1}$. Hence, for $y\in[\widetilde\delta,1)$,
\[
    1-x\eqsim(1-y)^{\frac{1}{p+1}}.
\]
Finally, using $\zeta(x)\eqsim(1-x)^p$ on $[\delta,1)$ and $\psi(y)=\zeta(x)$, we obtain
\[
    \psi(y)=\zeta(x)\eqsim(1-x)^p
    \eqsim\Bigl((1-y)^{\frac{1}{p+1}}\Bigr)^p
    =(1-y)^{\frac{p}{p+1}},
\]
which holds for all $y\in[\widetilde\delta,1)$.
\end{proof}

\subsection{Proof of Theorem~\ref{thm:fractional-fsl}}
\label{app:proof-thm-fractional-fsl}

\begin{proof}
Recall $\cK(u)=(1+u)^{-\rho}$, where $\rho\coloneqq2-1/q\in(1,2)$. Set $a\coloneqq p/(p+1)\in[0,1)$. By \eqref{eqn: intrinsic-time LR},  $\varphi(t)=(T/N)\psi(t/T)$. By Lemma~\ref{lem:tildezeta-tail}, the bounded profile $\psi$ satisfies $\psi(y)\eqsim(1-y)^a$ on $[\widetilde\delta,1)$ for some $\widetilde\delta\in[0,1)$.

Recall
\begin{equation}\label{eq:fsl-TN}
    \cF_N(T,\zeta)=(1+T)^{-s}
    +\frac{T}{N}\int_0^T(1+T-\tau)^{-\rho}\psi(\tau/T)\dd\tau.
\end{equation}
It remains to estimate the noise term. Lemma~\ref{lem:rescaled-profile-convolution} below gives
\[
    \frac{T}{N}\int_0^T(1+T-\tau)^{-\rho}\psi(\tau/T)\dd\tau
    \eqsim\frac{T^{1/\alpha}}{N}
    \bigl[{\log T}\bigr]^{\mathbf{1}\{q=p+1\}},
\]
where $\alpha=\min\{q,p+1\}$, since $\min\{a,\rho-1\}=1-1/\alpha$ and $a=\rho-1$ exactly when $q=p+1$. Substituting this estimate into~\eqref{eq:fsl-TN} proves the scaling law.
\end{proof}

\begin{lemma}[Convolution with a rescaled decay profile]\label{lem:rescaled-profile-convolution}
Let $g:[0,1]\to\RR_{\ge0}$ be a bounded function. Suppose $g(x)\eqsim(1-x)^a$ for $x\in[\delta,1)$, where $a\ge0$ and $\delta\in[0,1)$. For every $\rho>1$ and ${ T\gtrsim1}$,
\begin{equation}\label{eq:rescaled-profile-convolution}
    \int_0^T(1+T-t)^{-\rho}g(t/T)\dd t
    \eqsim
    \begin{cases}
        T^{-a}, & a<\rho-1,\\
        T^{1-\rho}{\log T}, & a=\rho-1,\\
        T^{1-\rho}, & a>\rho-1.
    \end{cases}
\end{equation}
The comparison constants may depend on $g,a,\rho,\delta$, but are independent of $T$.
\end{lemma}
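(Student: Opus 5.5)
We substitute $v=T-t$ to rewrite the integral as
\[
I(T)\coloneqq\int_0^T(1+v)^{-\rho}\,g\bigl(1-v/T\bigr)\dd v,
\]
and then split the range at $v=(1-\delta)T$. Throughout we assume $\delta>0$; if $\delta=0$, the first region below is empty.

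\textbf{The region $v\in[(1-\delta)T,T]$.} Here $g$ is bounded and $1+v\gtrsim T$. This portion therefore contributes $O(T^{1-\rho})$. It is a nonnegative contribution and will only be used as an upper bound.

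\textbf{The region $v\in[0,(1-\delta)T]$.} Here $1-v/T\in[\delta,1]$, so the hypothesis gives $g(1-v/T)\eqsim (v/T)^a$. Up to constants, the contribution from this region is
\[
J(T)\coloneqq T^{-a}\int_0^{(1-\delta)T}(1+v)^{-\rho}v^a\dd v.
\]
The integrand behaves like $v^a$ near $0$, which is harmless, and like $v^{a-\rho}$ for large $v$. We treat the three cases in turn.

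\emph{Case $a<\rho-1$.} Here $a-\rho<-1$, so the integral converges to a positive constant as $T\to\infty$. It is bounded below by its value on $[0,1]$, which is positive uniformly for $T\gtrsim1$, since we may assume $(1-\delta)T\ge1$. Hence $J\eqsim T^{-a}$. This dominates the $T^{1-\rho}$ contribution because $-a>1-\rho$.

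\emph{Case $a=\rho-1$.} Here the integrand is $\eqsim v^{-1}$ on $[1,(1-\delta)T]$. This gives $J\eqsim T^{-a}\log T=T^{1-\rho}\log T$ for $T$ large. For bounded $T\gtrsim1$, the claim follows from a direct compactness or positivity argument, interpreting $\log T$ with the usual convention that $T$ is bounded away from $1$.

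\emph{Case $a>\rho-1$.} Here the integral grows like $T^{a-\rho+1}$. This gives $J\eqsim T^{1-\rho}$, which matches the first region's contribution.

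\textbf{Assembly.} Adding the two regions gives the upper bounds directly. The lower bounds come from the second region alone, since $g\ge0$.

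The only delicate point is the uniformity of the constants for moderate $T$, and the possibility $\delta=0$. This is routine: for $T$ in a compact range $[T_0,T_1]$, both sides are positive continuous functions, provided $g$ is not a.e.\ zero near $1$, which is guaranteed by $g\eqsim(1-x)^a$. Their ratio is therefore bounded.
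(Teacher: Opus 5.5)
Your proof is correct and follows the paper's argument step for step. Your substitution $v=T-t$ with the split at $v=(1-\delta)T$ is the paper's change of variables $t=Tx$, $u=T(1-x)$ with the split at $x=\delta$: the early region is bounded by $O(T^{1-\rho})$ and absorbed, and the tail integral $T^{-a}\int_0^{(1-\delta)T}(1+v)^{-\rho}v^a\,\mathrm{d}v$ is handled in the same three cases by splitting at $v=1$. Your remark that $T$ must stay bounded away from $1$ in the logarithmic case matches the paper's implicit assumption $T\ge 2/(1-\delta)$.
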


\begin{proof}
Changing variables $t=Tx$ gives
\[
    \int_0^T(1+T-t)^{-\rho}g(t/T)\dd t
    =T\int_0^1\bigl(1+T(1-x)\bigr)^{-\rho}g(x)\dd x.
\]
We split the integral into an early region and a tail region.
Denote the integrals over $[0,\delta]$ and $[\delta,1]$ by $I_{\mathrm{early}}$ and $I_{\mathrm{tail}}$, respectively.

\paragraph{(i) Early region.}\leavevmode
For $x\in[0,\delta]$, we have $1-x\ge1-\delta$, hence
$\bigl(1+T(1-x)\bigr)^{-\rho}\eqsim T^{-\rho}$ uniformly for ${ T\gtrsim1}$.
Therefore,
\[
    I_{\mathrm{early}}\lesssim T^{-\rho}\int_0^\delta g(x)\dd x
    \lesssim T^{-\rho}.
\]

\paragraph{(ii) Tail region and the logarithmic boundary.}\leavevmode
For $x\in[\delta,1)$, the tail assumption gives $g(x)\eqsim(1-x)^a$, hence
\[
    I_{\mathrm{tail}}\eqsim
    \int_\delta^1\bigl(1+T(1-x)\bigr)^{-\rho}(1-x)^a\dd x.
\]
Let $u=T(1-x)$, so that $\dd x=-\dd u/T$ and $1-x=u/T$. Then
\begin{equation}\label{eq:noise-tail-master}
    T I_{\mathrm{tail}}\eqsim
    T^{-a}\int_0^{T(1-\delta)}(1+u)^{-\rho}u^a\dd u.
\end{equation}
Suppose ${ T\ge2/(1-\delta)}$. We split this integral at $u=1$. The part over $[0,1]$ is a positive constant, while $(1+u)^{-\rho}u^a\eqsim u^{a-\rho}$ for $u\ge1$. We distinguish three cases.

\begin{itemize}
\item  Case 1: $a-\rho<-1$. The integral in~\eqref{eq:noise-tail-master} is bounded above and below by positive constants, and hence $T I_{\mathrm{tail}}\eqsim T^{-a}$.
\item  Case 2: $a-\rho>-1$. The integral in~\eqref{eq:noise-tail-master} is comparable to $T^{a+1-\rho}$, so $T I_{\mathrm{tail}}\eqsim T^{1-\rho}$.
\item  Case 3: $a-\rho=-1$. Integrating $u^{-1}$ gives a factor comparable to ${\log T}$, so $T I_{\mathrm{tail}}\eqsim T^{1-\rho}{\log T}$.
\end{itemize}
 In all three cases, the early contribution $T I_{\mathrm{early}}\lesssim T^{1-\rho}$ is bounded by the tail contribution. Combining the two regions proves~\eqref{eq:rescaled-profile-convolution}.
\end{proof}

\subsection{Proof of Theorem~\ref{thm: optimal-fractional-lrs}}
\label{app:proof-thm-opt-fractional}

\begin{proof}
 Recall $\alpha=\min\{q,p+1\}$. By Theorem~\ref{thm:fractional-fsl}, for ${ T\gtrsim1}$ the final-step loss satisfies
\begin{equation}\label{eq:opt-frac-start}
    \cF_N(T,\zeta)\eqsim T^{-s}
    +\frac{T^{1/\alpha}}{N}\bigl[{\log T}\bigr]^{\mathbf{1}\{q=p+1\}}.
\end{equation}
We minimize over the admissible range $0<T\le c_\zeta N$, where $c_\zeta\coloneqq\eta_{\stability}/\|\zeta\|_\infty$ is a fixed positive constant. { It suffices to consider $T\gtrsim1$.}

\paragraph{(i) Hard-task regime: $s<1-\frac{1}{\alpha}$.}
We first show that in the hard regime, the noise term is uniformly negligible. Since $T\le c_\zeta N$, uniformly for ${ 2\le T\le c_\zeta N}$,
\[
    \frac{T^{1/\alpha}}{N}\bigl[{\log T}\bigr]^{\mathbf{1}\{q=p+1\}}
    \lesssim N^{-(1-1/\alpha)}\bigl[{\log N}\bigr]^{\mathbf{1}\{q=p+1\}}
    =o(N^{-s}).
\]
Since $T^{-s}\gtrsim N^{-s}$ throughout this range, the loss satisfies $\cF_N(T,\zeta)\eqsim T^{-s}$. At the admissible choice $T=c_\zeta N$, the loss is therefore comparable to $N^{-s}$. Conversely, the signal term gives $\cF_N(T,\zeta)\gtrsim N^{-s}$ for every admissible $T$. Hence $\cE_N^*\eqsim N^{-s}$. Finally, $(T^*)^{-s}\lesssim\cE_N^*\eqsim N^{-s}$ gives $T^*\gtrsim N$, and the stability constraint gives $T^*\lesssim N$. Thus $T^*\eqsim N$. This proves the hard-task statement, including the boundary case $q=p+1$, where the extra logarithmic factor is dominated by the strict power gap.

\paragraph{(ii) Easy-task regime: $s\ge1-\frac{1}{\alpha}$.}
In this regime, the optimal choice balances the two terms in
{ \eqref{eq:opt-frac-start}}. We consider two cases.

\begin{itemize}
\item Case 1: $q\neq p+1$. Then the logarithmic factor is absent and~\eqref{eq:opt-frac-start} becomes
\[
    \cF_N(T,\zeta)\eqsim T^{-s}+\frac{T^{1/\alpha}}{N}.
\]
Balancing the two terms gives $T^{s+1/\alpha}\eqsim N$. Set $\widehat T=N^{1/(s+1/\alpha)}$.
Since $s+1/\alpha\ge1$, a fixed positive multiple of $\widehat T$ is admissible, and substituting it into either term yields
\[
    \cE_N^*\lesssim\widehat T^{-s}=N^{-\frac{s\alpha}{s\alpha+1}}.
\]
For $T\le\widehat T$, the signal term is at least $\widehat T^{-s}$; for $T\ge\widehat T$, the noise term is at least $\widehat T^{1/\alpha}/N=\widehat T^{-s}$.
This proves the matching lower bound. Moreover, bounding each term at $T^*$ by a constant times $\widehat T^{-s}$ gives $T^*\gtrsim\widehat T$ from the signal term and $T^*\lesssim\widehat T$ from the noise term. Hence
\[
    T^*\eqsim N^{\frac{1}{s+1/\alpha}}.
\]

\item Case 2: $q=p+1$. Then $\alpha=q$ and the second term carries the logarithmic factor:
\[
    \cF_N(T,\zeta)\eqsim T^{-s}+\frac{T^{1/\alpha}}{N}{\log T}.
\]
Let ${\widehat T>1}$ be the solution of the balance equation
\begin{equation}\label{eq:balance-boundary}
    \widehat T^{s+1/\alpha}{\log\widehat T}=N.
\end{equation}
Taking logarithms gives
\[
    (s+1/\alpha)\log\widehat T+\log{\log\widehat T}=\log N.
\]
Since $\widehat T\to\infty$ and $\log{\log\widehat T}=o(\log\widehat T)$, we have ${\log\widehat T}\eqsim\log N$.
Substituting this back into~\eqref{eq:balance-boundary} yields
\[
    \widehat T\eqsim\left(\frac{N}{\log N}\right)^{\frac{1}{s+1/\alpha}}.
\]
In particular, $\widehat T=o(N)$, so it is admissible for all sufficiently large $N$.
The same comparison of the decreasing signal term and increasing noise term gives $\cE_N^*\eqsim\widehat T^{-s}$ and $T^*\eqsim\widehat T$.
Substituting into the signal term gives
\[
    \cE_N^*\eqsim N^{-\frac{s\alpha}{s\alpha+1}}
    (\log N)^{\frac{s\alpha}{s\alpha+1}}.
\]
\end{itemize}
\end{proof}

\subsection{Proof of Theorem~\ref{thm:horizon-free-lower-bound}}
\label{app:proof-horizon-free-lower-bound}

\paragraph{Proof.}
Let
\[
    \eta_N(z)
    =
    \eta_{0}{h}(z),
    \qquad
    {H}(x)
    =
    \int_0^x{h}(z)\dd z,
\]
and recall that
\[
    T_N
    =
    \eta_{0}{H}(N).
\]
We prove the result by contradiction. Suppose that there exist
\[
    \rho>\frac{s}{s+1},
    \qquad
    C>0,
    \qquad
    N_0\in\NN_+,
\]
such that
\begin{equation}
\label{eqn:proof-horizon-free-assumption}
    \cF[\eta_N]
    \leq
    C N^{-\rho},
    \qquad
    \forall\,N\geq N_0.
\end{equation}

We first derive a lower bound from the signal-learning term. By the definition
of the FSL functional,
\[
    \cF[\eta_N]
    \geq
    (1+T_N)^{-s}.
\]
Combining this with
\eqref{eqn:proof-horizon-free-assumption} gives
\[
    (1+T_N)^{-s}
    \leq
    C N^{-\rho},
\]
and hence, for all sufficiently large $N$,
\begin{equation}
\label{eqn:proof-horizon-free-T-lb}
    T_N
    \gtrsim
    N^{\rho/s}.
\end{equation}
In particular, $T_N\geq 1$ for all sufficiently large $N$.

Next, consider the noise-forgetting term
\[
    Q_N
    \coloneqq
    \int_0^{T_N}
    \cK(T_N-\tau)\varphi_N(\tau)\dd\tau,
\]
where
\[
    \varphi_N(\tau)
    =
    \eta_N\bigl(t_N^{-1}(\tau)\bigr).
\]
Since ${h}$ is non-increasing, $\eta_N$ is non-increasing in training steps.
Since $t_N$ is increasing, $\varphi_N$ is also non-increasing in intrinsic
time. Therefore, for $T_N\geq 1$,
\begin{align}
    Q_N
    &\geq
    \int_{T_N-1}^{T_N}
    \cK(T_N-\tau)\varphi_N(\tau)\dd\tau
    \notag\\
    &\geq
    \varphi_N(T_N)
    \int_{T_N-1}^{T_N}
    \cK(T_N-\tau)\dd\tau
    \notag\\
    &=
    \eta_N(N)
    \int_0^1\cK(u)\dd u.
\label{eqn:proof-horizon-free-Q-lb}
\end{align}
Since
\[
    \cK(u)
    =
    (1+u)^{-(2-1/q)}
\]
is strictly positive on $[0,1]$, the constant
\[
    c_{\cK}
    \coloneqq
    \int_0^1\cK(u)\dd u
\]
is positive and independent of $N$. Hence
\begin{equation}
\label{eqn:proof-horizon-free-terminal-lb}
    Q_N
    \gtrsim
    \eta_N(N)
    =
    \eta_{0}{h}(N).
\end{equation}
Combining
\eqref{eqn:proof-horizon-free-assumption} and
\eqref{eqn:proof-horizon-free-terminal-lb} yields
\begin{equation}
\label{eqn:proof-horizon-free-terminal-ub}
    \eta_{0}{h}(N)
    \lesssim
    N^{-\rho}.
\end{equation}

Using
\[
    T_N
    =
    \eta_{0}{H}(N),
\]
together with
\eqref{eqn:proof-horizon-free-T-lb} and
\eqref{eqn:proof-horizon-free-terminal-ub}, we obtain
\begin{align}
    \frac{{h}(N)}{{H}(N)}
    &=
    \frac{\eta_{0}{h}(N)}
         {\eta_{0}{H}(N)}
    =
    \frac{\eta_{0}{h}(N)}{T_N}
    \notag\\
    &\lesssim
    N^{-\rho}
    N^{-\rho/s}
    =
    N^{-\rho(1+1/s)}.
\label{eqn:proof-horizon-free-log-bound-integers}
\end{align}
Set
\[
    \nu
    \coloneqq
    \rho\left(1+\frac{1}{s}\right).
\]
Because $\rho>s/(s+1)$, we have $\nu>1$.

The inequality above is stated at integer values of $N$. We next extend it to
all sufficiently large real $x$. Let $N=\lfloor x\rfloor$. Since ${h}$ is
non-increasing and ${H}$ is non-decreasing,
\[
    {h}(x)
    \leq
    {h}(N),
    \qquad
    {H}(x)
    \geq
    {H}(N).
\]
Consequently,
\[
    \frac{{h}(x)}{{H}(x)}
    \leq
    \frac{{h}(N)}{{H}(N)}
    \lesssim
    N^{-\nu}
    \lesssim
    x^{-\nu}.
\]
Since ${H}$ is absolutely continuous and
${H}'(x)={h}(x)$ almost everywhere,
\begin{equation}
\label{eqn:proof-horizon-free-log-derivative}
    \frac{\dd}{\dd x}\log{H}(x)
    =
    \frac{{h}(x)}{{H}(x)}
    \lesssim
    x^{-\nu}
\end{equation}
for almost every sufficiently large $x$.

Integrating
\eqref{eqn:proof-horizon-free-log-derivative} from a sufficiently large
constant $x_0$ to $R$ gives
\begin{align}
    \log{H}(R)-\log{H}(x_0)
    &\lesssim
    \int_{x_0}^R x^{-\nu}\dd x
    \notag\\
    &\leq
    \int_{x_0}^{\infty}x^{-\nu}\dd x
    <
    \infty,
\end{align}
where the final integral is finite because $\nu>1$. It follows that
\begin{equation}
\label{eqn:proof-horizon-free-Psi-bounded}
    \sup_{R\geq x_0}{H}(R)
    <
    \infty.
\end{equation}

On the other hand, {the uniform bound $\eta_0\lesssim1$} and
\eqref{eqn:proof-horizon-free-T-lb} imply
\begin{equation}
\label{eqn:proof-horizon-free-Psi-lb}
    {H}(N)
    =
    \frac{T_N}{\eta_{0}}
    \mathrel{\gtrsim}
    {T_N}
    \gtrsim
    N^{\rho/s}.
\end{equation}
Thus ${H}(N)\to\infty$, contradicting
\eqref{eqn:proof-horizon-free-Psi-bounded}. Therefore,
\eqref{eqn:proof-horizon-free-assumption} cannot hold for any
$\rho>s/(s+1)$, proving
\[
    \limsup_{N\to\infty}
    N^\rho\cF[\eta_N]
    =
    \infty.
\]
\qed

\section{Proofs for discrete-time SGD}

We prove Theorem~\ref{thm:power-rate} and Proposition~\ref{prop:lower-bound-any-lrs} under the source and capacity conditions in Assumptions~\ref{ass: source} and~\ref{ass: capacity}. These conditions specify an $\ell^2$ bound on the spectral source coefficients and a one-sided upper bound on the eigenvalues. They yield the signal-learning and noise-forgetting exponents used in the FSL analysis.

Section~\ref{subsec:setup} introduces the kernel regression setup. Section~\ref{app:Analysis of SGD dynamics} derives the SGD risk inequality and identifies the functions $\cS$ and $\cK$. Section~\ref{subsec:bounding} establishes their properties, and Section~\ref{subsec:risk-from-properties} uses these properties to bound the excess risk. Sections~\ref{app:proof-theorem-power-cosine} and~\ref{app:proof-prop-lower-bound-any-lrs} then prove Theorem~\ref{thm:power-rate} and Proposition~\ref{prop:lower-bound-any-lrs}, respectively.

\subsection{Setup}
\label{subsec:setup}

We follow the setup of Section~\ref{subsec: fslr} and introduce the spectral notation used below. Recall that
$
y=f^*(\bx)+\epsilon
$
with $\epsilon\sim\mathcal N(0,\sigma^2)$,
where $\epsilon$ is independent of $\bx$, and $\{(\bx_k,y_k)\}_{k=0}^{N-1}$ are drawn i.i.d.\ from $\cD$. {Let $\bphi:\cX\to\ell^2$ be a feature map, with associated kernel}
\[
{k_{\phi}(\bx,\bx')}={\langle \bphi(\bx),\bphi(\bx')\rangle}.
\]
For $\btheta\in{\ell^2}$, we write
\[
f_{\btheta}(\bx)\coloneqq {\langle\btheta,\bphi(\bx)\rangle}.
\]
The kernel $k_{\phi}$ induces a reproducing kernel Hilbert space. We assume
\[
\EE_{\bx\sim\cD_{\cX}}[{k_{\phi}(\bx,\bx)}]<\infty.
\]

Define the sampling operator ${\mathsf{S}}:{\ell^2}\to L^2(\cD_{\cX})$ by
\[
({\mathsf{S}}\btheta)(\bx)\coloneqq {\langle\btheta,\bphi(\bx)\rangle},
\]
and introduce the covariance operator and kernel integral operator
\[
\cT\coloneqq {\mathsf{S}^*\mathsf{S}}:{\ell^2}\to{\ell^2},
\qquad
\cI\coloneqq {\mathsf{S}\mathsf{S}^*}:L^2(\cD_{\cX})\to L^2(\cD_{\cX}).
\]
Equivalently,
\[
\cT\btheta
=
\EE_{\bx\sim\cD_{\cX}}
\big[
{\langle\btheta,\bphi(\bx)\rangle}\bphi(\bx)
\big],
\]
while
\[
(\cI g)(\bx)
=
\int {k_{\phi}(\bx,\bx')}g(\bx')\,\mathrm d\cD_{\cX}(\bx').
\]
Thus, $\cT$ coincides with the feature covariance operator $\bH$ defined in Section~\ref{subsec: fslr}.

The nonzero eigenvalues of $\cT$ and $\cI$ coincide. Let $\{(\lambda_j,\be_j)\}_{j\ge1}$ be the spectral system of $\cI$, with $\{\be_j\}_{j\ge1}$ orthonormal in $L^2(\cD_{\cX})$, and let $\{\bv_j\}_{j\ge1}\subset{\ell^2}$ be the corresponding eigenvectors of $\cT$, so that
\[
\cI\be_j=\lambda_j\be_j,
\qquad
\cT\bv_j=\lambda_j\bv_j,
\qquad
{\langle\bv_i,\bv_j\rangle}=\delta_{ij}.
\]
The two spectral systems are related by
\begin{equation}
\label{eq:bridge-ej-vj}
\be_j(\bx)
=
\lambda_j^{-1/2}{\langle\bv_j,\bphi(\bx)\rangle},
\end{equation}
where the equality holds in $L^2(\cD_{\cX})$. Consequently,
\[
{k_{\phi}(\bx,\bx')}
=
\sum_{j\ge1}\lambda_j\be_j(\bx)\be_j(\bx'),
\]
with convergence in $L^2(\cD_{\cX}\times\cD_{\cX})$.

To connect this notation with Section~\ref{subsec: fslr}, we express the feature map in the covariance eigenbasis and identify
\[
\phi_j(\bx)\coloneqq\langle\bv_j,\bphi(\bx)\rangle
=\sqrt{\lambda_j}\,\be_j(\bx).
\]

For the source coefficients $(a_j)_{j\ge1}$ in Assumption~\ref{ass: source}, with $\sum_{j\ge1}a_j^2\le1$, define the target coordinates
\[
\theta_j^*
\coloneqq
a_j\lambda_j^{\frac{s-1}{2}},
\qquad j\ge1.
\]
With this identification of $\phi_j$, the source condition can be equivalently written as
\begin{equation}
\label{eq:target-coordinate-expansion}
f^*
=
\sum_{j\ge1}\sqrt{\lambda_j}\,\theta_j^*\be_j
=\sum_{j\ge1}a_j\lambda_j^{s/2}\be_j.
\end{equation}
When $s<1$, the sequence $\{\theta_j^*\}_{j\ge1}$ need not be square-summable, so there need not exist a corresponding target parameter $\btheta^*\in{\ell^2}$. Our analysis below only uses the individual coordinates $\theta_j^*$ and therefore does not require such an ${\ell^2}$-valued representation.

\subsection{Analysis of one-pass SGD}
\label{app:Analysis of SGD dynamics}

We define the population squared risk by
\[
\cR(f)
\coloneqq
\frac12\,\EE_{(\bx,y)\sim\cD}\big[(f(\bx)-y)^2\big].
\]
Since $y=f^*(\bx)+\epsilon$, with $\EE[\epsilon]=0$ and $\EE[\epsilon^2]=\sigma^2$, we have
\[
\cR(f)
=
\frac12\,\EE_{\bx}\big[(f(\bx)-f^*(\bx))^2\big]
+
\frac{\sigma^2}{2}.
\]
The second term is the irreducible noise level. We therefore study the excess risk
\begin{equation}
\label{eq:kernel-excess-risk}
\cE(f)
\coloneqq
\frac12\,\EE_{\bx}\big[(f(\bx)-f^*(\bx))^2\big]
=
\frac12\|f-f^*\|_{L^2(\cD_{\cX})}^2.
\end{equation}

Fix a LR schedule $(\eta_i)_{i=0}^{N-1}$ with $\eta_i\ge0$. Starting from $\btheta_0=0$, one-pass SGD updates the parameter according to
\begin{equation}
\label{eq:sgd-parameter-space}
\btheta_{k+1}
=
\btheta_k
-
\eta_k
\big(
{\langle\btheta_k,\bphi(\bx_k)\rangle}-y_k
\big)
\bphi(\bx_k),
\qquad
k=0,\ldots,N-1.
\end{equation}
For each $j\ge1$, define
\[
\theta_{k,j}
\coloneqq
{\langle\btheta_k,\bv_j\rangle},
\qquad
u_k^j
\coloneqq
\theta_{k,j}-\theta_j^*.
\]
Also define the prediction residual
\[
r_k
\coloneqq
f_{\btheta_k}-f^*.
\]
Using \eqref{eq:bridge-ej-vj} and \eqref{eq:target-coordinate-expansion}, we have the $L^2(\cD_{\cX})$ expansion
\begin{equation}
\label{eq:residual-coordinate-expansion}
r_k(\bx)
=
\sum_{j\ge1}\sqrt{\lambda_j}\,u_k^j\,\be_j(\bx).
\end{equation}
It follows from the orthonormality of $\{\be_j\}_{j\ge1}$ that
\begin{equation}
\label{eq:risk-component-form-app}
\cE(\btheta_k)
\coloneqq
\cE(f_{\btheta_k})
=
\frac12\sum_{j\ge1}\lambda_j(u_k^j)^2.
\end{equation}

We next derive the dynamics of these coordinates. Let
$
\cF_k
\coloneqq
\sigma\big((\bx_0,y_0),\ldots,(\bx_{k-1},y_{k-1})\big)
$
denote the information available before drawing the $k$-th sample. Taking the inner product of \eqref{eq:sgd-parameter-space} with $\bv_j$, and using $y_k=f^*(\bx_k)+\epsilon_k$, gives
\begin{equation}
\label{eq:coord-recursion}
u_{k+1}^j
=
u_k^j
-
\eta_k
\big(
r_k(\bx_k)-\epsilon_k
\big)
{\langle\bv_j,\bphi(\bx_k)\rangle}.
\end{equation}
Conditioned on $\cF_k$, the residual $r_k$ is fixed and $(\bx_k,\epsilon_k)$ is independent of $\cF_k$. Using \eqref{eq:residual-coordinate-expansion} and \eqref{eq:bridge-ej-vj}, we obtain
\begin{align*}
\EE\!\left[
r_k(\bx_k)
{\langle\bv_j,\bphi(\bx_k)\rangle}
\,\middle|\,
\cF_k
\right]
&=
\EE_{\bx}\!\left[
\left(
\sum_{\ell\ge1}\sqrt{\lambda_\ell}\,u_k^\ell\be_\ell(\bx)
\right)
\sqrt{\lambda_j}\be_j(\bx)
\right] =\lambda_j u_k^j.
\end{align*}
Moreover,
\[
\EE\!\left[
\epsilon_k
{\langle\bv_j,\bphi(\bx_k)\rangle}
\,\middle|\,
\cF_k
\right]
=
0.
\]
Define the centered stochastic term
\begin{equation}
\label{eq:def-xi-kj}
\xi_k^j
\coloneqq
\big(
r_k(\bx_k)-\epsilon_k
\big)
{\langle\bv_j,\bphi(\bx_k)\rangle}
-
\lambda_j u_k^j.
\end{equation}
Then
$
\EE[\xi_k^j\mid\cF_k]=0,
$
and \eqref{eq:coord-recursion} becomes
\begin{equation}
\label{eq:coord-recursion-compact}
u_{k+1}^j
=
(1-\eta_k\lambda_j)u_k^j
-
\eta_k\xi_k^j.
\end{equation}

Squaring \eqref{eq:coord-recursion-compact} and conditioning on $\cF_k$, the cross term vanishes because $\EE[\xi_k^j\mid\cF_k]=0$. Hence
\begin{equation}
\label{eq:second-moment-recursion}
\EE\big[(u_{k+1}^j)^2\mid\cF_k\big]
=
(1-\eta_k\lambda_j)^2(u_k^j)^2
+
\eta_k^2\EE\big[(\xi_k^j)^2\mid\cF_k\big].
\end{equation}
Taking expectation gives
\begin{equation}
\label{eq:second-moment-recursion-uncond}
\EE\big[(u_{k+1}^j)^2\big]
=
(1-\eta_k\lambda_j)^2\EE\big[(u_k^j)^2\big]
+
\eta_k^2\EE\big[(\xi_k^j)^2\big].
\end{equation}

\paragraph{Intrinsic time and excess risk bound.}
Define the intrinsic time by $t_k\coloneqq\sum_{i=0}^{k-1}\eta_i$ with $t_0\coloneqq0$. We have the following upper bound for the expected excess risk.
\begin{proposition}[Excess risk bound]
\label{prop:coord-unroll}
{Assume $\eta_{\max}\coloneqq\max_{0\le i\le N-1}\eta_i\le\lambda_1^{-1}$. Then, for every $1\le k\le N$,}
\begin{equation}
\label{eq:risk-volterra-raw}
2\EE[\cE(\btheta_k)]
\le
\sum_{j\ge1}\lambda_j e^{-2\lambda_j t_k}(u_0^j)^2
+
\sum_{i=0}^{k-1}\eta_i^2
\sum_{j\ge1}\lambda_j e^{-2\lambda_j(t_k-t_{i+1})}\,\EE\big[(\xi_i^j)^2\big].
\end{equation}
\end{proposition}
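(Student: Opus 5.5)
We unroll the exact second-moment recursion \eqref{eq:second-moment-recursion-uncond} coordinate by coordinate, bound each contraction factor by an exponential, and then sum over $j$ with weights $\lambda_j$ using \eqref{eq:risk-component-form-app}. Since $\btheta_0=\mathbf 0$, the initial coordinates $u_0^j=-\theta_j^*$ are deterministic, so $\EE[(u_0^j)^2]=(u_0^j)^2$.

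\emph{Step 1 (unrolling).} Fix $j$ and write $m_k^j\coloneqq\EE[(u_k^j)^2]$. Iterating \eqref{eq:second-moment-recursion-uncond} from $0$ to $k$ gives the exact identity
\[
m_k^j=\prod_{l=0}^{k-1}(1-\eta_l\lambda_j)^2\,(u_0^j)^2+\sum_{i=0}^{k-1}\eta_i^2\prod_{l=i+1}^{k-1}(1-\eta_l\lambda_j)^2\,\EE\big[(\xi_i^j)^2\big].
\]
This follows by induction on $k$. The only facts needed are that the recursion is linear in $m^j$ and that the coefficients $(1-\eta_l\lambda_j)^2$ are deterministic, because the schedule is fixed in advance.

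\emph{Step 2 (exponential bound).} The step-size assumption gives $\eta_l\le\eta_{\max}\le\lambda_1^{-1}$. Since $\lambda_j\le\lambda_1$, it follows that $0\le\eta_l\lambda_j\le1$, and hence $0\le1-\eta_l\lambda_j\le e^{-\eta_l\lambda_j}$. Squaring and multiplying over the relevant indices yields
\[
\prod_{l=i+1}^{k-1}(1-\eta_l\lambda_j)^2\le\exp\Big(-2\lambda_j\sum_{l=i+1}^{k-1}\eta_l\Big)=e^{-2\lambda_j(t_k-t_{i+1})}.
\]
In the same way, the product from $l=0$ is bounded by $e^{-2\lambda_j t_k}$. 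All terms are nonnegative, so replacing the products by these bounds preserves the inequality. The nonnegativity of $1-\eta_l\lambda_j$ is exactly where the assumption $\eta_{\max}\le\lambda_1^{-1}$ enters: without it, $(1-x)^2\le e^{-2x}$ can fail once $x>1$.

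\emph{Step 3 (summing over coordinates).} Multiply each bound by $\lambda_j$ and sum over $j\ge1$. By \eqref{eq:risk-component-form-app} and Tonelli's theorem, which applies because every term is nonnegative, we get $2\EE[\cE(\btheta_k)]=\sum_j\lambda_j m_k^j$. Exchanging the finite sum over $i$ with the sum over $j$ then gives \eqref{eq:risk-volterra-raw}.

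\emph{Main obstacle.} The main point needing care is justifying these manipulations in infinite dimensions. In particular, $\sum_j\lambda_j(u_0^j)^2$ must be finite; this holds because $f^*\in L^2(\cD_{\cX})$ under the source condition. The moments $\EE[(\xi_i^j)^2]$ must also be finite, so that each $m_k^j$ is finite and the recursion is legitimate. For that I would invoke Assumption~\ref{ass: hypercontra} together with $\EE[\epsilon^2]=\sigma^2$. If some of these sums are infinite, the inequality still holds trivially, so Tonelli's theorem suffices and no convergence argument beyond nonnegativity is required.
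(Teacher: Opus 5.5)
Your proposal is correct and follows the paper's proof essentially step for step. Like the paper, you unroll the exact second-moment recursion coordinatewise, then use $0\le\eta_\ell\lambda_j\le 1$ to bound each product $\prod_{\ell}(1-\eta_\ell\lambda_j)^2$ by $e^{-2\lambda_j(t_k-t_{i+1})}$ (or by $e^{-2\lambda_j t_k}$ for the initial term), and finally sum against $\lambda_j$ using the coordinate form of the excess risk. Your remarks on Tonelli's theorem and on finiteness of the moments are a harmless extra level of care that the paper leaves implicit.
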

\begin{proof}
For every $j\ge1$, iterating \eqref{eq:second-moment-recursion-uncond} gives
\[
\EE\big[(u_k^j)^2\big]
=
\prod_{i=0}^{k-1}(1-\eta_i\lambda_j)^2(u_0^j)^2
+
\sum_{i=0}^{k-1}\eta_i^2\,\EE\big[(\xi_i^j)^2\big]
\prod_{\ell=i+1}^{k-1}(1-\eta_\ell\lambda_j)^2.
\]
The step-size condition ensures $0\le\eta_\ell\lambda_j\le1$. Hence, for any $0\le i\le k$,
\[
\prod_{\ell=i}^{k-1}(1-\eta_\ell\lambda_j)^2
\le\exp\!\Big(-2\lambda_j\sum_{\ell=i}^{k-1}\eta_\ell\Big)
=e^{-2\lambda_j(t_k-t_i)}.
\]
Recall the risk identity \eqref{eq:risk-component-form-app}. Applying the above estimate  gives \eqref{eq:risk-volterra-raw}.
\end{proof}

The hypercontractivity condition in Assumption~\ref{ass: hypercontra} gives, for all $\bu,\bv\in\ell^2$,
\begin{equation}
\label{eqn:hypercontra}
    \EE_{\bx\sim \cD_{\cX}}\!\big[{\langle \bu, \bphi(\bx)\rangle}^2{\langle \bv,
\bphi(\bx)\rangle}^2\big]
\le
C\cdot
\EE_{\bx}\!\big[{\langle \bu,\bphi(\bx)\rangle}^2\big]\,
\EE_{\bx}\!\big[{\langle \bv,\bphi(\bx)\rangle}^2\big].
\end{equation}

\begin{proposition}[Noise structure]
\label{prop:noise-second-moment}
Suppose \eqref{eqn:hypercontra} holds and $\epsilon_k\sim\mathcal N(0,\sigma^2)$ is independent of $\bx_k$.
{Then, for every $0\le k<N$ and $j\ge1$,}
\[
\EE\big[(\xi_k^j)^2\big]
\le
\lambda_j\big(2C\,\EE[\cE(\btheta_k)]+\sigma^2\big).
\]
\end{proposition}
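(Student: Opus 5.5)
We expand $(\xi_k^j)^2$ from its definition \eqref{eq:def-xi-kj} and bound its conditional second moment given $\cF_k$, then take expectations. Since $\xi_k^j$ is a centered version of $g_k^j\coloneqq(r_k(\bx_k)-\epsilon_k)\langle\bv_j,\bphi(\bx_k)\rangle$, with conditional mean $\lambda_ju_k^j$, the conditional variance is at most the conditional second moment:
\[
\EE\big[(\xi_k^j)^2\mid\cF_k\big]\le \EE\big[(g_k^j)^2\mid\cF_k\big].
\]
Expanding the square of $r_k(\bx_k)-\epsilon_k$ gives three terms. The cross term $-2\,r_k(\bx_k)\epsilon_k\langle\bv_j,\bphi(\bx_k)\rangle^2$ has zero conditional mean, because $\epsilon_k$ is independent of $\bx_k$ and of $\cF_k$ and $\EE[\epsilon_k]=0$. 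What remains is
\[
\EE\big[r_k(\bx_k)^2\langle\bv_j,\bphi(\bx_k)\rangle^2\mid\cF_k\big]+\sigma^2\,\EE_{\bx}\big[\langle\bv_j,\bphi(\bx)\rangle^2\big].
\]

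\textbf{Noise term.} By the spectral setup, $\EE_{\bx}[\langle\bv_j,\bphi(\bx)\rangle^2]=\langle\bv_j,\cT\bv_j\rangle=\lambda_j$, so this term equals $\sigma^2\lambda_j$.

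\textbf{Signal term.} We apply hypercontractivity \eqref{eqn:hypercontra} with $\bv=\bv_j$ and $\bu$ a parameter representing $r_k$. Conditional on $\cF_k$, we write $r_k(\bx)=\langle\btheta_k,\bphi(\bx)\rangle-f^*(\bx)$. Using \eqref{eq:residual-coordinate-expansion} and \eqref{eq:bridge-ej-vj}, this is $\langle\bu_k,\bphi(\bx)\rangle$ with $\bu_k=\sum_j u_k^j\bv_j$ in the formal sense. The bound then gives
\[
\EE\big[r_k(\bx_k)^2\langle\bv_j,\bphi(\bx_k)\rangle^2\mid\cF_k\big]\le C\,\|r_k\|_{L^2}^2\,\lambda_j=2C\lambda_j\,\cE(\btheta_k),
\]
using \eqref{eq:kernel-excess-risk}. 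Taking total expectation and adding the noise contribution yields $\EE[(\xi_k^j)^2]\le\lambda_j(2C\,\EE[\cE(\btheta_k)]+\sigma^2)$, which is the claim.

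\textbf{Main obstacle.} The delicate step is justifying the application of \eqref{eqn:hypercontra} to $r_k$. When $s<1$, the coordinates $(\theta_j^*)$ need not be square-summable, so $\bu_k$ need not lie in $\ell^2$. We would handle this by truncation. Let $\bu_k^{(M)}=\sum_{j\le M}u_k^j\bv_j\in\ell^2$, so that $\langle\bu_k^{(M)},\bphi(\cdot)\rangle=\sum_{j\le M}\sqrt{\lambda_j}u_k^j\be_j\to r_k$ in $L^2(\cD_{\cX})$, since $\cE(\btheta_k)<\infty$ (inductively, $\EE\cE<\infty$). We apply the inequality to $\bu_k^{(M)}$ and $\bv_j$, and pass to the limit using Fatou's lemma along an a.e.-convergent subsequence. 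This shows the hypercontractive bound extends to all $r\in L^2$ in the closed span, and the constant $C$ is unchanged.
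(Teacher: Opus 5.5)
Your proposal is correct and follows essentially the same route as the paper. The paper splits $\xi_k^j$ into the centered product term $A_k-\EE[A_k\mid \bu_k]$ and the mean-zero label-noise term $B_k$, kills the cross term by independence of $\epsilon_k$, bounds the conditional variance by $\EE[A_k^2\mid\bu_k]+\sigma^2\lambda_j$, and then applies \eqref{eqn:hypercontra} together with $\EE[\langle\bv_j,\bphi(\bx)\rangle^2]=\lambda_j$. This is your computation, up to the order in which you center and expand.

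Your truncation-and-Fatou step is a correct addition that the paper omits. The paper applies \eqref{eqn:hypercontra} directly with $\bu=\bu_k$, even though it notes that for $s<1$ the target coordinates need not be square-summable. Your argument therefore fills a small rigor gap rather than changing the proof.
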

This proposition shows that the noise variance in each spectral direction scales with the corresponding curvature $\lambda_j$, while its overall amplitude is controlled by the current excess risk together with the label-noise variance~\citep{wu2022alignment, wang2023theoretical}.

\begin{proof}
Recall the definition
\[
\xi_k^j
=
\Big({\langle \bu_k,\bphi(\bx_k)\rangle}-\epsilon_k\Big)\,
{\langle \bv_j,\bphi(\bx_k)\rangle}
-\lambda_j u_k^j.
\]
Let $A_k\coloneqq {\langle \bu_k,\bphi(\bx_k)\rangle}\,{\langle \bv_j,\bphi(\bx_k)\rangle}$ and
$B_k\coloneqq \epsilon_k\,{\langle \bv_j,\bphi(\bx_k)\rangle}$.
Then $\xi_k^j=(A_k-\EE[A_k\mid \bu_k]) - B_k$ since $\EE[A_k\mid \bu_k]=\lambda_j u_k^j$.

Conditioned on $\bu_k$, we have $\EE[B_k\mid \bu_k]=0$ and $\EE[A_k-\EE[A_k\mid \bu_k]\mid \bu_k]=0$.
Moreover, $\epsilon_k$ is independent of $\bx_k$ and $\bu_k$, hence the cross term vanishes:
\[
\EE\!\big[(A_k-\EE[A_k\mid \bu_k])\,B_k\mid \bu_k\big]=0.
\]
Therefore,
\begin{align*}
\EE\big[(\xi_k^j)^2\mid \bu_k\big]
&=
\EE\big[(A_k-\EE[A_k\mid \bu_k])^2\mid \bu_k\big]
+
\EE\big[B_k^2\mid \bu_k\big]\\
&\le
\EE[A_k^2\mid \bu_k]
+
\EE[\epsilon_k^2]\cdot \EE\big[{\langle \bv_j,\bphi(\bx_k)\rangle}^2\big].
\end{align*}
The second term equals $\sigma^2\lambda_j$ because
\[
\EE\big[{\langle \bv_j,\bphi(\bx)\rangle}^2\big]
=
{\langle \bv_j,\cT\bv_j\rangle}
=
\lambda_j.
\]
For the first term, apply \eqref{eqn:hypercontra} with $\bu=\bu_k$ and $\bv=\bv_j$:
\[
\EE[A_k^2\mid \bu_k]
=
\EE\big[{\langle \bu_k,\bphi(\bx_k)\rangle}^2\,{\langle \bv_j,\bphi(\bx_k)\rangle}^2\mid \bu_k\big]
\le
C\cdot
\EE\big[{\langle \bu_k,\bphi(\bx_k)\rangle}^2\mid \bu_k\big]\,
\EE\big[{\langle \bv_j,\bphi(\bx_k)\rangle}^2\big].
\]
Using $\EE[{\langle \bv_j,\bphi(\bx_k)\rangle}^2]=\lambda_j$ and
\[
\EE\big[{\langle \bu_k,\bphi(\bx_k)\rangle}^2\mid \bu_k\big]
=
\EE_{\bx\sim\cD_{\cX}}\big[{\langle \bu_k,\bphi(\bx)\rangle}^2\big]
=
2\,\cE(\btheta_k),
\]
we obtain $\EE[A_k^2\mid \bu_k]\le 2C\,\lambda_j\,\cE(\btheta_k)$.
Combining the two bounds yields
\[
\EE\big[(\xi_k^j)^2 \mid \bu_k\big]
\le
\lambda_j\big(2C\,\cE(\btheta_k)+\sigma^2\big).
\]
Taking expectation over $\bu_k$ gives the desired bound.
\end{proof}

\paragraph{Signal learning and noise forgetting.}
The excess risk bound in Proposition~\ref{prop:coord-unroll}, together with the noise estimate in Proposition~\ref{prop:noise-second-moment}, motivates the following signal-learning function and forgetting kernel:
\begin{equation}
\label{eq:sgd-signal-kernel}
\cS(t)\coloneqq\sum_{j\ge1}\lambda_j(u_0^j)^2e^{-2\lambda_jt},
\qquad
\cK(t)\coloneqq\sum_{j\ge1}\lambda_j^2e^{-2\lambda_jt}.
\end{equation}
These functions put the expected excess-risk bound in the following form.

\begin{proposition}
\label{prop:risk-volterra}
Under the step-size condition of Proposition~\ref{prop:coord-unroll}, assume also that \eqref{eqn:hypercontra} holds. Then, for every $1\le k\le N$,
\begin{equation}
\label{eq:risk-volterra-functions}
2\EE[\cE(\btheta_k)]\le\cS(t_k)
+\sum_{i=0}^{k-1}\eta_i^2(2C\EE[\cE(\btheta_i)]+\sigma^2)\cK(t_k-t_{i+1}).
\end{equation}
\end{proposition}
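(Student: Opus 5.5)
This is a direct combination of Proposition~\ref{prop:coord-unroll} and Proposition~\ref{prop:noise-second-moment}, so the plan is to substitute the noise bound into the raw unrolled inequality and recognize the two sums as $\cS$ and $\cK$.

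First, start from \eqref{eq:risk-volterra-raw}, which holds under the step-size condition $\eta_{\max}\le\lambda_1^{-1}$. The first sum on its right-hand side is, by the definition \eqref{eq:sgd-signal-kernel}, exactly $\cS(t_k)$. No further manipulation is needed there. Note also that $u_0^j=-\theta_j^*$, since $\btheta_0=0$.

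Second, for each $i$ and $j$ in the second sum, invoke Proposition~\ref{prop:noise-second-moment}:
\[
\EE[(\xi_i^j)^2]\le\lambda_j\bigl(2C\,\EE[\cE(\btheta_i)]+\sigma^2\bigr).
\]
This bound is legitimate term by term because every summand carries the nonnegative weights $\eta_i^2\lambda_j e^{-2\lambda_j(t_k-t_{i+1})}$. The factor $2C\,\EE[\cE(\btheta_i)]+\sigma^2$ does not depend on $j$, so it can be pulled out of the inner sum over $j$. What remains inside is
\[
\sum_{j\ge1}\lambda_j^2e^{-2\lambda_j(t_k-t_{i+1})}=\cK(t_k-t_{i+1}).
\]
Since $t_k\ge t_{i+1}$ for $i\le k-1$, the argument of $\cK$ is nonnegative, and $\cK$ is finite there.

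The only point requiring care is this interchange of sums, i.e.\ the possibly infinite series over $j$. All terms are nonnegative, so Tonelli's theorem justifies exchanging expectation and summation and applying the bound termwise, even if some quantities are $+\infty$. One caveat: if $\EE[\cE(\btheta_i)]$ were infinite, the inequality would hold trivially. Finiteness actually follows inductively from the recursion, because $\cS(0)=\sum_j\lambda_j(\theta_j^*)^2<\infty$ and $\sum_j\lambda_j^2<\infty$. I expect no genuine obstacle beyond this bookkeeping.
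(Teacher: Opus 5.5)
Your proposal is correct and is exactly the paper's argument: substitute the bound from Proposition~\ref{prop:noise-second-moment} term by term into \eqref{eq:risk-volterra-raw}, then identify the two resulting sums with $\cS(t_k)$ and $\cK(t_k-t_{i+1})$ via \eqref{eq:sgd-signal-kernel}. Your remarks on nonnegativity (Tonelli) and on finiteness of $\EE[\cE(\btheta_i)]$ are sound bookkeeping that the paper leaves implicit.
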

\begin{proof}
Substitute $\EE[(\xi_i^j)^2]\le\lambda_j(2C\EE[\cE(\btheta_i)]+\sigma^2)$ from Proposition~\ref{prop:noise-second-moment} into \eqref{eq:risk-volterra-raw}, and collect the two sums using \eqref{eq:sgd-signal-kernel}.
\end{proof}

We next establish the properties of $\cS$ and $\cK$ needed to control the excess risk through \eqref{eq:risk-volterra-functions}.

\subsection{Properties of signal-learning function and forgetting kernel}
\label{subsec:bounding}

We establish positivity, monotonicity, power decay, integrability, and a local comparison bound for the functions in \eqref{eq:sgd-signal-kernel}. Based on the source and capacity conditions, we derive power-law upper bounds on their decay.

\begin{proposition}[Signal-learning properties]
\label{prop:bias-term}
Under Assumption~\ref{ass: source} and $\btheta_0=0$, the function $\cS$ is nonnegative and non-increasing. There exists a constant $c_S\eqsim1$, independent of $N$ and the LR schedule, such that, for all $t\ge0$,
\begin{equation}
\label{eq:signal-decay-property}
\cS(t)\lesssim(c_S+t)^{-s}.
\end{equation}
\end{proposition}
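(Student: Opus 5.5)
Since $\btheta_0=0$, we have $u_0^j=-\theta_j^*=-a_j\lambda_j^{(s-1)/2}$. Hence
\[
\cS(t)=\sum_{j\ge1}\lambda_j(\theta_j^*)^2e^{-2\lambda_jt}=\sum_{j\ge1}a_j^2\lambda_j^{s}e^{-2\lambda_jt}.
\]
Every summand is nonnegative and non-increasing in $t$, because $\lambda_j\ge0$. So $\cS$ is nonnegative and non-increasing, provided it is finite. Finiteness at $t=0$ follows from $\cS(0)=\sum_j a_j^2\lambda_j^s\le\lambda_1^s\sum_ja_j^2\le\lambda_1^s$. Here $\lambda_1\le\EE[k_\phi(\bx,\bx)]<\infty$, which is a constant independent of $N$ and of the schedule. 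All terms are nonnegative, so no convergence issues arise.

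For the decay bound we control each summand uniformly in $j$ by the elementary inequality
\[
\sup_{x\ge0}x^se^{-2xt}=\Bigl(\frac{s}{2et}\Bigr)^s\eqsim t^{-s},\qquad t>0,
\]
obtained by maximizing $x\mapsto x^se^{-2xt}$ at $x=s/(2t)$. This gives $a_j^2\lambda_j^se^{-2\lambda_jt}\le (s/(2e))^s\,t^{-s}a_j^2$. Summing over $j$ and using $\sum_ja_j^2\le1$ yields $\cS(t)\le (s/(2e))^s t^{-s}$ for all $t>0$.

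Combining this with $\cS(t)\le\cS(0)\le\lambda_1^s$ gives $\cS(t)\lesssim\min\{1,t^{-s}\}\eqsim(1+t)^{-s}$, so the statement holds with $c_S=1$ (any constant of order one works). The only point needing care is that the constants do not depend on $N$ or on the schedule. They depend only on $s$ and on $\lambda_1$, and $\lambda_1$ is bounded by the trace of $\cT$. Notably, the capacity condition is not needed for this step, since the source condition alone gives the $t^{-s}$ rate through the pointwise supremum bound.
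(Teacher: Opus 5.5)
Your proposal is correct and follows essentially the same route as the paper's proof. Both rewrite $\cS(t)=\sum_j a_j^2\lambda_j^s e^{-2\lambda_j t}$, bound it by $\lambda_1^s$ for small $t$ and by $\sup_{x\ge0}x^s e^{-2xt}=(s/(2et))^s$ for large $t$, and combine the two bounds with $c_S=1$; your remark that $\lambda_1\le\tr(\cT)<\infty$ makes explicit why the constant does not depend on $N$ or the schedule.
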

\begin{proof}
The source condition gives $u_0^j=-a_j\lambda_j^{(s-1)/2}$, so
\[
\cS(t)=\sum_{j\ge1}a_j^2\lambda_j^s e^{-2\lambda_jt}.
\]
It is easy to verify the nonnegativity and the monotonicity. Since $\sum_j a_j^2\le1$, we have $\cS(t)\le\lambda_1^s$ for all $t\ge0$. For $t\ge1$,
\[
\cS(t)\le\sup_{\lambda\ge0}\lambda^s e^{-2\lambda t}
=\left(\frac{s}{2et}\right)^s.
\]
Take $c_S=1$. Combining the bounds for $0\le t\le1$ and $t\ge1$ gives, for all $t\ge0$,
\[
\cS(t)\le 2^s\max\left\{\lambda_1^s,\left(\frac{s}{2e}\right)^s\right\}(1+t)^{-s}
\lesssim(c_S+t)^{-s},
\]
which proves \eqref{eq:signal-decay-property}.
\end{proof}

\begin{proposition}[Noise-forgetting properties]
\label{lem:spectral-laplace}
Under Assumption~\ref{ass: capacity}, set ${\rho}\coloneqq2-1/q>1$. The function $\cK$ is nonnegative and non-increasing. The following properties hold with $c_K=1$:
\begin{enumerate}
\item[(i)] \textbf{Power-decay upper bound.} For all $t\ge0$,
\begin{equation}
\label{eq:kernel-decay-property}
\cK(t)\lesssim(c_K+t)^{{-\rho}}.
\end{equation}
\item[(ii)] \textbf{Integrability.}
\begin{equation}
\label{eq:kernel-mass-property}
\|\cK\|_1\coloneqq\int_0^\infty\cK(u)\,\dd u
=\tfrac12\tr(\cT)<\infty.
\end{equation}
\item[(iii)] \textbf{Local integral comparison.} For all $r\ge0$ and $0\le h\le(2\lambda_1)^{-1}$,
\begin{equation}
\label{eq:kernel-local-property}
h\cK(r)\le2\int_r^{r+h}\cK(u)\,\dd u.
\end{equation}
\end{enumerate}
\end{proposition}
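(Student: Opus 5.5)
Nonnegativity and monotonicity of $\cK(t)=\sum_{j\ge1}\lambda_j^2e^{-2\lambda_jt}$ are immediate: every summand is nonnegative and non-increasing in $t$. For the remaining items we work directly with the spectral sum, using only $\lambda_j\lesssim j^{-q}$ from Assumption~\ref{ass: capacity} and the trace-class property $\tr(\cT)=\sum_j\lambda_j=\EE[k_\phi(\bx,\bx)]<\infty$.

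For (i), the bound $\cK(t)\le\sum_j\lambda_j^2\le\lambda_1\tr(\cT)<\infty$ handles $t\le1$. For $t\ge1$ we split the sum at the index $J\eqsim t^{1/q}$. On the head $j\le J$ we use $\lambda^2e^{-2\lambda t}\le\sup_{\lambda\ge0}\lambda^2e^{-2\lambda t}=(et)^{-2}$, so the head contributes at most $J\,t^{-2}\eqsim t^{1/q-2}$. On the tail $j>J$ we drop the exponential and use the capacity condition: $\sum_{j>J}\lambda_j^2\lesssim\sum_{j>J}j^{-2q}\eqsim J^{1-2q}\eqsim t^{(1-2q)/q}=t^{1/q-2}$. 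Combining the two regimes gives $\cK(t)\lesssim(1+t)^{-\rho}$ with $\rho=2-1/q$. The one point to watch is the tail: the capacity condition is only one-sided, but only upper bounds are needed here, so this causes no difficulty.

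For (ii), we apply Tonelli to interchange sum and integral: $\int_0^\infty\lambda_j^2e^{-2\lambda_ju}\dd u=\lambda_j/2$, hence $\|\cK\|_1=\tfrac12\sum_j\lambda_j=\tfrac12\tr(\cT)$. This is finite, either because $\EE[k_\phi(\bx,\bx)]<\infty$ or because $\sum_j j^{-q}<\infty$ for $q>1$.

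For (iii), it suffices to prove the inequality termwise and then sum. For each $j$ and each $u\in[r,r+h]$ we have $e^{-2\lambda_ju}\ge e^{-2\lambda_jr}e^{-2\lambda_jh}\ge e^{-1}e^{-2\lambda_jr}$, since $2\lambda_jh\le2\lambda_1h\le1$. Therefore $\int_r^{r+h}\lambda_j^2e^{-2\lambda_ju}\dd u\ge e^{-1}h\lambda_j^2e^{-2\lambda_jr}$. Summing over $j$ gives $h\cK(r)\le e\int_r^{r+h}\cK(u)\dd u$, and $e$ can be sharpened to $2$ using $(1-e^{-x})/x\ge1/2$ for $x\in[0,1]$, applied to $x=2\lambda_jh$. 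None of these steps is a serious obstacle; the most delicate step is choosing the split index $J$ in (i), where $J$ must absorb the constants from $\lambda_j\lesssim j^{-q}$.
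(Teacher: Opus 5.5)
Your proposal is correct and follows essentially the same argument as the paper. For (i) you split the spectral sum at index $\lceil t^{1/q}\rceil$, bounding the head by $\sup_{\lambda\ge0}\lambda^2 e^{-2\lambda t}$ and the tail by the capacity condition; for (ii) you interchange sum and integral; for (iii) you argue termwise using $(1-e^{-x})/x\ge 1/2$ on $[0,1]$. The only cosmetic difference is your intermediate constant $e$ in (iii) before sharpening it to $2$, a step the paper skips.
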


\begin{proof}
Nonnegativity and monotonicity follow from \eqref{eq:sgd-signal-kernel}. Also,
\[
\cK(0)=\sum_{j\ge1}\lambda_j^2\le\lambda_1\tr(\cT)<\infty.
\]
For $t\ge1$, let $j_0=\lceil t^{1/q}\rceil$. The capacity condition and $\sup_{\lambda\ge0}\lambda^2e^{-2t\lambda}\lesssim t^{-2}$ give
\[
\cK(t)\le\sum_{j\le j_0}\sup_{\lambda\ge0}\lambda^2e^{-2t\lambda}
+\sum_{j>j_0}\lambda_j^2
\lesssim j_0t^{-2}+j_0^{1-2q}
\lesssim t^{-2+1/q}.
\]
Take $c_K=1$. Combining the bound $\cK(t)\le\cK(0)$ for $0\le t\le1$ with the estimate for $t\ge1$ gives $\cK(t)\lesssim(1+t)^{{-\rho}}$ for all $t\ge0$, proving \eqref{eq:kernel-decay-property}. Interchanging the sum and the integral gives
\[
\int_0^\infty\cK(u)\,\dd u
=\sum_{j\ge1}\lambda_j^2\int_0^\infty e^{-2\lambda_ju}\,\dd u
=\frac12\sum_{j\ge1}\lambda_j.
\]
{ This proves \eqref{eq:kernel-mass-property}.} Finally, if $0<h\le(2\lambda_1)^{-1}$, then $2\lambda_jh\le1$ and
\[
\int_r^{r+h}e^{-2\lambda_ju}\,\dd u
=e^{-2\lambda_jr}\frac{1-e^{-2\lambda_jh}}{2\lambda_j}
\ge\frac h2e^{-2\lambda_jr},
\]
where $1-e^{-x}\ge x/2$ for $0\le x\le1$. Multiplying by $\lambda_j^2$ and summing proves \eqref{eq:kernel-local-property}.
\end{proof}

In the following analysis, we write the upper bounds as $\cS(t)\lesssim(1+t)^{-s}$ and $\cK(t)\lesssim(1+t)^{{-\rho}}$. We consider LR schedules satisfying
\begin{equation}
\label{eq:sgd-step-size-threshold}
\eta_{\max}<\frac{1}{2C\,\tr(\cT)}.
\end{equation}

On the one hand, this condition ensures the upper bound on the kernel-weighted sum in \eqref{eq:risk-volterra-functions}, as is shown in the following Lemma. On the other hand, since $\lambda_1\le\tr(\cT)$ and $C\ge1$ for the constant in \eqref{eqn:hypercontra} (by Cauchy--Schwarz inequality), \eqref{eq:sgd-step-size-threshold} also implies
\(
\eta_{\max}<(2\lambda_1)^{-1}.
\)
Thus, the condition in Propositions~\ref{prop:coord-unroll} and~\ref{prop:risk-volterra} holds.

\begin{lemma}[Kernel-weighted sum bound]
\label{lem:kernel-weighted-sum}
Assume the LR schedule satisfies \eqref{eq:sgd-step-size-threshold}. Then, for every $1\le k\le N$,
\begin{equation}
\label{eq:kernel-weighted-sum-bound}
2C\sum_{i=0}^{k-1}\eta_i^2\cK(t_k-t_{i+1})
\le4C\eta_{\max}\|\cK\|_1
<1.
\end{equation}
\end{lemma}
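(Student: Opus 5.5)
We need to bound $2C\sum_{i=0}^{k-1}\eta_i^2\cK(t_k-t_{i+1})$. The plan is to pull out one factor $\eta_{\max}$ and turn the remaining sum $\sum_i \eta_i\cK(t_k-t_{i+1})$ into a Riemann-type lower sum for $\int\cK$, using the local integral comparison \eqref{eq:kernel-local-property}. Concretely, we first write
\[
\sum_{i=0}^{k-1}\eta_i^2\cK(t_k-t_{i+1})\le \eta_{\max}\sum_{i=0}^{k-1}\eta_i\,\cK(t_k-t_{i+1}).
\]

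For each $i$, set $r_i\coloneqq t_k-t_{i+1}\ge0$ and $h_i\coloneqq\eta_i$. By \eqref{eq:sgd-step-size-threshold}, together with $\lambda_1\le\tr(\cT)$ and $C\ge1$, we have $h_i\le\eta_{\max}<(2\lambda_1)^{-1}$, so Proposition~\ref{lem:spectral-laplace}(iii) applies and gives
\[
\eta_i\cK(r_i)\le 2\int_{r_i}^{r_i+\eta_i}\cK(u)\,\dd u=2\int_{t_k-t_{i+1}}^{t_k-t_i}\cK(u)\,\dd u.
\]
The intervals $[t_k-t_{i+1},\,t_k-t_i]$ for $i=0,\dots,k-1$ are disjoint up to endpoints and their union is $[0,t_k]$, since $t_{i+1}-t_i=\eta_i$. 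Summing over $i$ and using $\cK\ge0$ therefore yields
\[
\sum_i\eta_i\cK(t_k-t_{i+1})\le 2\int_0^{t_k}\cK\le 2\|\cK\|_1.
\]
Multiplying by $2C\eta_{\max}$ gives the first inequality, $4C\eta_{\max}\|\cK\|_1$.

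For the strict inequality, Proposition~\ref{lem:spectral-laplace}(ii) gives $\|\cK\|_1=\tfrac12\tr(\cT)$, so $4C\eta_{\max}\|\cK\|_1=2C\eta_{\max}\tr(\cT)$, and this is $<1$ by \eqref{eq:sgd-step-size-threshold}. If some $\eta_i=0$, the corresponding term vanishes and the argument is unaffected. The only point needing care is checking that $h=\eta_i$ lies within the range allowed by (iii); this is the mild obstacle, and it is handled by the observation above.
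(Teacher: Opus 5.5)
Your proposal is correct and follows essentially the same argument as the paper: apply the local comparison (iii) with $r=t_k-t_{i+1}$ and $h=\eta_i$, use that the intervals $[t_k-t_{i+1},t_k-t_i]$ partition $[0,t_k]$, extract one factor of $\eta_{\max}$, and conclude via $\|\cK\|_1=\tfrac12\tr(\cT)$ and \eqref{eq:sgd-step-size-threshold}. The only differences are cosmetic: you pull out $\eta_{\max}$ before invoking (iii) rather than after, and you verify the admissible range of $h$ inside the proof, whereas the paper checks it in the text just before the lemma.
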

\begin{proof}

For every $1\le k\le N$, the intervals $[t_k-t_{i+1},t_k-t_i]$, with $0\le i<k$, partition $[0,t_k]$. Applying \eqref{eq:kernel-local-property} with $r=t_k-t_{i+1}$ and $h=\eta_i$ gives
\begin{align*}
\sum_{i=0}^{k-1}\eta_i^2\cK(t_k-t_{i+1})
&\le2\sum_{i=0}^{k-1}\eta_i\int_{t_k-t_{i+1}}^{t_k-t_i}\cK(u)\,\dd u\\
&\le2\eta_{\max}\int_0^{t_k}\cK(u)\,\dd u
\le2\eta_{\max}\|\cK\|_1.
\end{align*}
Multiplying by $2C$ proves the first inequality in \eqref{eq:kernel-weighted-sum-bound}. By \eqref{eq:kernel-mass-property} and \eqref{eq:sgd-step-size-threshold},
\[
4C\eta_{\max}\|\cK\|_1
=2C\eta_{\max}\tr(\cT)<1.
\]
This proves \eqref{eq:kernel-weighted-sum-bound}.
\end{proof}

\subsection{Excess risk upper bound}
\label{subsec:risk-from-properties}
We now combine the risk inequality \eqref{eq:risk-volterra-functions} with the properties established in Section~\ref{subsec:bounding}. We first prove uniform boundedness of the excess risk, then estimate the final noise contribution for power-decay schedules.

\begin{lemma}[Uniform boundedness of the excess risk]
\label{prop:risk-const-bound}
For LR schedules satisfying \eqref{eq:sgd-step-size-threshold}, the expected excess risk is uniformly bounded:
\[
\sup_{0\le k\le N}\EE[\cE(\btheta_k)]\lesssim1.
\]
\end{lemma}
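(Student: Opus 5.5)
The plan is to run a discrete Grönwall-type bootstrap on the Volterra inequality \eqref{eq:risk-volterra-functions}, using Lemma~\ref{lem:kernel-weighted-sum} to make the self-referential term a strict contraction. Set $M_k\coloneqq\max_{0\le i\le k}\EE[\cE(\btheta_i)]$. At $k=0$ we have $\btheta_0=\mathbf 0$, so $\EE[\cE(\btheta_0)]=\frac12\|f^*\|_{L^2}^2=\frac12\sum_j a_j^2\lambda_j^s\le\frac12\lambda_1^s\lesssim1$ by Assumption~\ref{ass: source}. All quantities are finite: each SGD step is an affine map with finite second moments, given $\EE[k_\phi(\bx,\bx)]<\infty$ and the hypercontractivity bound. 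Consequently $M_k<\infty$ for every finite $k$, and no a priori finiteness issue arises.

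For $1\le k\le N$, Proposition~\ref{prop:risk-volterra} gives
\[
2\EE[\cE(\btheta_k)]\le \cS(t_k)+\bigl(2C M_{k-1}+\sigma^2\bigr)\sum_{i=0}^{k-1}\eta_i^2\cK(t_k-t_{i+1}).
\]
Write $\gamma\coloneqq 4C\eta_{\max}\|\cK\|_1<1$. By Lemma~\ref{lem:kernel-weighted-sum}, $\sum_{i<k}\eta_i^2\cK(t_k-t_{i+1})\le \gamma/(2C)$. Bounding $\cS(t_k)\le\cS(0)\le\lambda_1^s$ by Proposition~\ref{prop:bias-term} then yields
\[
2\EE[\cE(\btheta_k)]\le \lambda_1^s+\gamma M_{k-1}+\frac{\gamma\sigma^2}{2C}.
\]
Hence $M_k\le \max\{M_{k-1},\tfrac12(\lambda_1^s+\gamma\sigma^2/(2C))+\tfrac{\gamma}{2}M_{k-1}\}$. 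By induction, $M_k\le B\coloneqq \max\{M_0,\,(\lambda_1^s+\sigma^2)/(2-\gamma)\}$. Indeed, if $M_{k-1}\le B$, the second branch is at most $\tfrac12(\lambda_1^s+\sigma^2)+\tfrac{\gamma}{2}B\le B$. Since $\sigma\eqsim1$, $\lambda_1\le\tr(\cT)<\infty$, and the assumed strict margin in \eqref{eq:sgd-step-size-threshold} keeps $\gamma$ bounded away from $1$, $B$ is an absolute constant independent of $N$ and of the schedule.

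The main care point is this last step. The bound $B$ degrades like $(2-\gamma)^{-1}$, which is harmless since $\gamma<1$ gives $2-\gamma>1$. So strictness in Lemma~\ref{lem:kernel-weighted-sum} is not needed beyond $\gamma\le1$, and the argument is robust. The only remaining subtlety is the factor $2$ relating $\cE$ to the left side of \eqref{eq:risk-volterra-functions}, which the induction above already accounts for.
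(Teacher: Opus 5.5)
Your proof is correct and follows the paper's argument: you take the same running maximum $M_k$, use the Volterra inequality together with Lemma~\ref{lem:kernel-weighted-sum} to make the feedback coefficient at most $1$, and close by induction with a schedule-independent constant. The paper uses $2C\sum_i\eta_i^2\cK(t_k-t_{i+1})<1$ directly and gets $M_k\le\max\{M_0,\cS(0)+\sigma^2/(2C)\}$; your only unstated step is $\gamma\sigma^2/(2C)\le\sigma^2$, which needs $C\ge 1/2$ and holds because $C\ge1$ by Cauchy--Schwarz.
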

\begin{proof}
Let $M_k\coloneqq\max_{0\le i\le k}\EE[\cE(\btheta_i)]$. By Lemma~\ref{lem:kernel-weighted-sum}, for every $1\le k\le N$,
\[
2C\sum_{i=0}^{k-1}\eta_i^2\cK(t_k-t_{i+1})<1.
\]
Combining this bound with \eqref{eq:risk-volterra-functions} and $\cS(t_k)\le\cS(0)$ gives
\[
\EE[\cE(\btheta_k)]
\le\tfrac12\cS(0)+\tfrac12 M_{k-1}+\frac{\sigma^2}{4C}.
\]
Induction therefore yields
\[
M_k\le\max\left\{\EE[\cE(\btheta_0)],\,\cS(0)+\frac{\sigma^2}{2C}\right\}
\lesssim1,
\]
using the bounded initial risk and the signal-decay bound at $t=0$.
\end{proof}

\begin{corollary}[Risk bound in terms of signal learning and noise forgetting]
\label{lem:risk-general-bound}
For LR schedules satisfying \eqref{eq:sgd-step-size-threshold} and every $1\le k\le N$,
\begin{equation}
\label{eq:sgd-functional-risk-bound}
\EE[\cE(\btheta_k)]\lesssim\cS(t_k)
+\underbrace{\sum_{i=0}^{k-1}\eta_i^2\cK(t_k-t_{i+1})}_{\cN_k}.
\end{equation}
\end{corollary}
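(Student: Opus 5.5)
The plan is to feed the uniform risk bound of Lemma~\ref{prop:risk-const-bound} back into the Volterra-type inequality of Proposition~\ref{prop:risk-volterra}. The feedback term is then linear in the kernel-weighted sum $\cN_k$, with an $O(1)$ coefficient.

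\textbf{Step 1 (check the hypotheses).} The step-size threshold \eqref{eq:sgd-step-size-threshold} gives $\eta_{\max}<(2\lambda_1)^{-1}\le\lambda_1^{-1}$, as observed right after that display. Hence both Proposition~\ref{prop:coord-unroll} and Proposition~\ref{prop:risk-volterra} apply. For every $1\le k\le N$ we therefore have
\[
2\EE[\cE(\btheta_k)]\le\cS(t_k)+\sum_{i=0}^{k-1}\eta_i^2\bigl(2C\EE[\cE(\btheta_i)]+\sigma^2\bigr)\cK(t_k-t_{i+1}).
\]

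\textbf{Step 2 (bound the noise amplitude).} By Lemma~\ref{prop:risk-const-bound}, $\sup_{0\le i\le N}\EE[\cE(\btheta_i)]\le M$ for a constant $M\lesssim1$ that is independent of $N$ and of the schedule, provided \eqref{eq:sgd-step-size-threshold} holds. The factor $2C\EE[\cE(\btheta_i)]+\sigma^2$ is therefore at most $2CM+\sigma^2\lesssim1$, since $C\eqsim1$ by hypercontractivity and $\sigma\eqsim1$ by the constant-noise assumption.

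\textbf{Step 3 (pull out the constant).} Factoring this constant out of the sum gives $2\EE[\cE(\btheta_k)]\le\cS(t_k)+(2CM+\sigma^2)\cN_k$, which is \eqref{eq:sgd-functional-risk-bound} after dividing by $2$.

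The only point needing care is to confirm that the constant in Lemma~\ref{prop:risk-const-bound} is genuinely uniform. It depends only on $\cS(0)\le\lambda_1^s$, the initial risk, $\sigma$, and $C$, and not on the schedule or on $N$. There is no real obstacle here: the contraction argument $2C\sum_i\eta_i^2\cK(\cdot)<1$ of Lemma~\ref{lem:kernel-weighted-sum} has already done the work.
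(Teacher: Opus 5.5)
Your proposal is correct and is essentially the paper's own proof. The paper likewise uses Lemma~\ref{prop:risk-const-bound} to bound $2C\EE[\cE(\btheta_i)]+\sigma^2$ by a constant independent of $N$ and of the schedule, and substitutes this bound into \eqref{eq:risk-volterra-functions}; the step-size check you make in Step~1 is the same observation the paper states right after \eqref{eq:sgd-step-size-threshold}.
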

\begin{proof}
Lemma~\ref{prop:risk-const-bound} bounds $2C\EE[\cE(\btheta_i)]+\sigma^2$ uniformly. Substitution into \eqref{eq:risk-volterra-functions} gives the claim.
\end{proof}

\subsubsection{Power-decay schedules}

We now consider power-decay schedules satisfying \eqref{eq:sgd-step-size-threshold}. For $p\ge0$ and $0\le i<N$, let
\[
\eta_i=\eta_0\Big(1-\frac{i}{N}\Big)^{p},
\]
with $\eta_0>0$. Write $T=t_N$ for the total intrinsic time.

\begin{lemma}[Signal-learning bound for power-decay schedules]
\label{lem:signal-term-power}
For the power-decay schedule above, we have
\[
\cS(T)\lesssim(1+T)^{-s}\eqsim(1+\eta_0N)^{-s}.
\]
\end{lemma}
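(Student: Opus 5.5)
The first inequality comes straight from Proposition~\ref{prop:bias-term}: it gives $\cS(t)\lesssim(c_S+t)^{-s}$ with $c_S=1$ for every $t\ge0$. Evaluating at $t=T=t_N$ yields $\cS(T)\lesssim(1+T)^{-s}$. This uses no property of the schedule beyond $T\ge0$.

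For the equivalence, I would compute $T$ explicitly. By definition,
\[
T=\sum_{i=0}^{N-1}\eta_i=\eta_0\sum_{i=0}^{N-1}\Bigl(1-\frac iN\Bigr)^p=\eta_0 N\cdot\frac1N\sum_{m=1}^{N}\Bigl(\frac mN\Bigr)^p,
\]
where I reindexed with $m=N-i$. Call the last average $R_N$. Each term satisfies $(m/N)^p\le1$, so $R_N\le1$. For the lower bound, the terms with $m\ge N/2$ each contribute at least $2^{-p}$, and there are at least $N/2$ of them, so $R_N\ge 2^{-p-1}$. Alternatively, $R_N$ is an upper Riemann sum of the increasing function $x^p$, so $R_N\ge\int_0^1x^p\,\dd x=1/(p+1)$. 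In either case $R_N\eqsim1$ with constants depending only on $p$, hence $T\eqsim\eta_0N$ and
\[
(1+T)^{-s}\eqsim(1+\eta_0N)^{-s}.
\]
The last step uses that $1+cx\eqsim1+x$ for any fixed $c>0$.

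There is no real obstacle here. The only point needing care is that the comparison constants depend only on $p$ and $s$, and not on $N$ or $\eta_0$; the two-sided bound on $R_N$ ensures this.
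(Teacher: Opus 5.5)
Your proof is correct and follows the paper's argument. The paper also obtains the first inequality from Proposition~\ref{prop:bias-term} at $t=T$, and gets $T\eqsim\eta_0N$ by comparing the power sum $\eta_0N^{-p}\sum_{\ell=1}^N\ell^p$ with an integral; you simply make that comparison explicit with two-sided bounds on $R_N$ whose constants depend only on $p$.
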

\begin{proof}
Comparing the power sum with an integral gives
\[
T=\eta_0N^{-p}\sum_{\ell=1}^N\ell^p\eqsim\eta_0N.
\]
The signal-decay property \eqref{eq:signal-decay-property} then gives the claim.
\end{proof}

\begin{lemma}[Noise bound for power-decay schedules]
\label{prop:noise-term-power}
For the power-decay schedule above, let ${a=p/(p+1)}$ and ${\mu=\min\{\rho-1,a\}}$, where ${\rho=2-1/q}$ is the kernel-decay exponent from Proposition~\ref{lem:spectral-laplace}. Then
\begin{equation}
\label{eq:noise-bound-kernel-properties}
\cN_N=\sum_{i=0}^{N-1}\eta_i^2\cK(T-t_{i+1})
\lesssim\eta_0T^{-\mu}
[\log(2+T)]^{\mathbf1_{\{{a=\rho-1}\}}}.
\end{equation}
\end{lemma}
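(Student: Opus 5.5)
Throughout, $\cK$ denotes the spectral kernel of~\eqref{eq:sgd-signal-kernel}. The plan is to replace the discrete sum $\cN_N$ by an integral in intrinsic time and then reduce to Lemma~\ref{lem:rescaled-profile-convolution}. Write $\tau_i\coloneqq t_i$. Since $\eta_i\le\eta_{\max}<(2\lambda_1)^{-1}$, the local comparison \eqref{eq:kernel-local-property} with $r=T-t_{i+1}$ and $h=\eta_i$ gives
\[
\eta_i^2\cK(T-t_{i+1})\le 2\eta_i\int_{T-t_{i+1}}^{T-t_i}\cK(u)\,\dd u .
\]
The intervals $[T-t_{i+1},T-t_i]$ partition $[0,T]$. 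Define the piecewise-constant intrinsic-time LR $\varphi(\tau)\coloneqq\eta_i$ for $\tau\in[t_i,t_{i+1})$. Then
\[
\cN_N\le 2\int_0^T\cK(T-\tau)\varphi(\tau)\,\dd\tau\lesssim\int_0^T(1+T-\tau)^{-\rho}\varphi(\tau)\,\dd\tau,
\]
by the power-decay bound \eqref{eq:kernel-decay-property}. This step is routine.

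The next step is to show that $\varphi(\tau)\lesssim \eta_0(1-\tau/T)^{a}$ for all $\tau\in[0,T)$. Put $m\coloneqq N-i$. By the comparison used in Lemma~\ref{lem:signal-term-power}, $T\eqsim\eta_0 N$, and the remaining intrinsic time after step $i$ satisfies
\[
T-t_i=\eta_0N^{-p}\sum_{\ell=1}^{m}\ell^p\eqsim \eta_0N^{-p}m^{p+1}.
\]
Hence $1-t_i/T\eqsim (m/N)^{p+1}$. On the other hand $\eta_i=\eta_0(m/N)^p\eqsim\eta_0(1-t_i/T)^{a}$. For $\tau\in[t_i,t_{i+1})$ we have $1-\tau/T\ge 1-t_{i+1}/T\eqsim((m-1)/N)^{p+1}$, which is comparable to $(m/N)^{p+1}$ whenever $m\ge2$. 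So the bound holds except on the last step $i=N-1$. That step contributes at most $\eta_{N-1}^2\cK(0)\lesssim \eta_0^2N^{-2p}$ directly to $\cN_N$. Since $\eta_0\lesssim1$ and $T\lesssim N$, I will check that this is $\lesssim\eta_0T^{-\mu}$: it needs $\eta_0N^{-2p}\lesssim T^{-\mu}$, and this follows from $T^{\mu}\le T^{a}\lesssim N^{a}\le N^{2p}\cdot$const for $p>0$. The case $p=0$ is the constant schedule, where $\varphi\equiv\eta_0$ directly.

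The final step is to set $g(x)\coloneqq(1-x)^{a}$ and apply Lemma~\ref{lem:rescaled-profile-convolution} with $\delta=0$ and $T\gtrsim1$:
\[
\int_0^T(1+T-\tau)^{-\rho}\varphi(\tau)\,\dd\tau\lesssim\eta_0\int_0^T(1+T-\tau)^{-\rho}g(\tau/T)\,\dd\tau\eqsim\eta_0\,T^{-\min\{a,\rho-1\}}[\log T]^{\mathbf 1\{a=\rho-1\}} .
\]
This is exactly $\eta_0T^{-\mu}[\log(2+T)]^{\mathbf1\{a=\rho-1\}}$. When $T\lesssim1$, the bound follows trivially from Lemma~\ref{lem:kernel-weighted-sum}-type estimates: $\cN_N\lesssim\eta_0\|\cK\|_1\lesssim\eta_0$. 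I expect the main care to be needed in the discrete-to-continuous comparison $1-t_i/T\eqsim(m/N)^{p+1}$ uniformly in $m$ (the sum-versus-integral bounds for $\sum\ell^p$ near small $m$) and in the handling of the final steps. Everything else is a direct application of the earlier lemmas.
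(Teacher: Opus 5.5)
Your proposal is correct and follows essentially the same route as the paper. Both arguments use the local comparison to pass to an intrinsic-time integral, then the envelope $\eta_i\lesssim\eta_0\,(\text{remaining intrinsic time}/T)^{a}$ with $a=p/(p+1)$, and finally a three-case estimate of the resulting kernel integral. The only differences are cosmetic: the paper writes the envelope as $\eta_i\lesssim\eta_0T^{-a}(1+r_i)^{a}$ with $r_i=T-t_{i+1}$, whose ``$+1$'' absorbs the final step so it need not be split off, and it evaluates $\int_0^T(1+u)^{a-\rho}\,du$ directly rather than invoking Lemma~\ref{lem:rescaled-profile-convolution}.
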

\begin{proof}
For $0<T\le1$, boundedness of $\cK$ gives
\[
\cN_N\lesssim\sum_{i=0}^{N-1}\eta_i^2
\le\eta_0T\le\eta_0T^{-\mu},
\]
which implies the claim also when the logarithmic factor is present.

Now suppose $T>1$. Let $r_i=T-t_{i+1}$ and $m=N-i-1$. Comparing power sums with integrals gives
\[
r_i=\eta_0N^{-p}\sum_{\ell=1}^m\ell^p
\eqsim\eta_0N^{-p}m^{p+1},
\]
where the sum is zero when $m=0$. Using $T\eqsim\eta_0N$, for $p>0$ we obtain
\[
\frac{\eta_i}{\eta_0}=\left(\frac{m+1}{N}\right)^p
\lesssim\left(\frac{r_i}{T}\right)^{{a}}+N^{-p}
\lesssim T^{{-a}}(1+r_i)^{{a}}.
\]
The step-size condition gives $\eta_0\lesssim1$, hence $T\lesssim N$. Together with ${a\le p}$, this gives $N^{-p}\lesssim T^{{-a}}$ and proves the last inequality. Thus,
\begin{equation}
\label{eq:power-lr-envelope}
\eta_i\lesssim\eta_0T^{{-a}}(1+r_i)^{{a}}.
\end{equation}
This also covers the final update $m=0$, and is immediate for $p=0$.

The intervals $[r_i,r_i+\eta_i]$ partition $[0,T]$. By the local comparison and \eqref{eq:power-lr-envelope},
\begin{align}
\cN_N
&\le2\sum_{i=0}^{N-1}\eta_i\int_{r_i}^{r_i+\eta_i}\cK(u)\,\dd u\notag\\
&\lesssim\eta_0T^{{-a}}\sum_{i=0}^{N-1}
\int_{r_i}^{r_i+\eta_i}(1+u)^{{a}}\cK(u)\,\dd u\notag\\
&=\eta_0T^{{-a}}\int_0^T(1+u)^{{a}}\cK(u)\,\dd u.
\label{eq:noise-weighted-kernel-integral}
\end{align}
Here ${a\ge0}$ and $u\ge r_i$ justify the second inequality, including $r_{N-1}=0$. The decay property alone gives
\[
\int_0^T(1+u)^{{a}}\cK(u)\,\dd u
\lesssim\int_0^T(1+u)^{{a-\rho}}\,\dd u
\lesssim
\begin{cases}
1,&{a<\rho-1},\\
\log(2+T),&{a=\rho-1},\\
T^{{a-\rho}+1},&{a>\rho-1}.
\end{cases}
\]
Substitution into \eqref{eq:noise-weighted-kernel-integral} proves \eqref{eq:noise-bound-kernel-properties}.
\end{proof}

\subsection{Proof of Theorem~\ref{thm:power-rate}}
\label{app:proof-theorem-power-cosine}

\begin{proof}
Choose the constant prefactor in $\eta_0$ sufficiently small so that \eqref{eq:sgd-step-size-threshold} holds. Corollary~\ref{lem:risk-general-bound} gives
\[
\EE[\cE(\btheta_N)]\lesssim\cS(T)+\cN_N.
\]
Combining the signal and noise bounds in Lemmas~\ref{lem:signal-term-power} and~\ref{prop:noise-term-power}, and using $T\eqsim\eta_0N$, we obtain
\begin{equation}
\label{eq:power-master-bound}
\EE[\cE(\btheta_N)]
\lesssim(\eta_0N)^{-s}
+\eta_0(\eta_0N)^{-\mu}
[\log(2+\eta_0N)]^{\mathbf1_{\{p=q-1\}}},
\end{equation}
where $\mu=\min\{1-1/q,p/(p+1)\}$. Here the logarithmic factor appears precisely when ${a=\rho-1}$, or equivalently $p=q-1$. We now choose the schedule parameters in the two regimes.

\paragraph{(i) Easy-task regime ($s\ge1-1/q$).}
Choose $p>q-1$. Then $\mu=1-1/q$ and the logarithmic factor is absent. Writing $x\coloneqq\eta_0N$, \eqref{eq:power-master-bound} becomes
\[
\EE[\cE(\btheta_N)]\lesssim x^{-s}+N^{-1}x^{1/q}.
\]
Balancing the two terms gives $x^{s+1/q}\eqsim N$, which corresponds to
\[
\eta_0\eqsim N^{-\frac{sq-q+1}{sq+1}}.
\]
Since $s\ge1-1/q$, the exponent of $N$ in this choice is nonpositive, so a sufficiently small constant prefactor ensures \eqref{eq:sgd-step-size-threshold} for all $N\ge1$. Substitution into the risk bound yields
\[
\EE[\cE(\btheta_N)]\lesssim N^{-\frac{sq}{sq+1}}.
\]

\paragraph{(ii) Hard-task regime ($s<1-1/q$).}
Take $\eta_0\eqsim1$, with a sufficiently small constant to satisfy \eqref{eq:sgd-step-size-threshold}. Then \eqref{eq:power-master-bound} gives
\[
\EE[\cE(\btheta_N)]\lesssim N^{-s}
+N^{-\mu}[\log(2+N)]^{\mathbf1_{\{p=q-1\}}}.
\]
Choose $p>s/(1-s)$, so that $p/(p+1)>s$. Together with $s<1-1/q$, this implies $\mu>s$. Thus, even when the logarithmic factor is present,
\[
N^{-\mu}[\log(2+N)]^{\mathbf1_{\{p=q-1\}}}=o(N^{-s}).
\]
It follows that $\EE[\cE(\btheta_N)]\lesssim N^{-s}$, completing the proof.
\end{proof}

\subsection{Proof of Proposition~\ref{prop:lower-bound-any-lrs}}
\label{app:proof-prop-lower-bound-any-lrs}

\begin{proof}
Let $\eta_{\max}\coloneqq \max_{0\le i\le N-1}\eta_i$ and assume $\eta_{\max}\lesssim1$.
We prove that there exists a data distribution $\cD$ satisfying Assumptions~{\ref{ass: hypercontra},} \ref{ass: capacity} and~\ref{ass: source}
such that $\EE[\cE(\btheta_N)]\gtrsim N^{-s}$. This implies the desired lower bound on
$\sup_{\cD}\EE[\cE(\btheta_N)]$.

Recall from \eqref{eq:coord-recursion-compact} that for each $j\ge 1$,
\[
u_{k+1}^j=(1-\eta_k\lambda_j)u_k^j-\eta_k\xi_k^j,
\qquad \text{with }\EE[\xi_k^j\mid \bu_k]=0.
\]
By the second-moment update \eqref{eq:second-moment-recursion-uncond}, we have
\[
\EE[(u_{k+1}^j)^2]
=
(1-\eta_k\lambda_j)^2\EE[(u_k^j)^2]+\eta_k^2\EE[(\xi_k^j)^2]
\ge (1-\eta_k\lambda_j)^2\EE[(u_k^j)^2].
\]
Iterating this inequality yields, for any $k\ge 1$,
\begin{equation}
\label{eq:lb-second-moment-product}
\EE[(u_k^j)^2]
\ge
\prod_{i=0}^{k-1}(1-\eta_i\lambda_j)^2\,(u_0^j)^2.
\end{equation}

Fix an operator spectrum $\lambda_j=j^{-q}$ (which satisfies the capacity condition with parameter $q$).
Take Gaussian features in $\ell^2$
\[
\bphi(\bx)=\sum_{j\ge1}\sqrt{\lambda_j}\,g_j\bv_j,
\qquad g_j\stackrel{\mathrm{iid}}{\sim}\mathcal N(0,1).
\]
Since $\sum_j\lambda_j<\infty$, these features lie in ${\ell^2}$ almost surely, with covariance $\cT\bv_j=\lambda_j\bv_j$ and kernel eigenfunctions $\be_j(\bx)=g_j$.
For any feature projections $X={\langle\bu,\bphi(\bx)\rangle}$ and $Y={\langle\bv,\bphi(\bx)\rangle}$, we have the Gaussian identity
\[
\EE[X^2Y^2]=\EE[X^2]\EE[Y^2]+2\big(\EE[XY]\big)^2
\]
By Cauchy--Schwarz inequality, we verify Assumption~\ref{ass: hypercontra} with constants $1$ and $3$, uniformly in $N$ and the target.

Choose a target function supported on a single eigen-direction:
\[
f^*(\bx)=\lambda_{j^*}^{s/2}\,\be_{j^*}(\bx),
\quad\text{i.e.,}\quad
a_{j^*}=1,\ a_j=0\ (j\neq j^*),
\]
where the index $j^*$ is to be determined. This target function satisfies the source condition (Assumption~\ref{ass: source}) with $\sum_j a_j^2=1$.
{We take independent regression noise $\epsilon\sim\mathcal N(0,1)$.}
Note that the lower bound below only uses \eqref{eq:lb-second-moment-product}, and therefore it does not depend on the noise distribution.

We initialize $\btheta_0=0$, so $u_0^j=-\theta_j^*$.
For the above $f^*$, the corresponding coefficients in ${\ell^2}$ satisfy
$\theta_{j^*}^*=\lambda_{j^*}^{\frac{s-1}{2}}$ and $\theta_j^*=0$ for $j\neq j^*$,
hence
\begin{equation}
\label{eq:u0-single}
(u_0^j)^2=
\begin{cases}
\lambda_{j^*}^{s-1}, & j=j^*,\\
0, & j\neq j^*.
\end{cases}
\end{equation}
Recall $\cE(\btheta_k)=\frac12\sum_{j\ge1}\lambda_j(u_k^j)^2$ (cf.\ \eqref{eq:risk-component-form-app}).
Combining \eqref{eq:lb-second-moment-product} and \eqref{eq:u0-single} gives
\[
\EE[\cE(\btheta_N)]
\ge
\frac12\,\lambda_{j^*}\,\EE[(u_N^{j^*})^2]
\ge
\frac12\,\lambda_{j^*}\prod_{i=0}^{N-1}(1-\eta_i\lambda_{j^*})^2\lambda_{j^*}^{s-1}
=
\frac12\,\lambda_{j^*}^{s}\prod_{i=0}^{N-1}(1-\eta_i\lambda_{j^*})^2.
\]
Let $T\coloneqq t_N=\sum_{i=0}^{N-1}\eta_i$. Since $T\le \eta_{\max}N\lesssim N$, it suffices to show
{$\EE[\cE(\btheta_N)]\gtrsim(1+T)^{-s}$.}
Define
\[
j_0\coloneqq \min\Big\{j\ge 1:\ \eta_{\max}\lambda_j\le \tfrac12\Big\}.
\]
For any $j\ge j_0$, we have $\eta_i\lambda_j\le 1/2$ for all $i$, and thus
$1-x\ge e^{-2x}$ for $x\in[0,1/2]$. This gives
\[
\prod_{i=0}^{N-1}(1-\eta_i\lambda_j)^2
\ge
\prod_{i=0}^{N-1} e^{-4\eta_i\lambda_j}
=
e^{-4\lambda_j T}.
\]
Hence for any $j^*\ge j_0$,
\begin{equation}
\label{eq:lb-risk-exp}
\EE[\cE(\btheta_N)]
\ge
\frac12\,\lambda_{j^*}^{s}\,e^{-4\lambda_{j^*}T}.
\end{equation}

Let
\[
j^*\coloneqq\max\left\{j_0,\left\lceil(1+T)^{1/q}\right\rceil\right\}.
\]
Since $\eta_{\max}\lesssim1$, the index $j_0$ is uniformly bounded independently of $N$ and the schedule. Hence
$\lambda_{j^*}=(j^*)^{-q}\eqsim(1+T)^{-1}$ and $\lambda_{j^*}T\le1$.
Therefore, for all $T\ge0$, \eqref{eq:lb-risk-exp} gives that
\[
\EE[\cE(\btheta_N)]
\ge\tfrac12\lambda_{j^*}^s e^{-4\lambda_{j^*}T}
\gtrsim(1+T)^{-s}.
\]
Since $T\le\eta_{\max}N\lesssim N$, this proves
\[
\sup_{\cD}\EE[\cE(\btheta_N)]\gtrsim N^{-s}.
\]

\end{proof}

\section{Additional experiments}
\label{app:additional-experiments}

\subsection{PLK regression}
\label{app:additional-plk}

We provide additional numerical experiments to test Theorem~\ref{thm: optimal-fractional-lrs} beyond the representative cosine and power decay in the main text. We consider three fractional schedules, cosine decay, linear decay, and 1-sqrt decay, and choose representative $(s,q)$ settings covering the three regimes in Figure~\ref{fig: intro}(right).

For each schedule and $(s,q)$ setting, we run one-pass SGD in PLK regression over different training horizons $N$ and tune the peak LR to minimize the average final-step normalized loss. The dashed lines in Figure~\ref{fig:appendix_4learning_rates_column} show the corresponding theoretical scaling rates predicted by Theorem~\ref{thm: optimal-fractional-lrs}.

Across all tested schedules and $(s,q)$ settings, the observed log-log slopes agree well with the theoretical predictions. In particular, the empirical scaling matches the predicted rate in each of the blue, red, and green regimes of Figure~\ref{fig: intro}(right). These experiments therefore support the full phase diagram and show that schedule-induced capacity saturation is not specific to the cosine and power-decay comparison in the main text.

\begin{figure}[!h]
\centering
    \includegraphics[width=\textwidth]{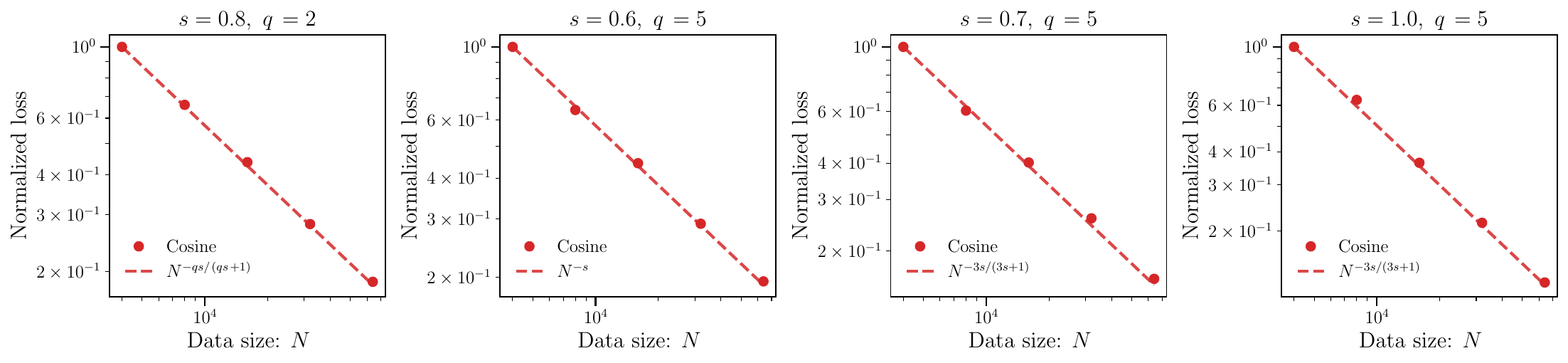}\\[-0.5em]
    {\small \textbf{(a)} Cosine decay.}

    \vspace*{.5em}
    \includegraphics[width=\textwidth]{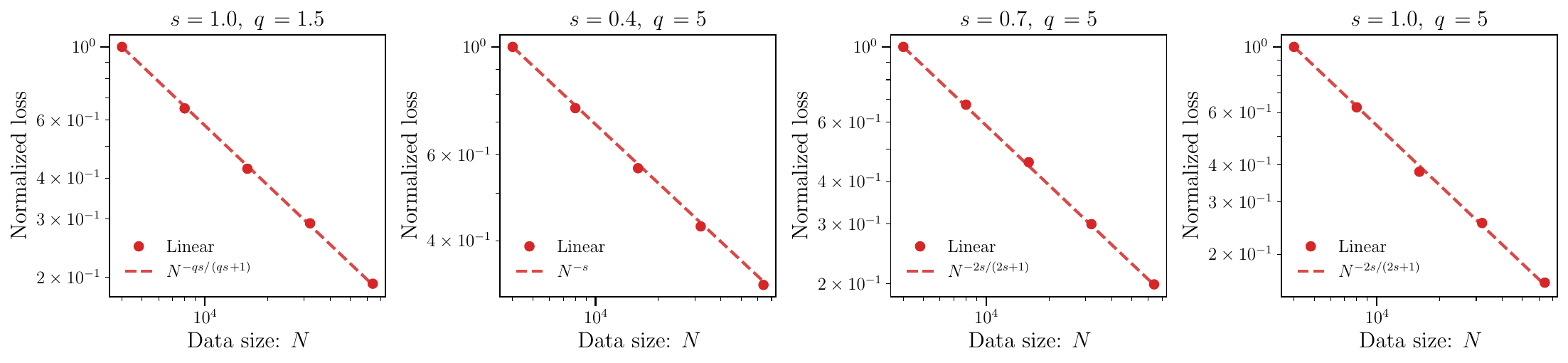}\\[-0.5em]
    {\small \textbf{(b)} Linear decay.}\\[-0.2em]

    \vspace*{.5em}
    \includegraphics[width=\textwidth]{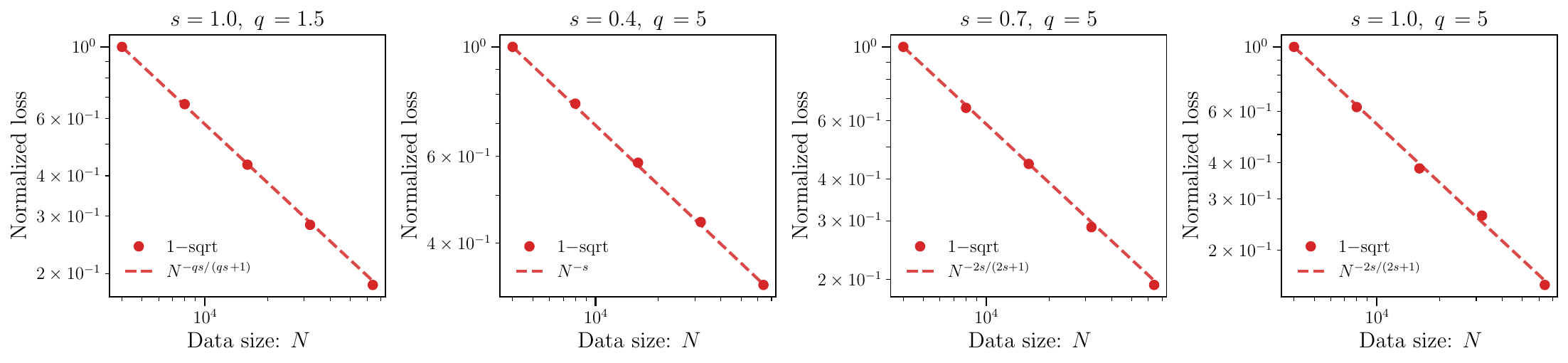}\\[-0.5em]
    {\small \textbf{(c)} 1-sqrt decay.}\\[-0.2em]

\caption{
\textbf{Scaling rates of fractional LR schedules across the regimes in Theorem~\ref{thm: optimal-fractional-lrs}.}
Each row corresponds to one schedule family, while the four columns use representative settings from the blue, red, and green regions of Figure~\ref{fig: intro}(right). The first two columns illustrate the optimal easy- and hard-task rates, whereas the last two show the capacity-saturated rate governed by $\alpha=\min\{q,p+1\}$. Dashed lines indicate the theoretical predictions. For each $N$, the peak LR is tuned separately, and the reported loss is averaged over $500$ independent SGD runs.
}
\label{fig:appendix_4learning_rates_column}
\end{figure}

\subsection{MLP regression}
\label{app:additional-mlp-lr005}

We provide additional Fourier regression experiments using the setup of Section~\ref{subsec:mlp-fourier}, changing only the peak LR to $0.05$. As shown in Figure~\ref{fig:appendix-mlp-lr005}, the optimal decay ratio for the easier target $\gamma=10$ is now $0.9$, but its validation loss is nearly identical to that at $r=1$.

\begin{figure}[!htbp]
\centering
    \begin{minipage}[b]{0.43\textwidth}
        \centering
        \includegraphics[width=\linewidth]{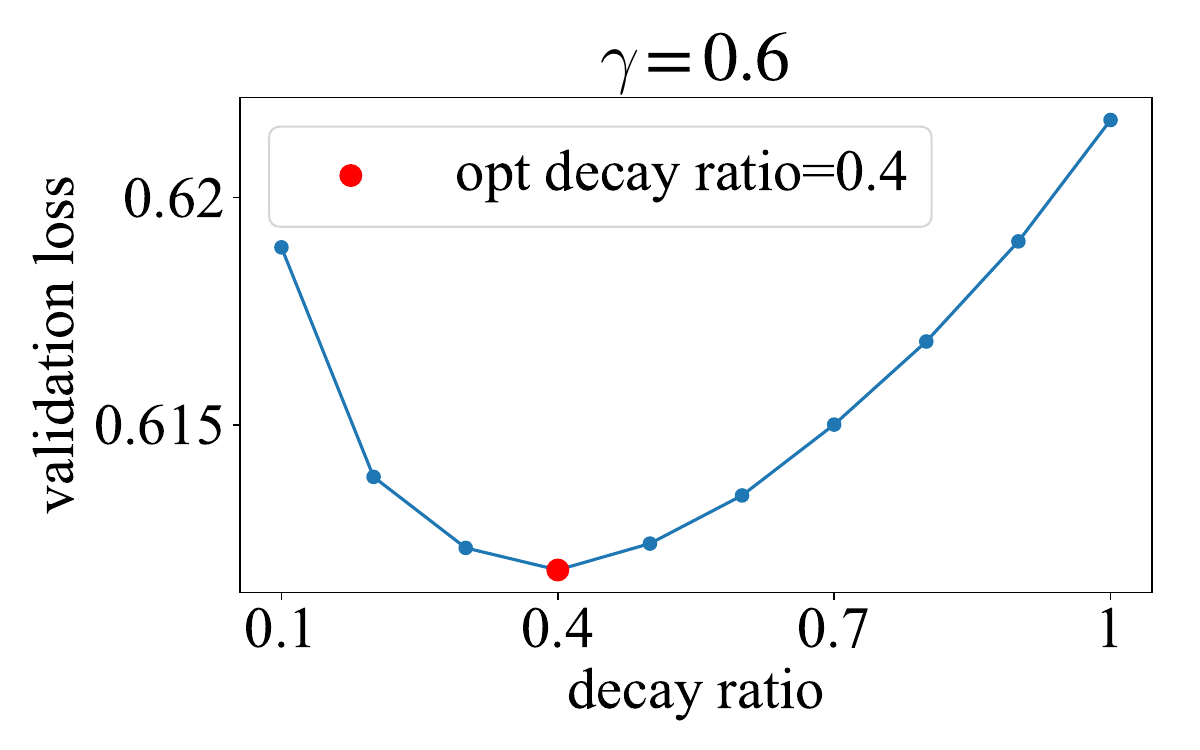}
    \end{minipage}
    \hspace*{2em}
    \begin{minipage}[b]{0.43\textwidth}
        \centering
        \includegraphics[width=\linewidth]{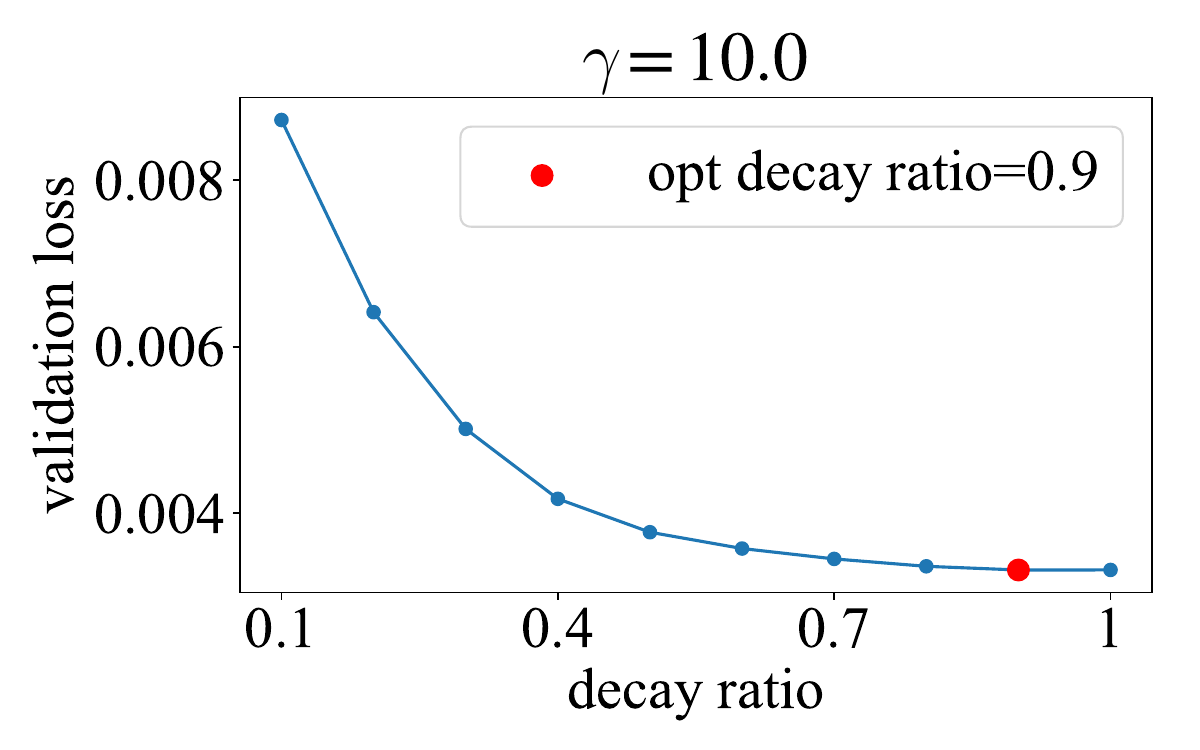}
    \end{minipage}
\caption{\small
\textbf{Decay ratio versus validation loss for two-layer MLPs with peak LR $0.05$.} All other settings match the main-text Fourier experiment. \textbf{(left)} For $\gamma=0.6$, the mean loss is minimized at $r=0.4$. \textbf{(right)} For $\gamma=10$, the mean loss is minimized at $r=0.9$, with nearly identical loss at $r=1$. Points show means over $32$ runs, with red markers indicating the grid minimizers.}
\label{fig:appendix-mlp-lr005}
\end{figure}

\subsection{CIFAR-100 classification}
\label{app:additional-cifar100}

We provide additional CIFAR-Full and CIFAR-Binary experiments using the setup of Section~\ref{subsec:cifar100}, changing only the peak LR to $0.1$ (Figure~\ref{fig:appendix-cifar-lr01}) or the batch size to $50$ (Figure~\ref{fig:appendix-cifar-b50}).

\begin{figure}[!htbp]
\centering
    \begin{minipage}[b]{0.43\textwidth}
        \centering
        \includegraphics[width=\linewidth]{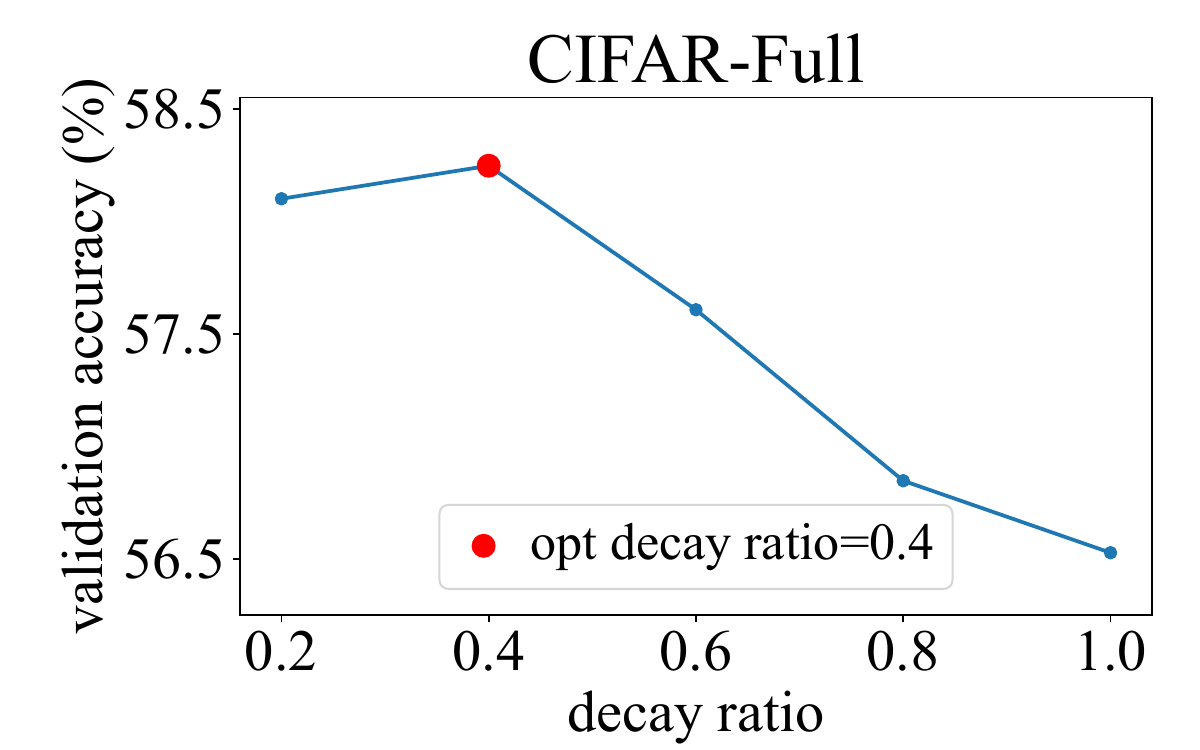}
    \end{minipage}
    \hspace*{2em}
    \begin{minipage}[b]{0.43\textwidth}
        \centering
        \includegraphics[width=\linewidth]{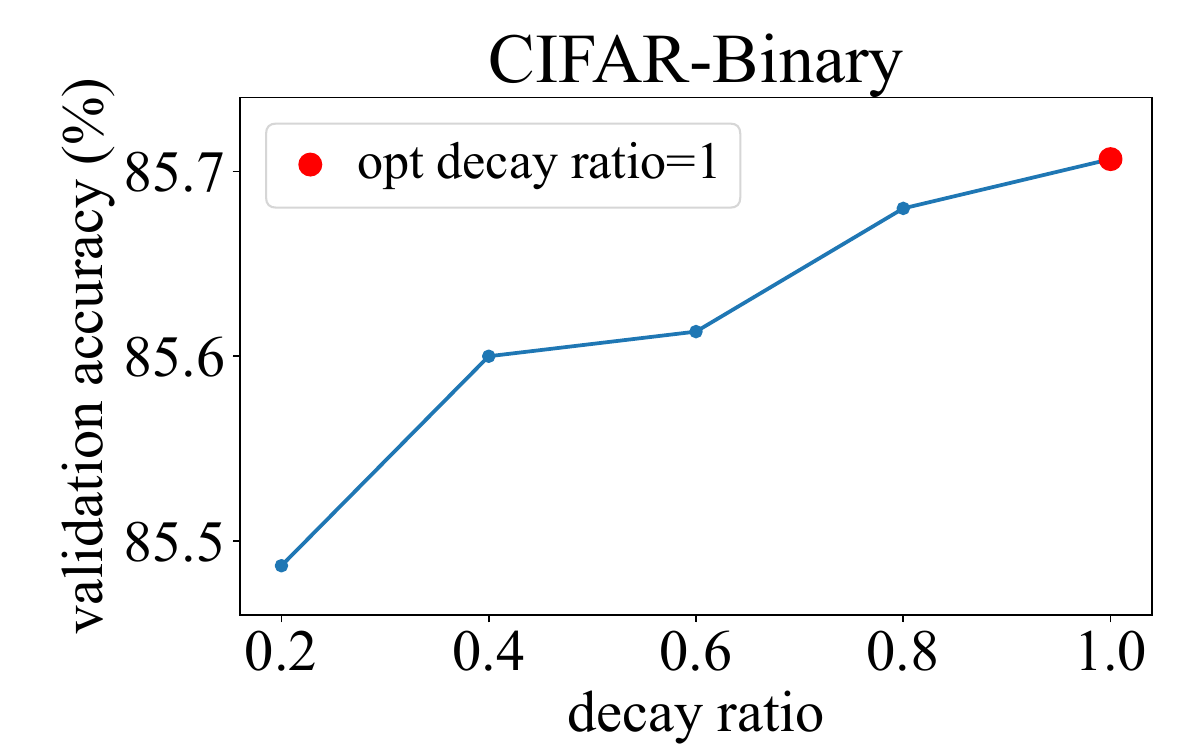}
    \end{minipage}
\caption{\small
\textbf{Decay ratio versus final validation accuracy with peak LR $0.1$.} Both tasks use batch size $20$ and $5$ training epochs. \textbf{(left)} CIFAR-Full is maximized at $r=0.4$; \textbf{(right)} CIFAR-Binary is maximized at $r=1$. Points show means over three runs, with red markers indicating the grid maximizers of the mean curves.}
\label{fig:appendix-cifar-lr01}
\end{figure}

\begin{figure}[!htbp]
\centering
    \begin{minipage}[b]{0.43\textwidth}
        \centering
        \includegraphics[width=\linewidth]{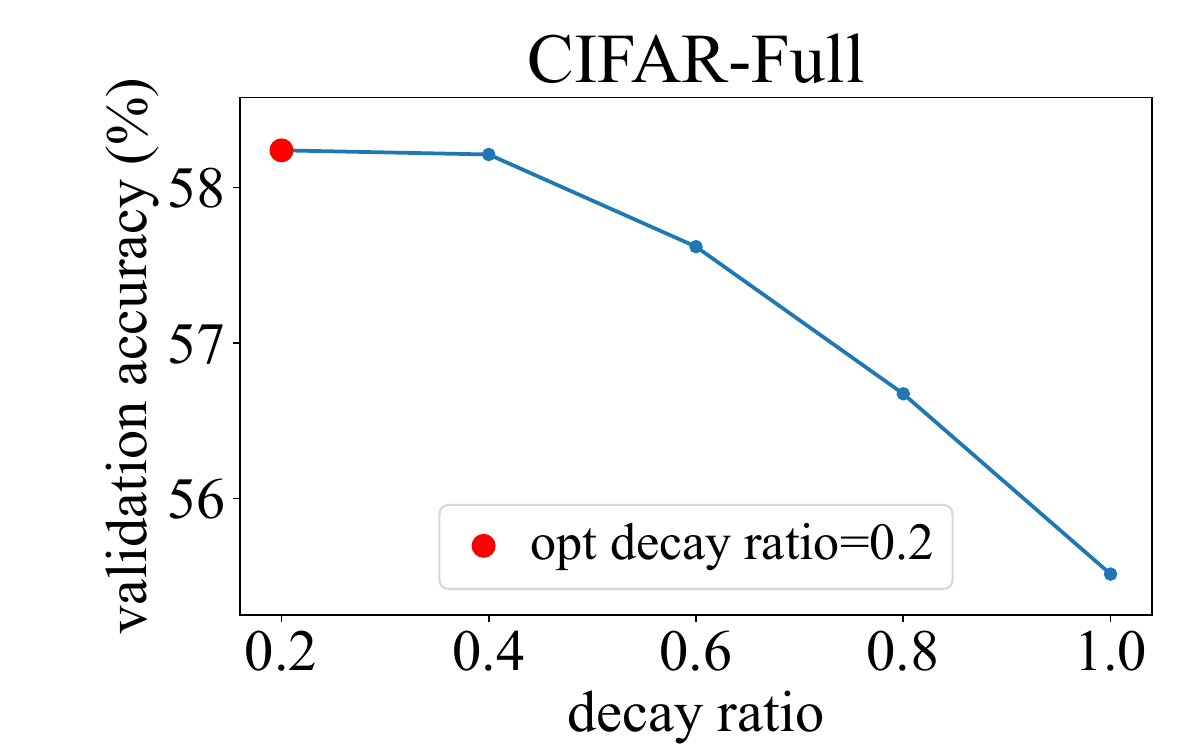}
    \end{minipage}
    \hspace*{2em}
    \begin{minipage}[b]{0.43\textwidth}
        \centering
        \includegraphics[width=\linewidth]{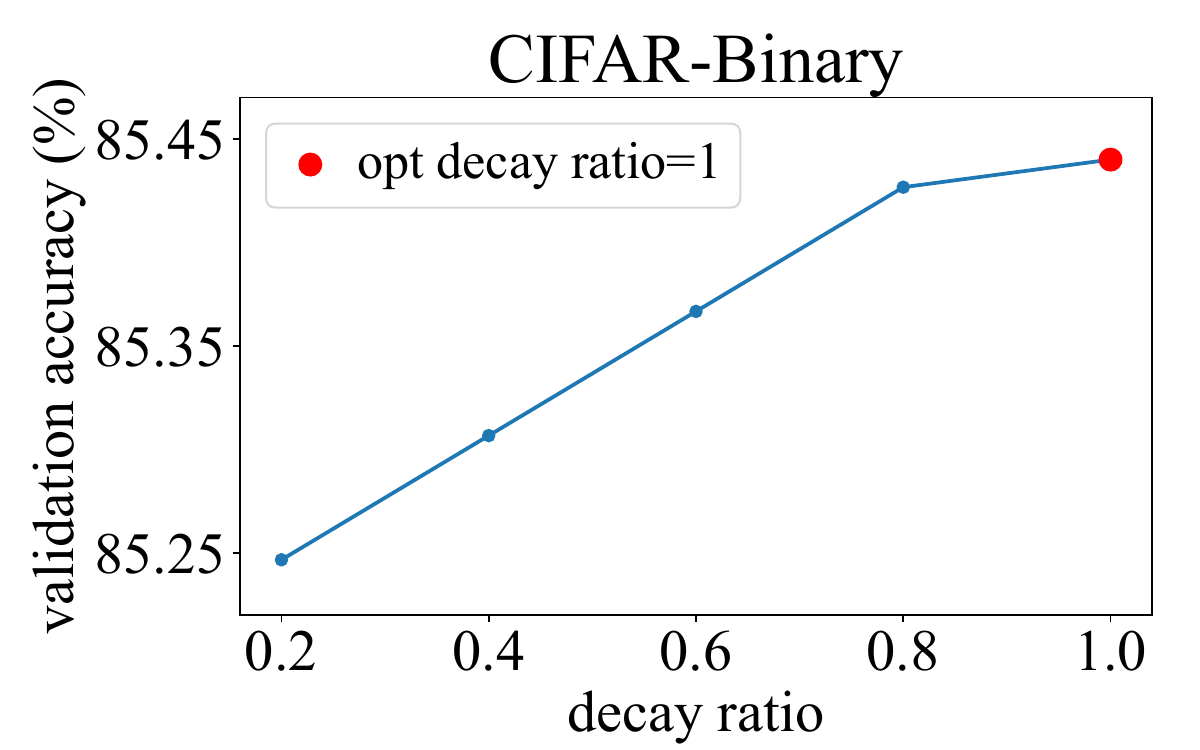}
    \end{minipage}
\caption{\small
\textbf{Decay ratio versus final validation accuracy with batch size $50$.} Both tasks use peak LR $0.2$ and $5$ training epochs. \textbf{(left)} CIFAR-Full is maximized at $r=0.2$; \textbf{(right)} CIFAR-Binary is maximized at $r=1$. Points show means over three runs, with red markers indicating the grid maximizers of the mean curves.}
\label{fig:appendix-cifar-b50}
\end{figure}

\end{document}